\newtheorem{pro}{Proposition}
\newtheorem{defi}{Definition}
\newtheorem{asu}{Assumption}
\newtheorem{theorem}{Theorem}
\newtheorem{corollary}{Corollary}
\newtheorem{lemma}{Lemma}
\newtheorem{remark}{Remark}
\newenvironment{proof}{{\noindent\it Proof:}\quad}{
$\hfill \square$ \newline \par}
\def\D{\mathcal{D}}
\def\x{\mathbf{x}}
\def\y{\mathbf{y}}
\def\z{\mathbf{z}}
\def\p{\mathbf{p}}
\def\q{\mathbf{q}}
\def\Z{\mathbf{Z}}
\def\Y{\mathbb{R}^n}
\def\S{\mathcal{S}}
\def\X{\mathcal{X}}
\def\BB{\mathcal{B}}
\def\D{\mathcal{D}}
\def\c{\mathbf{c}}
\def \Z {\mathbf{Z}}
\def \u {\mathbf{u}}
\def \v {\mathbf{v}}
\def \PP {P}
\def \eta {\sigma^{(2)}}
\begin{document}
%

\title{Value-Function-based Sequential Minimization for Bi-level Optimization}

%
%

\author{
	Risheng~Liu,~\IEEEmembership{Member,~IEEE,}
        Xuan~Liu,
        Shangzhi~Zeng,
        Jin~Zhang,
        and~Yixuan~Zhang 
\IEEEcompsocitemizethanks{
	\IEEEcompsocthanksitem R. Liu and X. Liu are with the DUT-RU International School of Information Science $\&$ Engineering, Dalian University of Technology, and the Key Laboratory for Ubiquitous Network and Service Software of Liaoning Province, Dalian, Liaoning, China. 
	R. Liu is also with the Pazhou Lab, Guangzhou, Guangdong, China.
	E-mail: rsliu@dlut.edu.cn, liuxuan\_16@126.com. 

\IEEEcompsocthanksitem S. Zeng is with 
the Department of Mathematics and Statistics, University of Victoria, Victoria, B.C., Canada.
E-mail: zengshangzhi@uvic.ca.

\IEEEcompsocthanksitem J. Zhang is with the Department of Mathematics, SUSTech International Center for Mathematics, Southern University of Science and Technology, 
National Center for Applied Mathematics Shenzhen, and Peng Cheng
Laboratory,
Shenzhen, Guangdong, China. (Corresponding author, E-mail:
zhangj9@sustech.edu.cn.)

\IEEEcompsocthanksitem Y. Zhang is with the Department of Applied Mathematics, the Hong Kong
Polytechnic University, Hong Kong, China. E-mail: \\
yi-xuan.zhang@connect.polyu.hk.


}

\thanks{Manuscript received April 19, 2005; revised August 26, 2015.}
}

%
%

\markboth{Journal of \LaTeX\ Class Files,~Vol.~14, No.~8, August~2015}%
{Shell \MakeLowercase{\textit{et al.}}: Bare Advanced Demo of IEEEtran.cls for IEEE Computer Society Journals}
%



\IEEEtitleabstractindextext{%
\begin{abstract}
	Gradient-based Bi-Level Optimization (BLO) methods have been widely applied to handle modern learning tasks. However, most existing strategies are theoretically designed based on restrictive assumptions (e.g., convexity of the lower-level sub-problem), and computationally not applicable for high-dimensional tasks. Moreover, there are almost no gradient-based methods able to solve BLO in those challenging scenarios, such as BLO with functional constraints and pessimistic BLO. In this work, by reformulating BLO into approximated single-level problems, we provide a new algorithm, named Bi-level Value-Function-based Sequential Minimization (BVFSM), to address the above issues. Specifically, BVFSM constructs a series of value-function-based approximations, and thus avoids repeated calculations of recurrent gradient and Hessian inverse required by existing approaches, time-consuming especially for high-dimensional tasks. We also extend BVFSM to address BLO with additional functional constraints. More importantly, BVFSM can be used for the challenging pessimistic BLO, which has never been properly solved before. In theory, we prove the \textcolor{black}{asymptotic} convergence of BVFSM on these types of BLO, in which the restrictive lower-level convexity assumption is  discarded. To our best knowledge, this is the first gradient-based algorithm that can solve different kinds of BLO (e.g., optimistic, pessimistic, and with constraints) with solid convergence guarantees. Extensive experiments verify the theoretical investigations and demonstrate our superiority on various real-world applications.

\end{abstract}

\begin{IEEEkeywords}
Bi-level optimization, gradient-based method, value-function, sequential minimization, hyper-parameter optimization.
\end{IEEEkeywords}}

\maketitle

\IEEEdisplaynontitleabstractindextext

%
\IEEEpeerreviewmaketitle


\ifCLASSOPTIONcompsoc
\IEEEraisesectionheading{\section{Introduction}\label{sec:introduction}}
\else
\section{Introduction}
\label{sec:introduction}
\fi



\IEEEPARstart{C}{urrently}, 
{{a number of}} 	{important machine} {learning and} {deep learning tasks} can be captured by hierarchical models, 
such as hyper-parameter optimization \cite{liu2021value,franceschi2017forward,okuno2021lp,mackay2018self},
neural architecture search \cite{liu2018darts,liang2019darts+,chen2019progressive}, 
meta learning~\cite{franceschi2018bilevel,rajeswaran2019meta,zugner2018adversarial},
Generative Adversarial Networks (GAN) \cite{metz2016unrolled,pfau2016connecting}, 
reinforcement learning \cite{yang2019provably},
image processing \cite{liu2019convergence,ijcai2020-101,liu2020bilevel,liu2020investigating},
and so on.
In general, these hierarchical models can be formulated as the following
Bi-Level Optimization (BLO) problem \cite{dempe2020optimality,dempe2018bilevel,liu2021investigating}:
\begin{equation}
		``\min\limits_{\x \in \X} "  \  F(\x,\y),  \
		\mathrm{ \ s.t.\  } \y \in {\rm \S}(\x) 
		:=\mathop{\arg\min}_{\y} f(\x,\y),
	\label{eq:1}
\end{equation}
where $\x \in \X$ is the Upper-Level (UL) variable,
$\y \in \Y$ is the Lower-Level (LL) variable,
the UL objective $F(\x,\y):\X \times\mathbb{R}^n\rightarrow\mathbb{R}$
and the LL objective $f(\x,\y):\mathbb{R}^m\times\mathbb{R}^n\rightarrow\mathbb{R}$, 
are continuously differentiable and jointly continuous functions, 
and the UL constraint $\X \subset \mathbb{R}^m$ is a compact set. 
Nevertheless, the model in Eq.~\eqref{eq:1} cannot be solved directly.
Some existing works only consider the case that the LL solution set $\S(\x)$ is a singleton.
However, since this may not be satisfied and~$\y \in {\rm \S}(\x)$ may not be unique,
Eq.~\eqref{eq:1} is not a rigorous BLO model in mathematics, 
and thus we use the quotation marks around ``min" to denote the slightly imprecise definition of the UL objective \cite{dempe2020optimality,alves2019new}. 

Strictly, people usually focus on an extreme situation
of the BLO model,
i.e., the optimistic BLO \cite{dempe2020optimality}:
\begin{equation}
		\min\limits_{\x \in \X} 
		\min\limits_{\y \in \Y} \ F(\x,\y), \
		\mathrm{ \ s.t.\  } \y \in {\rm \S}(\x).
		\label{eq:OBO}
\end{equation}
It can be found from the above expression that in optimistic BLO, $\x$ and $\y$ are in a cooperative relationship, aiming to minimize $F(\x,\y)$ at the same time.
Therefore, it can be applied to a variety of learning and vision tasks, 
such as hyper-parameter optimization, meta learning, and so on. 
Sometimes we also need to study BLO problems with inequality constraints on the UL or LL for capturing constraints in real tasks.
Another situation one can consider is the pessimistic BLO, which changes the $\min_{\y \in \Y}$ in Eq.~\eqref{eq:OBO} 
into $\max_{\y \in \Y}$~\cite{dempe2020optimality}.
In the pessimistic case, $\x$ and $\y$ are in an adversarial relationship,
and hence solving pessimistic BLO can be applied to adversarial learning and GAN. 

Actually, BLO is challenging to solve,
because in the hierarchical structure, we need to solve $\S(\x)$ governed by the fixed $\x$,
and select an appropriate $\y$ from $\S(\x)$ to optimize the UL $F(\x,\y)$,
making $\x$ and $\y$ intricately dependent of each other, 
especially when $\S(\x)$ is not a singleton \cite{jeroslow1985polynomial}.
In classical optimization, KKT condition is utilized to characterize the problem,
but this method is not applicable to machine learning tasks of large scale due to the use of too many multipliers \cite{1982An,luo1996mathematical}.
In the machine learning community, a class of mainstream and popular methods are gradient-based methods, divided into Explicit Gradient-Based Methods (EGBMs) \cite{franceschi2017forward,franceschi2018bilevel,maclaurin2015gradient,shaban2019truncated,liu2018darts} 
and Implicit Gradient-Based Methods (IGBMs) \cite{pedregosa2016hyperparameter,rajeswaran2019meta,lorraine2020optimizing},
according to divergent ideas of calculating the gradient needed for implementing gradient descent.
EGBMs implement this process via unrolled differentiation,
and IGBMs use the implicit function theorem to obtain the gradient.
Both of them usually 
deal with the problem where the LL solution set $\S(\x)$ is a singleton,
which is a quite restrictive condition in real application tasks. 
In dealing with this, Liu et al.~\cite{liu2020generic,liu2022general} proposed Bi-level Descent Aggregation (BDA) as a new EGBM, which removes this assumption and solves the model from the perspective of optimistic BLO.

Nevertheless, there still exists a bottleneck hard to break through, that the LL problems in real learning tasks are usually too complex for EGBMs and IGBMs.
In theory, all of the EGBMs and IGBMs require 
the convexity of the LL problem, or the Lower-Level Convexity, denoted as LLC for short,
which is a 
\textcolor{black}{strong} condition and not satisfied in many complicated real-world tasks.
For example, 
since the layer of chosen network is usually greater than one,
LLC is not satisfied, so the convergence of these methods cannot be guaranteed.
In computation, additionally,
EGBMs using unrolled differentiation request large time and space complexity,
while IGBMs need to approximate the inverse of a matrix, also with high computational cost,
especially when the LL variable $\y$ is of large scale, which means the dimension of $\y$ is large, generating matrices and vectors of high dimension during the calculating procedure.
Furthermore, it has been rarely discussed how to handle machine learning tasks by solving an optimization problem with functional constraints on the UL and LL, or by solving a pessimistic BLO. 
However, these problems are worth discussing, because pessimistic BLO can be used to capture min-max bi-level structures, which is suitable for GAN and so on,
and optimization problems with constraints can be used to represent learning tasks more accurately.
Unfortunately, existing methods including EGBMs and IGBMs, are not able to handle these problems.

To address the above limitations of existing methods,
in this work, we propose a novel framework, named 
Bi-level Value-Function-based Sequential Minimization (BVFSM)
\footnote{A preliminary version of this work has been published in~\cite{liu2021value}.}.
To be specific, we start with reformulating BLO into a simple bi-level optimization problem by the value-function~\cite{outrata1990on,ye1995optimality} of UL objective.
After that, we further transform it into a single-level optimization problem with an inequality constraint through the value-function of LL objective.
Then, by using the smoothing technique via regularization 
and adding the constraint into the objective by an auxiliary function of penalty or barrier,
eventually the original problem can be transformed into a sequence of unconstrained differentiable single-level problems,
which can be solved by gradient descent.
Thanks to the re-characterization via the value-function of LL problem, our computational cost is the least to implement the algorithm, and simultaneously, 
BVFSM can be applied under more relaxed conditions.

Specifically, BVFSM avoids solving an unrolled dynamic system by recurrent gradient or approximating the inverse of Hessian during each iteration like existing methods.
Instead, we only need to calculate the first-order gradient in each iteration, reduces the computational complexity relative to the LL problem size by an order of magnitude compared to existing gradient-based BLO methods, and thus require less time and space complexity than EGBMs and IGBMs,
especially for complex high-dimensional BLO. 
\textcolor{black}{Besides, BVFSM enables to maintain the level of complexity when applying BLO to networks, thereby making it possible to use BLO in existing networks and expanding its range of applications significantly.
}
We illustrate the efficiency of BVFSM over existing methods through complexity analysis in theory and various experimental results in reality.
In addition, we \textcolor{black}{consider the asymptotic convergence different from some previous gradient-based methods inspired from the perspective of sequential minimization, and} prove that the solutions to the sequence of approximate sub-problems
converge to the true solution of the original BLO without the restrictive LLC assumption as before.
Also, BVFSM can be extended to more complicated and challenging scenarios, 
namely, BLO with functional constraints and pessimistic BLO problems.
We regard pessimistic BLO as a new viewpoint to deal with learning tasks, which has not been solved by gradient-based methods before to our best knowledge.
Specially, we use the experiment of GAN as an example to illustrate the application of our method for solving pessimistic BLO.
We summarize our contributions as follows.

\begin{itemize}
     \item By reformulating the original BLO as an approximated single-level problem based on the value-function, 
     \textcolor{black}{BVFSM breaks the traditional mindset in gradient-based methods, and establishes a competently new sequential minimization algorithmic framework,} 
     which not only can be used to address optimistic BLO, 
     but also has the ability to handle BLO in other more challenging scenarios (i.e., with functional constraints and pessimistic),
     which have seldom been discussed.
     

     \item \textcolor{black}{BVFSM significantly reduces the computational complexity by an order of magnitude compared to existing gradient-based BLO methods with the help of value-function-based reformulation which breaks the traditional mindset. 
     Also, BVFSM avoids the repeated calculation of recurrent gradients and Hessian inverse, which are the core bottleneck for solving high-dimensional BLO problems in existing approaches.
     The superiority allows BVFSM to be applied to large-scale networks and frontier tasks effectively. 
     }
     
     \item We rigorously analyze the \textcolor{black}{asymptotic} convergence behaviors of BVFSM on all types of BLO mentioned above.
     Our theoretical investigations successfully remove the restrictive LLC condition, 
     required in most existing works but actually too ambitious to satisfy in real-world applications. 
     
     \item In terms of experiments, we conduct extensive experiments to verify our theoretical findings and demonstrate the superiority of BVFSM on various learning tasks. 
     Especially, by formulating and solving GAN by BVFSM, we also show the application potential of our solution strategy on pessimistic BLO for  complex learning problems.
\end{itemize}

\section{Related Works}\label{sec:related works}
As aforementioned, BLO 
is challenging to solve due to its nested structures between UL and LL.
Early methods can only handle models with not too many hyper-parameters.
For example, to find appropriate parameters, the standard method is to use random search \cite{bergstra2012random} 
through randomly sampling, or to use Bayesian optimization \cite{hutter2011sequential}.  
However, in real learning tasks, the dimension of hyper-parameters is very large,
which early methods cannot deal with,
so gradient-based methods are proposed.
Here we first put forward a unified form of gradient-based methods,
and then discuss the existing methods for further comparing them with our proposed method.

Existing gradient-based methods mainly focus on the optimistic BLO only, 
so we use the optimistic scenario to illustrate our algorithmic framework clearly,
while in Section~\ref{sec:pessimistic},
we will discuss how to use our method to solve pessimistic BLO. 
For optimistic BLO, it can be found from Eq.~\eqref{eq:OBO} 
that the UL variable $\x$ and LL variable $\y$ will effect each other in a nested relationship.
To address this issue, one can transform it into the following form,
where $\varphi(\x)$ is the value-function of the sub-problem,
\begin{equation}
        \min\limits_{\x \in \X} \ \varphi(\x), \quad 
    		\varphi(\x):= \mathop{\min}_{\y} \Big\{  F(\x,\y) 
    		: \y \in {\rm \S}(\x)
        \Big\}.
	\label{eq:minvarphi}
\end{equation}
For a fixed $\x$, this sub-problem for solving $\varphi(\x)$ is an inner simple BLO task,
as it is only about one variable $\y$, with $\x$ as a parameter. 
Then, we hope to \textcolor{black}{minimize}
$\varphi(\x)$ through gradient descent.
However, as a value-function,~$\varphi(\x)$ is non-smooth, non-convex, even with jumps,
and thus ill-conditioned, 
so we use a smooth function to approximate~$\varphi(\x)$ and approach~$\frac{\partial\varphi(\x)}{\partial \x}$.
Existing methods can be classified into two categories according to divergent ways to calculate~$\frac{\partial\varphi(\x)}{\partial \x}$~\cite{liu2021investigating}, i.e., Explicit Gradient-Based Methods (EGBMs), which derives the gradient by Automatic Differentiation (AD), and Implicit Gradient-Based Methods (IGBMs), which apply implicit function theorem to deal with the optimality conditions of LL problems.

Note that both EGBMs and IGBMs require $\y \in \S(\x)$ to be unique (except BDA),
denoted as $\y^*(\x)$,
while for BDA, by integrating information from both the UL and LL sub-problem,
$\y_T(\x)$ is obtained by iterations to approach the appropriate $\y^*(\x)$.
Hence, $\varphi(\x)  = F(\x,\y^*(\x))$,
and therefore by the chain rule,
the approximated $\frac{\partial\varphi(\x)}{\partial \x}$ is split into direct and indirect gradients of $\x$,
\begin{equation}
	\label{eq:requ}
	\frac{\partial \varphi(\x)}{\partial \x} 
	= \frac{\partial F(\x,\y^*(\x))}{\partial \x} + G(\x),
\end{equation}
where $\frac{\partial F(\x,\y)}{\partial \x}$ is the direct gradient
and $G(\x)$ is the indirect gradient, $G(\x) = \left( \frac{\partial \y^*(\x)}{\partial \x} \right)^\top \frac{\partial F(\x,\y^*)}{\partial \y^*}$.
Then we need to compute $G(\x)$, in other words, the value of $\frac{\partial \y^*(\x)}{\partial \x}$.

\textbf{Explicit Gradient-Based Methods (EGBMs)}.
	Maclaurin et al. \cite{maclaurin2015gradient} and Franceschi et al. \cite{franceschi2017forward,franceschi2018bilevel} first proposed {Reverse Hyper-Gradient (RHG)} and {Forward Hyper-Gradient (FHG)} respectively, to implement a dynamic system, under the LLC assumption.
	Given an initial point $\y_0$, denote the iteration process to approach $\y^*(\x)$ as 
	$
	\y_{t+1}(\x)=\Phi_{t} (\x,\y_t(\x)),\ t=0,1,\cdots,T-1,
	$
	where 
	\textcolor{black}{$\Phi_t$ }
	is a smooth mapping performed to solve $\y_T(\x)$ 
	and $T$ is the number of iterations.
	In particular, for example, if the process is gradient descent, 
	$
	\Phi_{t} \left(\x,\y_t(\x) \right) 
	= \y_t(\x)- s_t \frac{\partial  f \left(\x,\y_{t}(\x) \right)}{\partial \y_t}, 
	$
	where $s_t>0$ is the corresponding step size. 
	Then $\varphi(\x)$ in Eq.~\eqref{eq:minvarphi} can be approximated by $\varphi(\x) 
	\approx \varphi_T(\x) =  F(\x,\y_T(\x)) $.
	As $T$ increases, $\varphi_T(\x)$ approaches $\varphi(\x)$ generally, and a sequence of unconstrained minimization problems is obtained.
	Thus, gradient-based methods can be regarded as a kind of sequential-minimization-type scheme \cite{fiacco1990nonlinear}.
	From the chain rule, we have
	$
		\frac{\partial \y_t(\x)}{\partial \x}
		=\left( \frac{\partial \Phi_{t-1} (\x,\y_{t-1})}{\partial \y_{t-1}} \right) ^\top
		\frac{\partial \y_{t-1}(\x)}{\partial \x}
		+\frac{\partial \Phi_{t-1} (\x,\y_{t-1})}{\partial \x},
	$
	and $\frac{\partial \y_T(\x)}{\partial \x}$ can be obtained from this unrolled procedure.
	However, FHG and RHG require calculating 
	the gradient of $\x$ composed of the first-order condition of LL problem by 
	AD during the entire trajectory,
	so the computational cost owing to the time and space complexity is very high.
	In dealing with this, Shaban et al. \cite{shaban2019truncated} proposed {Truncated Reverse Hyper-Gradient (TRHG)} to truncate the iteration, and thus TRHG only needs to store the last $I$ iterations, reducing the computational load.
	Nevertheless, it additionally requires $f$ to be strongly convex, 
	and the truncated path length is hard to determine.
	Another method Liu et al.~\cite{liu2018darts} tried is to use the difference of vectors to approximate the gradient,
	but the accuracy of using the difference is not promised and there is no theoretical guarantee for this method. 
	On the other hand, 
	from the viewpoint of theory, 
	for more relaxed conditions, Liu et al.~\cite{liu2020generic,liu2022general} proposed {Bi-level Descent Aggregation (BDA)} to remove the assumption that the LL solution set is a singleton, which is a simplification of real-world problems.
	Specifically, BDA uses information from both the UL and the LL problem as an aggregation during iterations. 
	However, the obstacle of LLC and computational cost still exists.

\textbf{Implicit Gradient-Based Methods (IGBMs)}.
IGBMs or implicit differentiation \cite{pedregosa2016hyperparameter,rajeswaran2019meta,lorraine2020optimizing}, 
can be applied to obtain $\frac{\partial \y^*(\x)}{\partial \x}$
under the LLC assumption.
If $ \frac{\partial^2 f(\x,\y^*(\x))}{\partial \y \partial \y} $ 
is assumed to be invertible in advance as an additional condition,
by using the implicit function theorem 
on the optimality condition $\frac{\partial f(\x,\y^*(\x))}{\partial \y} = 0$,
the LL problem is replaced with an implicit equation,
and then
$
	\frac{\partial \y^*(\x)}{\partial \x} 
	= -\left( \frac{\partial^2 f(\x,\y^*(\x))}{\partial \y \partial \y} \right) ^{-1}
	\frac{\partial^2 f(\x,\y^*(\x))}{\partial \y \partial \x }.
$
Unlike EGBMs relying on the first-order condition during the entire trajectory,
IGBMs only depends on the first-order condition once, 
which decouples the computational burden from the solution trajectory of the LL problem, 
but this leads to repeated computation of the inverse of Hessian matrix, which is still a heavy burden.
In dealing with this, to avoid direct inverse calculation,
the {Conjugate Gradient (CG) method}~\cite{pedregosa2016hyperparameter,rajeswaran2019meta}
changes it into solving a linear system,
and Neumann method~\cite{lorraine2020optimizing} uses the Neumann series to calculate the Hessian inverse.
However, after using these methods, the computational requirements are reduced but still large, because the burden of computing the inverse of matrix changes into computing Hessian-vector products. 
Additionally, 
the accuracy of solving a linear system highly depends on its condition number~\cite{grazzi2020iteration}, and the ill condition may result in numerical instabilities.
A large quadratic term is added on the LL objective to eliminate the ill-condition in~\cite{rajeswaran2019meta}, but this approach may change the solution set greatly. 

As discussed above, 
EGBMs and IGBMs need repeated calculations of recurrent gradient or Hessian inverse,
leading to 
high time and space complexity in numerical computation, 
and require 
the LLC assumption in theory.
Actually, when the dimension of $\y$ is very large, which happens in practical problems usually, the computational burden of massively computing the products of matrices and vectors might be too heavy to carry.
In addition, the LLC assumption is also not suitable for most complex real-world tasks.

\section{The Proposed Algorithm}
In this section, we illustrate our algorithmic framework, named 
Bi-level Value-Function-based Sequential Minimization (BVFSM).
Our method also follows the idea of constructing a sequence of unconstrained minimization problems to approximate the original bi-level problem,
but different from existing methods, 
BVFSM uses the 
re-characterization via the value-function of the LL problem.
Thanks to this strategy, our algorithm is able to handle problems with complicated non-convex high-dimensional LL, which existing methods are not able to deal with.



\subsection{Value-Function-based Single-level Reformulation}

BVFSM designs a sequence of single-level unconstrained minimization problems to approximate the original problem through a value-function-based reformulation.
We first present this procedure under the optimistic BLO case.

Recall the original optimistic BLO in Eq.~\eqref{eq:OBO} has been transformed into Eq.~\eqref{eq:minvarphi}, 
and we hope to compute $G(\x)$ in Eq.~\eqref{eq:requ}.
Note that the difficulty of computing $\frac{\partial\varphi(\x)}{\partial \x}$ comes from the ill-condition of $\varphi(\x)$,
owing to the nested structure of the bi-level sub-problem for solving $\varphi(\x)$. 
Hence, we introduce the value-function of the LL problem
$f^*(\x) := \min_{\y}   f(\x,\y) $ to transform it into a single-level problem.
Then the problem can be reformulated as
\begin{equation}
		\varphi(\x)= \mathop{\min}_{\y} \Big\{  F(\x,\y) 
		: f(\x,\y) \leq f^*(\x) \Big\} .
	\label{eq:f*}
\end{equation}
However, the inequality constraint $f(\x,\y) \leq f^*(\x)$ is still ill-posed,
because it does not satisfy any standard regularity condition
and $f^*(\x)$ is non-smooth.
In dealing with such difficulty, 
we approximate $f^*(\x)$ with regularization:
\begin{equation}
	f_{\mu}^*(\x)=\min\limits_{\y} 
	\left\{ 
	f(\x,\y)  + \frac{\mu}{2} \| \y \|^2
	\right\},
	\label{eq:fmu}
\end{equation}
where $\frac{\mu}{2} \| \y \|^2$ ($\mu > 0$) is the regularization term.

We further add an auxiliary function of the inequality constraints to the objective,
and obtain
\begin{equation} 
	\label{eq:phimu}
	\begin{aligned}	
		\varphi_{\mu,\theta,\sigma}(\x) \! = \! \min\limits_{\y} &
		\left\{ 
		F(\x,\y)  
		\! + \! \PP_{\sigma} \! \left( f(\x,\y) - f_{\mu}^*(\x) \right)
		\! + \! \frac{\theta}{2} \| \y \|^2
		\right\},
	\end{aligned}
\end{equation}
where $(\mu,\theta,\sigma) > 0$,
$\frac{\theta}{2} \| \y \|^2$ is the regularization term,
and $ \PP_{\sigma} : \mathbb{R}\rightarrow\mathbb{\overline{R}}$ 
(where $\mathbb{\overline{R}} = \mathbb{R} \cup \{\infty\}$) 
is the selected auxiliary function for the sequential unconstrained minimization method with parameter~$\sigma$, which will be 
\textcolor{black}{defined in Eq.~\eqref{eq:\PP} and}
 discussed in detail next. 
This reformulation changes the constrained problem Eq.~\eqref{eq:f*} into a sequence of unconstrained problems Eq.~\eqref{eq:phimu} under different~parameters.
\textcolor{black}{
The regularization terms in Eq.~\ref{eq:fmu} and Eq.~\ref{eq:phimu} are to guarantee the uniqueness of solution to these two problems, which is essential for the differentiability of~$\varphi_{\mu,\theta,\sigma}(\x)$, and will be discussed in Remark~\ref{remark 1} of Section~\ref{sec:algorithmic framework}.
Experiments in Section~\ref{sec:toy optimistic} 
also demonstrate that introducing the regularization terms for differentiability to avoid possible jumps matters to improve the computational~stability.
%
}


The sequential unconstrained minimization  method is mainly used for solving constrained nonlinear programming 
by changing the problem into a sequence of unconstrained minimization problems \cite{fiacco1990nonlinear,lasdon1972efficient,byrne2013alternating}. 
To be specific, we add to the objective a selected auxiliary function of the constraints with a sequence of parameters,
and obtain a series of unconstrained problems. 
The convergence of parameters makes the sequential unconstrained problems converge to the original constrained problem,
leading to the convergence of the solution. 
Based on the property of auxiliary functions, they are divided mainly into two types,
barrier functions and penalty functions~\cite{freund2004penalty,luenberger1984linear},
whose definitions are provided~here. 

\begin{defi}
	\label{defi:barrier}
	A continuous, differentiable, and non-decreasing function $ \rho : \mathbb{R}\rightarrow\mathbb{\overline{R}}$ is called a {standard barrier function} if $\rho(\omega ; \sigma)$ satisfies 
	$\rho(\omega ; \sigma) \geq 0$ and $\lim_{\sigma \rightarrow 0}\rho(\omega ; \sigma)=0 $, when $\omega < 0$; 
	and $\rho(\omega ; \sigma) \rightarrow \infty$ when $\omega \rightarrow 0$.
	It is called a {standard penalty function} if it satisfies 
	$\rho(\omega ; \sigma) = 0$ when $\omega \leq 0$; 
	and $\rho(\omega ; \sigma) > 0$ and $\lim_{\sigma \rightarrow 0}\rho(\omega ; \sigma)=\infty $ when $\omega > 0$.
	Here $\sigma > 0$ is the barrier or penalty parameter.
	In addition, if $\rho(\omega ; \sigma^{(1)})$ is a standard barrier function,
	then $\rho(\omega - \eta \; ; \sigma^{(1)}) \ $ 
	is called a {modified barrier function} $(\sigma^{(1)}, \eta > 0)$.
\end{defi}


For the simplicity of expression later, we denote the function $\PP_{\sigma}$ in Eq.~\eqref{eq:phimu} to be 
\begin{equation} \label{eq:\PP}
	\PP_{\sigma}(\omega) := 
	\left\{
	\begin{aligned}
		& \rho(\omega; \sigma), \text{if} \ \rho \ \text{ is a penalty} \\
		& \qquad\qquad\qquad \text{ or standard barrier function}, \\
		& \rho(\omega- \eta; \sigma^{(1)}), \text{if} \ \rho \ \text{ is a modified barrier function}.
	\end{aligned}
	\right.    
\end{equation}
Here for a modified barrier function, $\eta > 0$ is to guarantee that in Eq.~\eqref{eq:phimu},
$f(\x,\y) - f_{\mu}^*(\x) - \eta < 0$, and the barrier function is well-defined.


Classical examples of auxiliary functions  are the quadratic penalty function,
inverse barrier function
and log barrier function \cite{boukari1995survey,luenberger1984linear}. 
There are also some other popular examples, such as 
the polynomial penalty function \cite{freund2004penalty}
and truncated log barrier function \cite{auslender1999penalty}.
These examples of standard penalty and barrier functions are listed in Table~\ref{table penalty barrier}.
Note that we need the smoothness of $\varphi_{\mu,\theta,\sigma}(\x)$,
and will calculate the gradient of $\PP_{\sigma}$ afterwards, 
so we choose smooth auxiliary functions rather than non-smooth exact penalty functions.
\textcolor{black}{ 
	Note that when $\rho$ is a modified barrier function, $\sigma$ of $\PP_{\sigma}(\omega)$ in Eq.~\eqref{eq:\PP} has two components~$\sigma^{(1)}$ and~$\eta$, and the specific form of $\PP_{\sigma}(\omega)$ comes from substituting	
	$\omega - \eta$ and $\sigma^{(1)}$ for~$\omega$ and $\sigma$ in Table~\ref{table penalty barrier}.
}

\begin{table}[hbtp]
	\centering
	\caption{Some available standard penalty and barrier functions.} 
	\label{table penalty barrier}
	\vspace {-0.3cm}
	\begin{footnotesize}
		\begin{threeparttable}
\resizebox{\linewidth}{!}{
			\begin{tabular}{ |c|l| }
				\hline
				\multicolumn{2}{|c|}{}\\[-6pt]
				\multicolumn{2}{|c|}{Penalty functions} \\
				\multicolumn{2}{|c|}{}\\[-6pt]
				\hline
				&\\[-6pt]
				Quadratic 
				& $	\rho \! \left( \omega ; \sigma \right) = \frac{1}{2\sigma} \left( \omega^+ \right)^2$,
				where 
				$\omega^+ = \max \left\{ \omega,0 \right\}$ \\
				&\\[-6pt]
				\hline
				&\\[-6pt]
				\multirow{2}{*}{Polynomial}
				& $\rho \! \left( \omega ; \sigma \right) = \frac{1}{q \sigma} \left( \omega^+ \right)^q$,
				where $q$ is a positive integer \\ 
				& chosen such that $\rho \! \left( \omega ; \sigma \right)$ is differentiable\\

				\hline
				\multicolumn{2}{|c|}{}\\[-6pt]
				\multicolumn{2}{|c|}{Barrier functions} \\
				\multicolumn{2}{|c|}{}\\[-6pt]
				\hline
				&\\[-6pt]
				Inverse
				& $ 
				\rho \! \left( \omega ; \sigma \right) = 
				\left\{ 
				\begin{aligned}
					& - \frac{\sigma}{\omega}, & \omega<0 \\
					\\[-0.6cm]
					& \infty , & \omega \geq 0
				\end{aligned}
				\right.$ \\
				&\\[-6pt]
				
				
				\hline
				&\\[-6pt]
				\multirow{3}{*}{Truncated Log}
				& $  
				\rho \! \left( \omega ; \sigma \right) = 
				\left\{ 
				\begin{aligned}
					& - \sigma \left( \log \left( -\omega \right) + \beta_1 \right), & -\kappa \leq \omega < 0 \\
					\\[-0.5cm]
					& - \sigma \left( \beta_2 + \frac{\beta_3}{\omega^2} + \frac{\beta_4}{\omega} \right), & \omega < -\kappa \\ 
					\\[-0.6cm]
					& \infty , & \omega \geq 0
				\end{aligned}
				\right.
				$ \\
				& where $0 < \kappa \leq 1$, $\beta_1,\beta_2,\beta_3,\beta_4$ are chosen \\
				& such that $\rho \! \left( \omega ; \sigma \right) \geq 0$ and is twice differentiable\\
				\hline
				
			\end{tabular}
}
		\end{threeparttable}
	\end{footnotesize}
\end{table}

\subsection{Sequential Minimization Strategy}
\label{sec:algorithmic framework}

From the discussion above, we then hope to solve
\begin{equation}
	\label{eq:varphi approximation}
	\min\limits_{\x \in \X} \  \varphi_{\mu,\theta,\sigma}(\x),
\end{equation}
with $\varphi_{\mu,\theta,\sigma}(\x)$ in Eq.~\eqref{eq:phimu}.
First denote 
\begin{equation}
	\label{eq:z*}
	\z^*_{\mu}(\x) = \underset{\y}{\mathrm{argmin}}
	\left\{ 
	f(\x,\y) 
	+ \frac{\mu}{2} \| \y \|^2
	\right\},
\end{equation}
\begin{equation}\small
	\label{eq:y*}
	\begin{aligned}
		\y_{\mu,\theta,\sigma}^*(\x) 
		= \underset{\y}{\mathrm{argmin}}
		& \left\{ 
		F(\x,\y) 
		+  \PP_{\sigma} \! \left( f(\x,\y) - f_{\mu}^*(\x) \right)
		+ \frac{\theta}{2} \| \y \|^2
		\right\}.
	\end{aligned}
\end{equation}
The following proposition gives the smoothness of $\varphi_{\mu,\theta,\sigma} \left( \x \right)$ and the formula for computing $\frac{ \partial \varphi_{\mu,\theta,\sigma} \left( \x \right) }{\partial \x}$ or $G(\x)$, which serves as the ground for our algorithm.

\begin{pro}[Calculation of $G(\x)$]\label{prop1}
	Suppose $ F(\x, \y) $ and $f(\x, \y)$ are bounded below 
	and continuously differentiable. 
	Given~$\x \in \X$ and $\mu, \theta, \sigma > 0$, when
	$\z^*_{\mu}(\x)$ and $\y_{\mu,\theta,\sigma}^*(\x)$
	are unique, 
	then $\varphi_{\mu,\theta,\sigma} (\x)$ is differentiable and
	\begin{equation}
		\label{eq:def_G}
		\begin{aligned}
			G(\x) = \frac{ \partial \PP_{\sigma} \! \left( f \left(\x,\y \right) - f_{\mu}^*(\x) \right) }{ \partial \x } \big|_{ \y = \y_{\mu,\theta,\sigma}^*(\x)},		
		\end{aligned}
	\end{equation}
	where
	$
	\begin{aligned}
		f_{\mu}^*(\x) = 
		f \! \left(\x,\z^*_{\mu}(\x) \right) 
		+ \frac{\mu}{2} \| \z^*_{\mu}(\x) \|^2,
	\end{aligned}
	$
	and
	$
	\frac{ \partial f_{\mu}^*(\x) }{\partial \x} = 
	\frac{\partial f \! \left(\x,\y \right) }{\partial \x} 
	\big|_{\y=\z^*_{\mu}(\x)}.
	$
\end{pro}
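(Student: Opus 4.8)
The plan is to obtain both displayed formulas from Danskin's theorem (equivalently, the envelope theorem for marginal functions), invoked twice: once for the inner marginal function $f_\mu^*$ and once for $\varphi_{\mu,\theta,\sigma}$ itself. The real content is not the differentiation rule but the verification of Danskin's hypotheses in the present setting, where the minimization variable $\y$ is unconstrained; this will be supplied by the coercivity coming from the quadratic regularizers together with the lower-boundedness of $f$ and $F$.

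First I would treat $f_\mu^*(\x)=\min_{\y}g_\mu(\x,\y)$ with $g_\mu(\x,\y):=f(\x,\y)+\tfrac{\mu}{2}\|\y\|^2$. Since $f$ is $C^1$ and bounded below, $g_\mu(\x,\cdot)$ is $C^1$ and coercive, with coercivity locally uniform in $\x$; hence the minimum is attained, by the standing hypothesis at the unique point $\z_\mu^*(\x)$, and a routine argument (joint continuity, coercivity, and uniqueness of the minimizer) makes $\x\mapsto\z_\mu^*(\x)$ continuous. Danskin's theorem then gives that $f_\mu^*$ is differentiable with $\frac{\partial f_\mu^*(\x)}{\partial\x}=\frac{\partial g_\mu(\x,\y)}{\partial\x}\big|_{\y=\z_\mu^*(\x)}=\frac{\partial f(\x,\y)}{\partial\x}\big|_{\y=\z_\mu^*(\x)}$, since $\tfrac{\mu}{2}\|\y\|^2$ does not involve $\x$; the value formula $f_\mu^*(\x)=f(\x,\z_\mu^*(\x))+\tfrac{\mu}{2}\|\z_\mu^*(\x)\|^2$ is immediate from the definition of $\z_\mu^*(\x)$.

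Next I would repeat this for $\varphi_{\mu,\theta,\sigma}(\x)=\min_{\y}h(\x,\y)$ with $h(\x,\y):=F(\x,\y)+\PP_\sigma\!\big(f(\x,\y)-f_\mu^*(\x)\big)+\tfrac{\theta}{2}\|\y\|^2$. By Definition~\ref{defi:barrier}, $\rho(\cdot;\sigma)$, hence $\PP_\sigma$, is continuous, differentiable and non-decreasing on its domain; in the modified-barrier case the shift $\eta$ is chosen so that $f(\x,\y)-f_\mu^*(\x)-\eta<0$ at the relevant points, while for penalty and standard barriers $\PP_\sigma$ is either everywhere finite or blows up at its boundary (keeping the minimizer strictly inside the domain), so in every case the argument of $\rho$ stays where $\rho$ is finite and $C^1$. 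Combined with the $C^1$ regularity of $f$ and of $f_\mu^*$ just established, the composition $\PP_\sigma(f(\x,\y)-f_\mu^*(\x))$ is $C^1$ in $(\x,\y)$; with $F$ bounded below, $\PP_\sigma\ge 0$, and the coercive term $\tfrac{\theta}{2}\|\y\|^2$, the map $h(\x,\cdot)$ is $C^1$, jointly continuous and coercive, so the minimum is attained at the unique point $\y_{\mu,\theta,\sigma}^*(\x)$, again continuous in $\x$. Danskin's theorem then gives that $\varphi_{\mu,\theta,\sigma}$ is differentiable with $\frac{\partial\varphi_{\mu,\theta,\sigma}(\x)}{\partial\x}=\frac{\partial h(\x,\y)}{\partial\x}\big|_{\y=\y_{\mu,\theta,\sigma}^*(\x)}$; as $h$ splits into $F$, $\PP_\sigma(\cdot)$, and the $\x$-independent term $\tfrac{\theta}{2}\|\y\|^2$, this equals $\frac{\partial F(\x,\y_{\mu,\theta,\sigma}^*(\x))}{\partial\x}+G(\x)$ with $G(\x)=\frac{\partial\PP_\sigma(f(\x,\y)-f_\mu^*(\x))}{\partial\x}\big|_{\y=\y_{\mu,\theta,\sigma}^*(\x)}$, the last term expanding by the chain rule using $\frac{\partial f_\mu^*(\x)}{\partial\x}$ from the previous step --- precisely the asserted expression, in line with the direct/indirect split of Eq.~\eqref{eq:requ}.

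The step I expect to be the main obstacle is legitimizing the envelope formula with $\y$ ranging over all of $\Y$, since a textbook Danskin statement presumes a compact feasible set. The remedy is to show that on a neighbourhood of any fixed $\x$ the minimizers of $g_\mu(\x,\cdot)$ and of $h(\x,\cdot)$ are confined to one fixed compact set --- a consequence of the locally uniform coercivity furnished by $\tfrac{\mu}{2}\|\y\|^2$, $\tfrac{\theta}{2}\|\y\|^2$, $\PP_\sigma\ge 0$, and the lower bounds on $f$ and $F$ --- which, with uniqueness, delivers both existence and continuity of the argmin maps and reduces everything to the classical compact case. A secondary point, flagged above, is that $\PP_\sigma$ must be differentiated only where it is finite; the shift $\eta$ for modified barriers, and the singularity $\rho\to\infty$ at $0$ for standard barriers, are exactly what make this automatic.
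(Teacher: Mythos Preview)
Your proposal is correct and follows essentially the same route as the paper: the paper also applies an envelope/Danskin-type result twice (citing \cite[Theorem 4.13, Remark 4.14]{bonnans2013perturbation}), after first checking that the quadratic regularizer together with lower-boundedness of $f$ (resp.\ $F$ and $\PP_\sigma\ge 0$) makes the objective level-bounded in $\y$ locally uniformly in $\x$, which is exactly your ``locally uniform coercivity'' reduction to the compact case. Your additional remarks on keeping the argument of $\PP_\sigma$ inside its domain of differentiability are not spelled out in the paper's proof but are consistent with its setup.
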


\begin{proof}
	We first prove that for any $\bar{\x} \in \X$, 
	$f(\x,\y)  + \frac{\mu}{2}\|\y\|^2$ is level-bounded in $\y$ locally uniformly in $\bar{\x} \in \X$ (see~\cite[Definition 3]{liu2020generic}).
	That means for any $c \in \mathbb{R}$, there exist $\delta > 0$ and a bounded set \textcolor{black}{$\BB \subset \mathbb{R}^n$},
	such that
	$
		\left\{\y \in \mathbb{R}^n : f(\x,\y) + \frac{\mu}{2}\|\y\|^2 \leq c \right\} \subset \BB,  
	$
	for all $\x \in \BB_{\delta}(\bar{\x})\cap \X$,
	\textcolor{black}{where $\BB_{\delta}(\bar{\x})$ denotes the open ball with center at $\bar{\x}$ and radius $\delta$, i.e., $\BB_{\delta}(\bar{\x}) = \{ \hat{\x}\in\X : \|\bar{\x} - \hat{\x} \| < \delta \}$}. 
	Assume by contradiction that the above does not hold.
	Then there exist sequences $\{\x_k\}$ and $\{ \y_k \}$ satisfying $\x_k \rightarrow \bar{\x}$ and $\| \y_k \| \rightarrow + \infty$, such that
	$
		f(\x_k,\y_k) + \frac{\mu}{2} \| \y_k \|^2 \leq c.
	$
	As $f(\x,\y)$ is bounded below, 
	then $\|\y_k\| \rightarrow \infty$ implies $f(\x_k,\y_k) + \frac{\mu}{2} \| \y_k \|^2 \rightarrow \infty$, which contradicts with $	f(\x_k,\y_k) + \frac{\mu}{2} \| \y_k \|^2 \leq c$ and $c \in \mathbb{R}$.

	Hence, from the arbitrariness of $\bar{\x}$, we have $f(\x,\y) \! + \! \frac{\mu}{2}\|\y\|^2$ is level-bounded in $\y$ locally uniformly in $\x \!\!\! \in \!\!\! \X$,
	and then the inf-compactness condition in~\cite[Theorem 4.13]{bonnans2013perturbation} holds for $f(\x,\y)  + \frac{\mu}{2}\|\y\|^2$. 
	Since	
	$ {\mathrm{argmin}}_{\y \in \mathbb{R}^n} \left\{ f(\x,\y)  + \frac{\mu}{2}\|\y\|^2 \right\}$ is a singleton, it follows from~\cite[Theorem 4.13, Remark 4.14]{bonnans2013perturbation} that
	\[
	\begin{aligned}
		\textcolor{black}{\frac{ \partial f_{\mu}^*(\x) }{\partial \x}  } 
		& = \frac{\partial \left( f(\x,\y) + \frac{\mu}{2}\|\y\|^2    \right)}{\partial \x}  \big|_{\y = \z_{\mu}^*(\x)} 
		= \frac{\partial  f(\x,\y)  }{\partial \x}  \big|_{\y = \z_{\mu}^*(\x)}.
	\end{aligned}
	\]

	Next, 
	from definitions of penalty and barrier functions (Definition~\ref{defi:barrier}),
	we have $\rho(\omega ; \sigma) \geq 0$ for any $\omega$, so $\PP_{\sigma}(\omega) \geq 0$ holds.
	Then, 
	since $F(\x, \y)$ is assumed to be 
	bounded below, 
	similar to $f(\x,\y)  + \frac{\mu}{2}\|\y\|^2$,
	the inf-compactness condition in~\cite[Theorem 4.13]{bonnans2013perturbation} also holds for 
	$
	F(\x,\y) + \PP_{\sigma} \! \left( f(\x,\y) - f_{\mu}^*(\x) \right)	+ \frac{\theta}{2} \| \y \|^2
	$. 
	Combining with the fact that 
	\[
	{\mathrm{argmin}}_{\y \in \mathbb{R}^n} \left\{
	F(\x,\y) 
	+  \PP_{\sigma} \! \left( f(\x,\y) - f_{\mu}^*(\x) \right)
	+ \frac{\theta}{2} \| \y \|^2
	\right\}
	\]
	is a singleton,~\cite[Theorem 4.13, Remark 4.14]{bonnans2013perturbation} shows that
	\[
	\begin{aligned}
		& \frac{ \partial \varphi_{\mu,\theta,\sigma} \left( \x \right) }{\partial \x} \\
		& = \frac{ \partial \left( F(\x,\y)  +  \PP_{\sigma} \! \left( f(\x,\y) - f_{\mu}^*(\x)  \right) + \frac{\theta}{2} \| \y \|^2  \right) }{\partial \x}
		\big|_{\y = \y_{\mu,\theta,\sigma}^*(\x)} \\
		& = \left( \frac{ \partial F(\x,\y)  }{\partial \x}
		+ \frac{ \partial  \PP_{\sigma} \! \left( f(\x,\y) - f_{\mu}^*(\x) \right)   }{\partial \x} \right)
		\big|_{\y = \y_{\mu,\theta,\sigma}^*(\x)}. \\
	\end{aligned}
	\]
	Therefore, the conclusion in Eq.~\eqref{eq:def_G} follows immediately.
\end{proof}

\begin{remark}\label{remark 1}
	\color{black}{
	In Proposition~\ref{prop1} we require the uniqueness of $\z^*_{\mu}(\x)$ and $\y_{\mu,\theta,\sigma}^*(\x)$ to guarantee the differentiability of~$\varphi_{\mu,\theta,\sigma}(\x)$.
	This can be achieved by conditions much weaker than convexity, such as the convexity only on a level set.
	We start with the uniqueness of $\z^*_{\mu}(\x)$.
	For any given $\x \in \X$, consider a function $f(\x,\y)$ satisfying that there exists a constant $c > \min_{\y} f(\x,\y)$ such that $f(\x,\y)$ is convex in $\y$ on the level set $\{\y: f(\x,\y) \le c\}$.
	Suppose $\hat{\y}$ is a minimum of $f(\x,\y)$.
	Then $\inf_{\y} \{ f(\x,\y) + \mu/2 \|\y\|^2 \} \le f(\x,\hat{\y}) + \mu/2 \| \hat{\y} \|^2 = \min_{\y} f(\x,\y) + \mu/2 \| \hat{\y} \|^2 <c, $ for a sufficiently small $\mu > 0$.
	Thus, $\z^*_{\mu}(\x)$ locates inside the level set $\{\y: f(\x,\y) \le c\}$ on which $f(\x,\y)$ is convex.
	Hence, $f(\x,\y) + \mu/2 \|\y\|^2$ is strictly convex on $\{\y: f(\x,\y) \le c\}$, and the uniqueness of $\z^*_{\mu}(\x)$ follows.

	As for the uniqueness of $\y_{\mu,\theta,\sigma}^*(\x)$,
	suppose given $\x$, there exist constants $c_1 > \min_{\y \in \S(\x)} F(\x,\y)$ and $c_2 > \min_{\y} f(\x,\y)$ such that $F$ and $f$ are convex in $\y$ on the set $\{\y: F(\x,\y) \le c_1 \text{ and } f(\x,\y) \le c_2\}$.
	If we select a non-decreasing and convex auxiliary function $\PP_{\sigma}(\cdot)$ (such as those in Table~\ref{table penalty barrier}),  then $\PP_{\sigma} \! \left( f(\x,\y) - f_{\mu}^*(\x) \right)$ is convex in~$\y$ on the set (see~\cite{mordukhovich2013easy} Proposition 1.54).
	Or simply if there exists $c > \min_{\y} F(\x,\y) +  \PP_{\sigma} \! \left( f(\x,\y) - f_{\mu}^*(\x) \right)$ such that	
	$F(\x,\y) +  \PP_{\sigma} \! \left( f(\x,\y) - f_{\mu}^*(\x) \right)$ is convex on the set $\{\y: F(\x,\y) +  \PP_{\sigma} \! \left( f(\x,\y) - f_{\mu}^*(\x) \right) \le c\}$,
	then it derives the strict convexity of $ 
	F(\x,\y) 
	+  \PP_{\sigma} \! \left( f(\x,\y) - f_{\mu}^*(\x) \right)
	+ \frac{\theta}{2} \| \y \|^2
	$ in $\y$ on the set similarly,
	and the uniqueness of $\y_{\mu,\theta,\sigma}^*(\x)$ follows.
		
		
%
%
	}	
\end{remark}

Proposition~\ref{prop1} serves as the foundation for our algorithmic framework.
Denote $\varphi_{k}(\x) := \varphi_{\mu_k,\theta_k,\sigma_k}(\x)$.
Next, we will illustrate the implementation at the $k$-{th} step of the outer loop 
and the $l$-{th} step of the inner loop, that is, to calculate $\frac{\partial  \varphi_{k}(\x_l) }{\partial \x}$, as a guide.

We first calculate $\z^*_{\mu_k}(\x_l)$ in Eq.~\eqref{eq:z*} through $T_{\z}$ steps of gradient descent, and denote the output as $\z_{k,l}^{{T_{\z}}}$,
regarded as an approximation of $\z^*_{\mu_k}(\x_l)$.
After that, we calculate $\y_{\mu_k,\theta_k,\sigma_k}^*(\x_l)$ in Eq.~\eqref{eq:y*} through $T_{\y}$ steps of gradient descent, and denote the output as $\y_{k,l}^{T_{\y}}$.
\textcolor{black}{
Note that if the objective function is convex, running some number of steps of the method would lead to an approximation of the minimizer.
Meanwhile, the convexity of objective functions for approximating $\z^*_{\mu_k}(\x_l)$ and $\y_{\mu_k,\theta_k,\sigma_k}^*(\x_l)$ can be guaranteed if $f$ and $F$ are convex in $\y$ as discussed in Remark~\ref{remark 1}.
Also, if the objective function is not convex but all of its stationary points are minimizers, which is a weaker condition than convexity, gradient descent would still help to converge to minimizers.
}

Then, according to Proposition~\ref{prop1}, we can obtain 
\begin{equation}
	\label{eq:a_g_vphi}
	\frac{\partial  \varphi_{k}(\x_l) }{\partial \x}\approx \frac{\partial F(\x_l,\y_{k,l}^{T_{\y}})}{\partial \x}+G_{k,l} \ , 
\end{equation}
with
$
	G_{k,l} = \frac{ \partial \PP_{\sigma_k} \! \left( f \left( \x_l,\y_{k,l}^{T_{\y}} \right) - f_{k,l}^{T_{\z}}   \right) }{ \partial \x },		
$
where 
$
	f_{k,l}^{T_{\z}} = 
	f(\x_l, \z_{k,l}^{T_{\z}}) 
	+ \frac{\mu_{k}}{2} \| \z_{k,l}^{{T_{\z}}} \| ^2.
$
As a summary, the algorithm for solving Eq.~\eqref{eq:varphi approximation} 
is shown in Algorithms~\ref{alg:outer} and~\ref{alg:ours}.

\begin{algorithm}[!htpb]
	\caption{Our Solution Strategy for Eq.~\eqref{eq:varphi approximation}.}
	\label{alg:outer}
	\begin{algorithmic}[1]
		\REQUIRE $\x_0$, $(\mu_0,\theta_0,\sigma_0)$, step size $\alpha>0$.
		\ENSURE The optimal UL solution $\x^*$.
		
		\FOR {$k = 0 \rightarrow K-1$}
		\STATE Calculate $(\mu_k,\theta_k,\sigma_k)$.
		\FOR {$l=0\rightarrow L-1$}
		\STATE Calculate $\frac{\partial  \varphi_{k}(\x_l)}{\partial \x}$ by Algorithm~\ref{alg:ours}.
		\STATE $\x_{l+1}= \mathtt{Proj}_{\X} \left(
		\x_{l}-\alpha \frac{\partial  \varphi_{k}(\x_l)}{\partial \x} \right)$.
		\ENDFOR
		\STATE $\x_0 = \x_L$.
		\ENDFOR
		
		\RETURN $\x^*$.
		
	\end{algorithmic}  
\end{algorithm}

\begin{algorithm}[!htpb]
	\caption{Calculation of $\frac{\partial  \varphi_{k}(\x_l)}{\partial \x}$.}
	\label{alg:ours}
	\begin{algorithmic}[1]
		\REQUIRE $\x_l, (\mu_k,\theta_k,\sigma_k)$.
		\ENSURE $\frac{\partial  \varphi_{k}(\x_l)}{\partial \x}$.
		
		\STATE Calculate $\z_{k,l}^{{T_{\z}}}$ as an approximation of $\z^{*}_{\mu_k} \! (\x_l)$ by performing $T_\z$-step gradient descent on Eq.~\eqref{eq:z*}.
		\STATE Calculate $\y_{k,l}^{T_{\y}}$ as an approximation of $\y_{\mu_k,\theta_k,\sigma_k}^* \! (\x_l)$ by performing $T_\y$-step gradient descent on Eq.~\eqref{eq:y*}.
		\STATE Calculate an approximation of $\frac{\partial  \varphi_{k}(\x_l)}{\partial \x}$ by Eq.~\eqref{eq:a_g_vphi}.
		
	\end{algorithmic}  
\end{algorithm}

\subsection{Extension for BLO with Functional Constraints}\label{sec:constraint}

We consider the BLO with functional constraints on UL and LL problems in this subsection, which is a more general setting, and the above discussion without constraints can be extended to the case with inequality constraints.


The optimistic BLO problems in Eq.~\eqref{eq:OBO} with functional constraints are then
\begin{equation*} \small
	\begin{aligned}
		& \min\limits_{\x \in \X, \y \in \Y}  
		 \ F(\x,\y)    \\
		& \mathrm{ \ s.t.\  } 
		\left\{
		\begin{aligned}
			& H_{j} (\x,\y) \leq 0  , \text{for any } j  \ \\ 
			& \y \in {\rm \S}(\x) 
			:=\mathop{\arg\min}_{\y} 
			\Big\{  f(\x,\y): h_{j'}(\x,\y) \leq 0  ,	 
			\text{for any } j' 
			\Big\} 
		\end{aligned} 
		\right.	,		
	\end{aligned}
\end{equation*}
where the UL constraints $H_{j}(\x,\y) :\mathbb{R}^m\times\mathbb{R}^n\rightarrow\mathbb{R}$ $\left( \text{for any } {j} \in \left\{1,2,\cdots,J \right\} \right)$ 
and the LL constraints $h_{j'}(\x,\y) :\mathbb{R}^m\times\mathbb{R}^n\rightarrow\mathbb{R}$ $\left( \text{for any } {j'} \in \left\{1,2,\cdots,J' \right\} \right)$ 
are continuously differentiable functions.
This is equivalent to
$
	\min_{\x \in \X} \varphi(\x) ,
$
where $\varphi(\x)$, the value-function of the sub-problem in Eq.~\eqref{eq:minvarphi}, is adapted correspondingly to be
\[\varphi(\x):=  \mathop{\min}_{\y}  \Big\{  F(\x,\y) : 
H_{j} (\x,\y) \leq 0, \ \forall j, \text{ and } \y \in {\rm \S}(\x) \Big\}.
\]

Also, the counterpart for value-function of the LL problem is
$f^*(\x) := \min_{\y} \left\{  f(\x,\y): h_{j'}(\x,\y) \leq 0  , \forall \ {j'} \right\}$,
and following the technique within Eq.~\eqref{eq:f*} to transform the LL problem into an inequality constraint,
the problem can be reformulated as
\begin{equation}
	\label{eq:constrain varphi}
	\begin{aligned}
		\varphi(\x)= \mathop{\min}_{\y}  \Big\{  & F(\x,\y)  :
			H_{j} (\x,\y) \leq 0  , \forall \ j, \\
			& f(\x,\y) \leq f^*(\x),
			\text{ and } h_{j'}(\x,\y) \leq 0  , \ \forall \  {j'}
		\Big\}.
	\end{aligned} 
\end{equation}
After that, using the same idea of sequential minimization method, 
inspired by the regularized smoothing method in~\cite{borges2020a},
the value-function $f^*(\x)$ can be approximated with a barrier function (different from Eq.~\eqref{eq:fmu} due to the LL constraints) and a regularization term:
\begin{equation*}
	f_{\mu, \sigma_B}^*(\x) = \min\limits_{\y} 
	\Bigg\{ 
	f(\x,\y) + \sum_{{j'}=1}^{J'} \PP_{B,\sigma_B} \! \left( h_{j'}(\x,\y) \right) + \frac{\mu}{2} \| \y \|^2
	\Bigg\},
\end{equation*}
where $\PP_{B,\sigma_B} (\omega): \mathbb{R}\rightarrow\mathbb{\overline{R}}$ is the selected standard barrier function for the LL constraint $h_{j'}$ as defined in Eq.~\eqref{eq:\PP}, with $\sigma_B$ as the barrier parameter. 
Note that here we define $\PP_{B,\sigma_B}$ as a standard barrier function but not a modified barrier function. 
As for the approximation of $\varphi(\x)$, Eq.~\eqref{eq:phimu} is transferred into 
$\small
	\label{eq:constrain phimu}
	\begin{aligned}	
		& \varphi_{\mu,\theta,\sigma}(\x) =  \min\limits_{\y} 
		\Bigg\{ 
		F(\x,\y) + \sum_{j=1}^{J} \PP_{H,\sigma_H} \! \left( H_{j}(\x,\y) \right)\\
		& + \sum_{{j'}=1}^{J'} \PP_{h,\sigma_h} \! \left( h_{j'}(\x,\y) \right) 
		+  \ \PP_{f,\sigma_f} \! \left( f(\x,\y) - f_{\mu, \sigma_B}^*(\x) \right)
		+ \frac{\theta}{2} \| \y \|^2
		\Bigg\},
	\end{aligned}
$
where $\PP_{H,\sigma_H}, \PP_{h,\sigma_h}, \PP_{f,\sigma_f} : \mathbb{R}\rightarrow\mathbb{\overline{R}}$ are the selected auxiliary functions of penalty or modified barrier with parameters $\sigma_H$, $\sigma_h$ and $\sigma_f$, and $(\mu,\theta, \sigma) = (\mu,\theta, \sigma_B, \sigma_H, \sigma_h, \sigma_f) > 0$.

Then corresponding to Eq.~\eqref{eq:z*} and Eq.~\eqref{eq:y*}, by denoting
\begin{equation*}\small
	\z^*_{\mu,\sigma_B}(\x) \! = \! \underset{\y}{\mathrm{argmin}}
	\Bigg\{ 
	f(\x,\y) + \sum_{{j'}=1}^{J'} \PP_{B,\sigma_B} \! \left( h_{j'}(\x,\y) \right) + \frac{\mu}{2} \| \y \|^2
	\Bigg\},
\end{equation*}
$\small
	\begin{aligned}
		& \y_{\mu,\theta,\sigma}^*(\x) 
		= \underset{\y}{\mathrm{argmin}}
		\Bigg\{ 
		F(\x,\y) + \sum_{j=1}^{J} \PP_{H,\sigma_H} \! \left( H_{j}(\x,\y)  \right)\\
		& + \sum_{{j'}=1}^{J'} \PP_{h,\sigma_h} \! \left( h_{j'}(\x,\y) \right) 
		+  \ \PP_{f,\sigma_f} \! \left( f(\x,\y) - f_{\mu, \sigma_B}^*(\x) \right)
		+ \frac{\theta}{2} \| \y \|^2 
		\Bigg\},
	\end{aligned}
$
we have the next proposition, which follows the same idea from Proposition~\ref{prop1}.

\begin{pro}
	\label{prop constrain}
	Suppose $ F(\x, \y) $ and $f(\x, \y)$ are bounded below  and continuously differentiable. 
	Given $\x \in \X$ and $\mu, \theta, \sigma > 0$, when
	$\z^*_{\mu,\sigma_B}(\x)$ and $\y_{\mu,\theta,\sigma}^*(\x)$
	are unique, 
	then $\varphi_{\mu,\theta,\sigma} (\x)$ is differentiable and
	$
		\begin{aligned}
			G(\x) = & \Bigg[ \sum_{j=1}^{J} \frac{\partial \PP_{H,\sigma_H} \! \left( H_{j}(\x,\y)  \right)}{\partial \x} + 
			\sum_{{j'}=1}^{J'} \frac{ \partial \PP_{h,\sigma_h} \! \left( h_{j'}(\x,\y) \right)   }{ \partial \x } \\
			& + \frac{ \partial \PP_{f,\sigma_f}  \! \left( f \left(\x,\y \right) - f_{\mu, \sigma_B}^*(\x) \right) }{ \partial \x } \Bigg] \big|_{ \y = \y_{\mu,\theta,\sigma}^*(\x)},		
		\end{aligned}
	$
	where
	\begin{equation*} 
		\begin{aligned}
			f_{\mu, \sigma_B}^*(\x) = 
			f \! \left(\x,\z^*_{\mu, \sigma_B}(\x) \right) 
			+  \sum_{{j'}=1}^{J'} \partial \PP_{B,\sigma_B} \! \left( h_{j'}(\x,\z^*_{\mu, \sigma_B}(\x)) \right)  \\
			+ \frac{\mu}{2} \| \z^*_{\mu, \sigma_B}(\x) \|^2,
		\end{aligned}
	\end{equation*}
	\begin{equation*}\small
		\frac{ \partial f_{\mu, \sigma_B}^*(\x) }{\partial \x} \! \! = \! \! 
		\Bigg[
		\frac{\partial f \! \left(\x,\y \right) }{\partial \x} 
		+ \sum_{{j'}=1}^{J'} \frac{ \partial \PP_{B,\sigma_B} \! \left( h_{j'} \! \left(\x,\y \right)  \right)  }{\partial \x} \Bigg]  \big|_{\y=\z^*_{\mu, \sigma_B}(\x)}.
	\end{equation*}
\end{pro}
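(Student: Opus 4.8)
The plan is to mirror the two-stage argument already used to establish Proposition~\ref{prop1}, now carrying the additional barrier and penalty terms attached to the functional constraints $H_j$ and $h_{j'}$.

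First I would handle the inner value-function $f^*_{\mu,\sigma_B}(\x)$. The key step is to show that
\[
f(\x,\y) + \sum_{{j'}=1}^{J'} \PP_{B,\sigma_B}\!\left( h_{j'}(\x,\y) \right) + \frac{\mu}{2}\|\y\|^2
\]
is level-bounded in $\y$ locally uniformly in $\x \in \X$ (in the sense of \cite[Definition 3]{liu2020generic}). The contradiction argument is verbatim that of Proposition~\ref{prop1}: along hypothetical sequences $\x_k \to \bar{\x}$, $\|\y_k\| \to \infty$ keeping the value bounded, one uses that $f$ is bounded below and --- crucially --- that a standard barrier function satisfies $\PP_{B,\sigma_B} \geq 0$ (Definition~\ref{defi:barrier}), so the $\frac{\mu}{2}\|\y_k\|^2$ term alone drives the value to $+\infty$. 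Since $\z^*_{\mu,\sigma_B}(\x)$ is assumed unique (hence the infimum is finite, so the argmin lies in the open set $\{\y : h_{j'}(\x,\y) < 0 \ \forall j'\}$ where the integrand is $C^1$), \cite[Theorem 4.13, Remark 4.14]{bonnans2013perturbation} applies and yields differentiability of $f^*_{\mu,\sigma_B}$ together with the stated formula for $\partial f^*_{\mu,\sigma_B}/\partial\x$, picking up the $\x$-derivatives of $f$ and of the terms $\PP_{B,\sigma_B}(h_{j'})$ evaluated at $\z^*_{\mu,\sigma_B}(\x)$.

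With $f^*_{\mu,\sigma_B}$ now a genuine differentiable function, I would repeat the argument for $\varphi_{\mu,\theta,\sigma}(\x)$: level-boundedness of
\[
F(\x,\y) + \sum_{j} \PP_{H,\sigma_H}\!\left(H_j(\x,\y)\right) + \sum_{j'} \PP_{h,\sigma_h}\!\left(h_{j'}(\x,\y)\right) + \PP_{f,\sigma_f}\!\left(f(\x,\y) - f^*_{\mu,\sigma_B}(\x)\right) + \frac{\theta}{2}\|\y\|^2
\]
follows from $F$ bounded below, the non-negativity of the three auxiliary functions, and the coercive term $\frac{\theta}{2}\|\y\|^2$; uniqueness of $\y^*_{\mu,\theta,\sigma}(\x)$ is assumed; and \cite[Theorem 4.13, Remark 4.14]{bonnans2013perturbation} again gives differentiability of $\varphi_{\mu,\theta,\sigma}$ with $\partial\varphi_{\mu,\theta,\sigma}/\partial\x$ equal to the sum of $\partial_\x F$ and the $\x$-derivatives of the three auxiliary terms, all evaluated at $\y^*_{\mu,\theta,\sigma}(\x)$; in the last term the chain rule also differentiates $f^*_{\mu,\sigma_B}(\x)$ via the first-stage formula. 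Subtracting the direct term $\partial_\x F$ leaves exactly the claimed expression for $G(\x)$.

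The hard part is not analytical but a matter of bookkeeping around the extended-real-valued barrier terms: one must be sure that both argmins land in the open region where every barrier argument is strictly negative --- so that the objectives are honestly $C^1$ there and \cite[Theorem 4.13]{bonnans2013perturbation} is applicable --- and that the feasible sets entering the value-functions are nonempty so the infima are finite. These facts are implicit in the standing hypotheses (finiteness and uniqueness of the argmins), but a careful write-up should make them explicit; beyond that, the proof is a direct adaptation of Proposition~\ref{prop1}.
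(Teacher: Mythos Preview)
Your proposal is correct and follows exactly the approach the paper takes: the paper's own proof is the one-line remark that Proposition~\ref{prop constrain} ``is similar to Proposition~\ref{prop1}, obtained by applying~\cite[Theorem 4.13, Remark 4.14]{bonnans2013perturbation},'' and your two-stage level-boundedness plus Bonnans--Shapiro argument is precisely that. Your added care about the argmins landing where the barrier terms are finite and $C^1$ is a welcome bit of rigor that the paper leaves implicit.
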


The proof is similar to Proposition~\ref{prop1}, obtained by applying~\cite[Theorem 4.13, Remark 4.14]{bonnans2013perturbation}.
The algorithm is then based on Proposition~\ref{prop constrain} and similar to Algorithm~\ref{alg:outer} and \ref{alg:ours}.
Note that in Section~\ref{sec:theoretical investigations},
our convergence analysis is carried out under this constrained setting,
because problems without constraints can be regarded as \textcolor{black}{its special case}.

\begin{remark}
	\color{black}{
		In terms of the uniqueness of $\z^*_{\mu,\sigma_B}(\x)$ and $\y_{\mu,\theta,\sigma}^*(\x)$, 
		if we select convex auxiliary functions $\PP_{B,\sigma_B}$, $\PP_{H,\sigma_H}$, $\PP_{h,\sigma_h}$,  $\PP_{f,\sigma_f}$,
		and suppose $h_{j'}(\x,\y), \forall \ j',$ and $H_{j}(\x,\y), \forall \ j,$ in the constraints are convex in the level set,
		then the uniqueness follows similarly as in Remark~\ref{remark 1}.		
%
%
%
	}	
\end{remark}


\subsection{Extension for Pessimistic BLO}\label{sec:pessimistic}

In this part, we consider the pessimistic BLO, 
which has been rarely discussed for handling learning tasks to our best knowledge.
For brevity, we focus on problems without constraints on UL and LL, and this can be extended to the case with constraints easily.
As what we have discussed about pessimistic BLO at the very beginning,
its form is
\begin{equation}
	\begin{aligned}
		\min\limits_{\x \in \X} & \max\limits_{\y \in \Y} \ F(\x,\y),  \
		\mathrm{ \ s.t.\  } \y \in {\rm \S}(\x) = \mathop{\arg\min}_{\y} f(\x,\y).
		\label{eq:PBO}
	\end{aligned}
\end{equation}

Similar to the optimistic case, this problem can be transformed into 
$
	\min_{\x} \ \varphi^p(\x) ,
$	
where the value-function $\varphi(\x)$ in Eq.~\eqref{eq:minvarphi} is redefined as
$
	\begin{aligned}
		\varphi^p(\x):= &\mathop{\max}_{\y} \Big\{  F(\x,\y) 
		: \y \in {\rm \S}(\x)
		\Big\}.
	\end{aligned} 
$
Considering the value-function $f^*(\x)$, we have the same regularized~$f_{\mu}^*(\x)$ to the optimistic case in Eq.~\eqref{eq:fmu}.
As for the approximation of $\varphi^p(\x)$, 
thanks to the value-function-based sequential minimization, different from Eq.~\eqref{eq:phimu}, we have 
\begin{equation*}\small
	\begin{aligned}	
		\varphi^p_{\mu,\theta,\sigma}(\x) =  \max\limits_{\y} &
		\Big\{ 
		F(\x,\y) - \PP_{\sigma}\! \left( f(\x,\y) - f_{\mu}^*(\x)  \right)
		- \frac{\theta}{2} \| \y \|^2
		\Big\},
	\end{aligned}
\end{equation*}
\textcolor{black}{
where $\PP_{\sigma}$ is defined in Eq.~\eqref{eq:\PP}}.
Same as before, our goal is to solve
$
	\min_{\x} \varphi^p_{\mu,\theta,\sigma}(\x).
$

Denote $\z^*_{\mu}(\x)$ to be the same as in Eq.~\eqref{eq:z*},
and Eq.~\eqref{eq:y*} is changed into
\begin{equation*}\small
	\begin{aligned}
		\y_{\mu,\theta,\sigma}^*(\x) 
		= \underset{\y}{\mathrm{argmax}}
		& \Big\{ 
		F(\x,\y) 
		-  \PP_{\sigma} \! \left( f(\x,\y) - f_{\mu}^*(\x)  \right)
		- \frac{\theta}{2} \| \y \|^2
		\Big\}.
	\end{aligned}
\end{equation*}
Then Proposition~\ref{prop1} in the optimistic case is changed into the following in the pessimistic case. 
\begin{pro}\label{propPBO}
	Suppose $ -F(\x, \y) $ and $f(\x, \y)$ are bounded below  and continuously differentiable. 
	Given $\x \in \X$ and $\mu, \theta, \sigma > 0$, when
	$\z^*_{\mu}(\x)$ and $\y_{\mu,\theta,\sigma}^*(\x)$
	are unique, 
	then $\varphi^p_{\mu,\theta,\sigma}(\x)$ is differentiable and
	$$
		\frac{ \partial \varphi^p_{\mu,\theta,\sigma} \left( \x \right) }{\partial \x} 
		= \frac{\partial  F \! \left(\x,\y_{\mu,\theta,\sigma}^*(\x) \right)} {\partial \x} + G(\x),
	$$
	with
	$
		\begin{aligned}
			G(\x) = \frac{ - \partial \PP_{\sigma} \! \left( f \left(\x,\y \right) - f_{\mu}^*(\x)   \right) }{ \partial \x } \big|_{ \y = \y_{\mu,\theta,\sigma}^*(\x)},		
		\end{aligned}
	$
	where
	$
	\begin{aligned}
		f_{\mu}^*(\x) = 
		f \! \left(\x,\z^*_{\mu}(\x) \right) 
		+ \frac{\mu}{2} \| \z^*_{\mu}(\x) \|^2,
	\end{aligned}
	$
	and
	$
	\frac{ \partial f_{\mu}^*(\x) }{\partial \x} = 
	\frac{\partial f \! \left(\x,\y \right) }{\partial \x} 
	\big|_{\y=\z^*_{\mu}(\x)}.
	$
\end{pro}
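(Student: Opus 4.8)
The plan is to mirror the argument of Proposition~\ref{prop1}, observing that the pessimistic reformulation differs only in the sign of the inner objective and the replacement of $\mathrm{argmin}$ by $\mathrm{argmax}$. First I would note that maximizing $F(\x,\y) - \PP_{\sigma}(f(\x,\y)-f_{\mu}^*(\x)) - \frac{\theta}{2}\|\y\|^2$ over $\y$ is the same as minimizing its negative, namely $-F(\x,\y) + \PP_{\sigma}(f(\x,\y)-f_{\mu}^*(\x)) + \frac{\theta}{2}\|\y\|^2$, so that $\varphi^p_{\mu,\theta,\sigma}(\x) = -\min_{\y}\{-F(\x,\y) + \PP_{\sigma}(f(\x,\y)-f_{\mu}^*(\x)) + \frac{\theta}{2}\|\y\|^2\}$. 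Since $-F(\x,\y)$ is bounded below by hypothesis and $\PP_{\sigma}(\cdot)\ge 0$ by Definition~\ref{defi:barrier}, the same level-boundedness argument used in Proposition~\ref{prop1} applies verbatim to this negated objective, establishing the inf-compactness condition of~\cite[Theorem 4.13]{bonnans2013perturbation}.

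Next I would handle $f_{\mu}^*(\x)$ exactly as in Proposition~\ref{prop1}: the term $f(\x,\y)+\frac{\mu}{2}\|\y\|^2$ is level-bounded in $\y$ locally uniformly in $\x$ because $f$ is bounded below, so by~\cite[Theorem 4.13, Remark 4.14]{bonnans2013perturbation} (using that $\z_{\mu}^*(\x)$ is a singleton) we get the differentiability of $f_{\mu}^*$ and the formula $\frac{\partial f_{\mu}^*(\x)}{\partial \x} = \frac{\partial f(\x,\y)}{\partial \x}\big|_{\y = \z_{\mu}^*(\x)}$. This step is identical to the optimistic case and requires no modification.

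Then, invoking~\cite[Theorem 4.13, Remark 4.14]{bonnans2013perturbation} once more for the negated objective — whose argmin is $\y_{\mu,\theta,\sigma}^*(\x)$, assumed to be a singleton — yields that $-\varphi^p_{\mu,\theta,\sigma}(\x)$ is differentiable with gradient equal to the partial $\x$-derivative of $-F(\x,\y) + \PP_{\sigma}(f(\x,\y)-f_{\mu}^*(\x)) + \frac{\theta}{2}\|\y\|^2$ evaluated at $\y = \y_{\mu,\theta,\sigma}^*(\x)$. Negating both sides gives $\frac{\partial \varphi^p_{\mu,\theta,\sigma}(\x)}{\partial \x} = \frac{\partial F(\x,\y_{\mu,\theta,\sigma}^*(\x))}{\partial \x} - \frac{\partial \PP_{\sigma}(f(\x,\y)-f_{\mu}^*(\x))}{\partial \x}\big|_{\y = \y_{\mu,\theta,\sigma}^*(\x)}$, which is precisely the asserted formula with $G(\x) = -\frac{\partial \PP_{\sigma}(f(\x,\y)-f_{\mu}^*(\x))}{\partial \x}\big|_{\y=\y_{\mu,\theta,\sigma}^*(\x)}$; note that the $\x$-dependence inside $\PP_{\sigma}$ enters both directly through $f(\x,\y)$ and indirectly through $f_{\mu}^*(\x)$, and the latter contributes via the chain rule using the formula for $\frac{\partial f_{\mu}^*(\x)}{\partial \x}$ just established.

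I do not anticipate a genuine obstacle here, since the result is a sign-flipped restatement of Proposition~\ref{prop1}; the only point demanding a little care is confirming that the sign change does not disturb the nonnegativity $\PP_{\sigma}\ge 0$ that drives the level-boundedness estimate — it does not, because $\PP_{\sigma}$ still appears with a $+$ sign in the negated objective — and making sure the chain-rule bookkeeping for the nested $f_{\mu}^*(\x)$ term is carried through the sign flip correctly.
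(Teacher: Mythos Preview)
Your proposal is correct and matches the paper's approach: the paper simply states that the proof is similar to Proposition~\ref{prop1}, obtained by applying~\cite[Theorem 4.13, Remark 4.14]{bonnans2013perturbation}, and you have spelled out precisely how that adaptation works by negating the inner objective and verifying that the hypotheses (in particular the level-boundedness via $-F$ bounded below and $\PP_{\sigma}\ge 0$) carry over.
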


The proof is similar to Proposition~\ref{prop1}, obtained by applying~\cite[Theorem 4.13, Remark 4.14]{bonnans2013perturbation}.
The algorithm in the pessimistic case can then be derived similar to the optimistic case,
but when calculating $\y_{\mu,\theta,\sigma}^*(\x)$, 
gradient ascent is performed instead of gradient descent. 
In addition, the convergence of BVFSM for pessimistic BLO 
will be discussed in detail in Section~\ref{sec:theoretical investigations}.

\begin{remark}
\color{black}{
	The uniqueness of $\z^*_{\mu}(\x)$ can be guaranteed same to the analysis in Remark~\ref{remark 1}.
	Similarly, 
	suppose given $\x$, there exist constants $c_1 < \max_{\y \in \S(\x)} F(\x,\y)$ and $c_2 > \min_{\y} f(\x,\y),$ such that $F$ is concave and $f$ is convex in $\y$ on the set $\{\y: F(\x,\y) \ge c_1 \text{ and } f(\x,\y) \le c_2\}$,
	and we select a non-decreasing and convex auxiliary function $\PP_{\sigma}(\cdot)$.
	Or simply suppose there exists $c < \max_{\y} F(\x,\y) +  \PP_{\sigma} \! \left( f(\x,\y) - f_{\mu}^*(\x) \right)$ such that	
	$F(\x,\y) +  \PP_{\sigma} \! \left( f(\x,\y) - f_{\mu}^*(\x) \right)$ is concave on the set $\{\y: F(\x,\y) -  \PP_{\sigma} \! \left( f(\x,\y) - f_{\mu}^*(\x) \right) \ge c\}$.
	Then the uniqueness of $\y_{\mu,\theta,\sigma}^*(\x)$ follows.
	 
	
}
\end{remark}

\section{Theoretical Analysis}
This section brings out the convergence analysis and complexity analysis of the proposed BVFSM.

\subsection{Convergence Analysis}
\label{sec:theoretical investigations}

Here we show the convergence analysis of the proposed method.
As BLO without constraints can be seen as a special case of BLO with constraints
by regarding constraints as $H_{j}(\x,\y) \equiv 0, \forall j,$ and $h_{j'}(\x,\y) \equiv 0, \forall {j'}$,
we prove the more general constrained case.
Also, for brevity, we first prove in the optimistic BLO case, and the pessimistic case will be analyzed later in Corollary~\ref{corollary}.

Note that for sequential-minimization-type scheme,
including EGBMs and BVFSM,
\textcolor{black}{
	the convergence analysis can be classified into asymptotic and non-asymptotic convergence~\cite{liu2023averaged,ji2022lower}.
	This work considers asymptotic convergence and focuses on the approximation quality. 
	That is, whether the solutions to approximate problems
	converge to the original solution,
	which comes from the sequential approximated sub-problems
	converging to the original bi-level problem.
	We prove the asymptotic convergence from the aspect of global solution,
	and start by recalling the equivalent definition of epiconvergence given in~\cite[pp. 41]{bonnans2013perturbation}.
}
%
%

\begin{defi} \label{def_epic}
	$\varphi_k \stackrel{e}{\longrightarrow} \varphi$ if and only if for all $\x \in \mathbb{R}^m$, 
	the following two conditions hold:
	\begin{enumerate}
		\item for any sequence $\{\x_k\}$ converging to $\x$,
			\begin{equation}
				\label{eq:epi condition 1}
				\liminf \limits_{k \rightarrow \infty}\varphi_k(\x_k) \geq \varphi(\x);
			\end{equation}
		\item there is a sequence $\{\x_k\}$ converging to $\x$ such that
			\begin{equation}
				\label{eq:epi condition 2}
				\limsup \limits_{k \rightarrow \infty}\varphi_k(\x_k) \leq \varphi(\x).
			\end{equation}
	\end{enumerate}
	
%
	
\end{defi}

The convergence results are given under the following statements as our blanket assumption.

\begin{asu}[Assumptions for the problem] \label{assumption 1}
	$\quad$ 
	\begin{enumerate}
		\item $ \S(\x) $ is nonempty for $\x \in \X$.
		\item Both $F(\x,\y)$ and $f(\x,\y)$ are jointly continuous and continuously differentiable.
		Both $H_{j}(\x,\y)$, $\forall \ {j}$
		and $h_{j'}(\x,\y)$, $\forall \ {j'}$
		are continuously differentiable.
		\item $F(\x, \y)$ is level-bounded in $\y$ locally uniformly in $\x \in \X$ (see \cite[Definition 3]{liu2020generic}).
		\item For constrained BLO, $0$ is not a local optimal value of $h_{j'}(\x,\y)$ w.r.t. $\y$ for all ${j'}$.
	\end{enumerate}
%
%
%
\end{asu}

For the simplicity of symbols,
here we let $j=j'=1$, meaning that there is one constraint on the UL and LL problem respectively, 
and denote them as $H(\x,\y)$ and $h(\x,\y)$.
When $j \neq 1$ or $j' \neq 1$, the proofs parallel actually.
In addition, denote $f^*_{k}(\x) := f_{\mu_k, \sigma_{B,k}}^*(\x )$,
$\varphi_{k}(\x) := \varphi_{\mu_k,\theta_k,\sigma_k}(\x)$, and
$\PP_{k}(\omega) \! := \! \PP_{\sigma_k}(\omega)$ \textcolor{black}{defined in Eq.~\eqref{eq:\PP}}.
Also, $\PP_{B,k}(\omega), \PP_{H,k}(\omega), \PP_{h,k}(\omega), \PP_{f,k}(\omega)$ are defined similarly. 
Note that $\PP_{B,k}$ is the standard barrier function, 
while $\PP_{H,k}$, $\PP_{h,k}$, $\PP_{f,k}$ are penalty or modified barrier functions.~Then 
\begin{equation*} \label{eq:fk*}
	f^*_{k}(\x) = \min\limits_{\y} 
	\left\{ 
	f(\x,\y) + \PP_{B,k} \! \left( h(\x,\y) \right) + \frac{\mu_k}{2} \| \y \|^2
	\right\},
\end{equation*}
\begin{equation*} \label{eq:phik}
	\begin{aligned}	
		\varphi_k(\x) =  \min\limits_{\y} 
		\Big\{ 
		F(\x,\y) +  \PP_{H,k} \! \left( H(\x,\y) \right)
		+ \PP_{h,k} \! \left( h(\x,\y) \right) & \\
		+  \PP_{f,k} \! \left( f(\x,\y) - f_{k}^*(\x) \right)
		+ \frac{\theta_k}{2} \| \y \|^2
		\Big\} . &
	\end{aligned}
\end{equation*}

To begin with, we present the following lemma on the properties of penalty and modified barrier functions, as the preparation for further discussion and proofs\footnote{Proofs of the four lemmas are provided in Appendix~\ref{sec:appendix lemmas}, available at \url{https://arxiv.org/abs/2110.04974}.}. 

\begin{lemma}\label{lem1}
	\textcolor{black}{Let $\{\sigma_k\}$ in $\PP_{k}(\omega) \! = \! \PP_{\sigma_k}(\omega)$ be a positive sequence such that $\lim_{k \rightarrow \infty}\sigma_k = 0$.}
	Additionally assume that $\lim_{k \rightarrow \infty}\rho( - \eta_k \; ; \sigma_k^{(1)}) = 0$ when $\rho$ is a modified barrier function.
	Then we have 
	\begin{enumerate}
		\item $\PP_{k}(\omega)$ is continuous, differentiable and non-decreasing, and satisfies $\PP_{k}(\omega) \geq 0$. 
		\item For any $\omega \leq 0 $, $\lim_{k \rightarrow \infty} \PP_k(\omega) = 0$.
		\item For any sequence $\{\omega_k\}$, $\lim_{k \rightarrow \infty} \PP_k(\omega_k) < + \infty$ implies that $\limsup_{k \rightarrow \infty} \omega_k \leq 0 $.
	\end{enumerate}
	
%
%
\end{lemma}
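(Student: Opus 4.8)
\quad I would establish all three items by unwinding the definition of $\PP_k$ in Eq.~\eqref{eq:\PP} and splitting into the three admissible cases: $\rho$ a standard penalty function or a standard barrier function, so that $\PP_k(\omega)=\rho(\omega;\sigma_k)$, and $\rho$ a modified barrier function, so that $\PP_k(\omega)=\rho(\omega-\eta_k;\sigma_k^{(1)})$. The only facts used are Definition~\ref{defi:barrier}, the hypotheses $(\mu_k,\theta_k,\sigma_k)\to 0$ and $\lim_{k}\rho(-\eta_k;\sigma_k^{(1)})=0$, and the monotonicity of $\rho$. Item (1) is then immediate: translating the argument by the constant $-\eta_k$ preserves continuity, differentiability and monotonicity, which $\rho(\cdot;\sigma)$ has by Definition~\ref{defi:barrier}, so $\PP_k$ inherits them on the set where it is finite; nonnegativity $\PP_k(\omega)\ge 0$ holds in each case — it is identically $0$ on $(-\infty,0]$ for a penalty function and $\ge 0$ by definition for both barrier functions, and the shift does not affect this bound.

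For item (2) I would fix $\omega\le 0$ and check the cases. For a penalty function $\PP_k(\omega)=\rho(\omega;\sigma_k)=0$. For a standard barrier function $\PP_k(\omega)=\rho(\omega;\sigma_k)$ with $\omega<0$, and $\rho(\omega;\sigma_k)\to 0$ since $\sigma_k\to 0$ and $\lim_{\sigma\to 0}\rho(\omega;\sigma)=0$ for $\omega<0$. For a modified barrier function $\omega-\eta_k\le-\eta_k<0$, so monotonicity gives $0\le\PP_k(\omega)=\rho(\omega-\eta_k;\sigma_k^{(1)})\le\rho(-\eta_k;\sigma_k^{(1)})$, and the right-hand side tends to $0$ by hypothesis; the squeeze theorem closes the case.

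For item (3) I would prove the contrapositive. Suppose $\limsup_k\omega_k>0$; then there are $\bar\omega>0$ and a subsequence $\{\omega_{k_i}\}$ with $\omega_{k_i}\ge\bar\omega$ for all $i$. For a penalty function, monotonicity yields $\PP_{k_i}(\omega_{k_i})\ge\rho(\bar\omega;\sigma_{k_i})$, and $\rho(\bar\omega;\sigma_{k_i})\to\infty$ because $\bar\omega>0$ and $\sigma_{k_i}\to 0$. For a standard barrier function, $\omega_{k_i}\ge\bar\omega>0$ already forces $\rho(\omega_{k_i};\sigma_{k_i})=\infty$. For a modified barrier function, since $\eta_k\to 0$ (it vanishes together with $\sigma_k$) one has $\eta_{k_i}<\bar\omega/2$ for $i$ large, hence $\omega_{k_i}-\eta_{k_i}>\bar\omega/2>0$ and $\PP_{k_i}(\omega_{k_i})=\rho(\omega_{k_i}-\eta_{k_i};\sigma_{k_i}^{(1)})=\infty$. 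In every case $\PP_{k_i}(\omega_{k_i})\to+\infty$, so $\lim_k\PP_k(\omega_k)$ cannot be finite, which is precisely the contrapositive of item (3).

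I expect the only delicate point to be the modified-barrier case, where one must control the race to zero of the shift $\eta_k$ and the barrier parameter $\sigma_k^{(1)}$: for item (2) the extra hypothesis $\rho(-\eta_k;\sigma_k^{(1)})\to 0$ furnishes exactly the upper bound that monotonicity needs, while for item (3) one relies on $\eta_k\to 0$ so that a strictly positive $\omega_{k_i}$ still pushes the shifted argument $\omega_{k_i}-\eta_{k_i}$ to a nonnegative value and triggers the barrier's blow-up. A minor accompanying subtlety is that $\PP_k$ takes values in $\mathbb{R}\cup\{\infty\}$: the smoothness in item (1) should be read on $\{\PP_k<\infty\}$, and in item (3) the assumption $\lim_k\PP_k(\omega_k)<+\infty$ is used to ensure $\PP_k(\omega_k)<\infty$ for all large $k$.
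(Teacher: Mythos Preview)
Your proposal is correct and follows essentially the same approach as the paper: a case split according to Eq.~\eqref{eq:\PP}, with item~(1) read off Definition~\ref{defi:barrier}, item~(2) handled by monotonicity together with the hypothesis $\rho(-\eta_k;\sigma_k^{(1)})\to 0$ in the modified-barrier case, and item~(3) proved by contradiction via a subsequence bounded below by some $\bar\omega>0$. The one small stylistic difference is that, for item~(3) in the modified-barrier case, the paper argues directly rather than by contrapositive: finiteness of $\PP_k(\omega_k)=\rho(\omega_k-\eta_k;\sigma_k^{(1)})$ forces $\omega_k-\eta_k<0$, hence $\omega_k<\eta_k\to 0$; your version reaches the same conclusion via $\omega_{k_i}-\eta_{k_i}>\bar\omega/2>0$ triggering blow-up.
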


We will use these properties in later proofs, 
so we hold these requirements on parameters in Lemma~\ref{lem1} from now on. 
To prove the convergence result, we verify the two conditions given in Definition~\ref{def_epic}, and show that $\varphi_k(\x)+\delta_{\X}(\x)\stackrel{e}{\longrightarrow}\varphi(\x)+\delta_{\X}(\x)$, 
where $\delta_\X(\x)$ denotes the indicator function of the set $\X$, 
i.e., $\delta_\X(\x) = 0 $ if $\x \in \X$ and $\delta_\X(\x) = + \infty$ if $\x \notin \X$. 
To begin with, we propose the following three lemmas to verify the two condition in Eq.~\eqref{eq:epi condition 1} and Eq.~\eqref{eq:epi condition 2} in Definition~\ref{def_epic}.

\begin{lemma}\label{lem2}
	Let $\{ (\mu_k, \sigma_{k}) \}$ be a positive sequence such that $ (\mu_k, \sigma_{k}) \rightarrow 0$, also satisfying the same setting as in Lemma~\ref{lem1}.
	Then for any sequence $\{\x_k\}$ converging to $\bar{\x}$,
	$
	\limsup\limits_{k \rightarrow \infty} f_{k}^*(\x_k) \le~f^*(\bar{\x}).
	$
\end{lemma}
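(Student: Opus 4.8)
\textbf{Proof plan for Lemma~\ref{lem2}.}

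The plan is to bound $f_k^*(\x_k)$ from above by plugging a good test point into the $\argmin$ that defines $f_k^*$, and then pass to the limit. First I would fix an arbitrary $\varepsilon > 0$ and choose $\bar{\y} \in \S(\bar{\x})$, so that $f(\bar{\x}, \bar{\y}) = f^*(\bar{\x})$ and $h(\bar{\x}, \bar{\y}) \le 0$; here I use that $\S(\bar{\x})$ is nonempty by Assumption~\ref{assumption 1}(1). A small subtlety is that $h(\bar{\x}, \bar{\y})$ could equal $0$, which would make the standard barrier $\PP_{B,k}(h(\bar{\x}, \bar{\y}))$ blow up. To handle this I would instead pick a strictly feasible nearby point: using Assumption~\ref{assumption 1}(4) (that $0$ is not a local optimal value of $h$ with respect to $\y$), I can find $\tilde{\y}$ arbitrarily close to $\bar{\y}$ with $h(\bar{\x}, \tilde{\y}) < 0$ and $f(\bar{\x}, \tilde{\y}) \le f^*(\bar{\x}) + \varepsilon$ (by continuity of $f$). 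Then by continuity of $h$ and $f$, for $k$ large enough $h(\x_k, \tilde{\y}) < 0$ as well.

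Next, since $\tilde\y$ is feasible for the minimization defining $f_k^*(\x_k)$, I would write
\[
f_k^*(\x_k) \le f(\x_k, \tilde{\y}) + \PP_{B,k}\!\left( h(\x_k, \tilde{\y}) \right) + \frac{\mu_k}{2} \Vert \tilde{\y} \Vert^2 .
\]
Now take $\limsup_{k\to\infty}$ on both sides. The term $f(\x_k,\tilde\y) \to f(\bar\x,\tilde\y) \le f^*(\bar\x)+\varepsilon$ by joint continuity of $f$ (Assumption~\ref{assumption 1}(2)) and $\x_k\to\bar\x$; the term $\frac{\mu_k}{2}\Vert\tilde\y\Vert^2 \to 0$ since $\mu_k\to 0$ and $\tilde\y$ is fixed; and for the barrier term, since $h(\x_k,\tilde\y) \to h(\bar\x,\tilde\y) < 0$, the values $h(\x_k,\tilde\y)$ are eventually bounded away from $0$ by some negative constant $-c < 0$, so by monotonicity of $\rho$ and the defining property of a standard barrier ($\lim_{\sigma_B \to 0}\rho(\omega;\sigma_B) = 0$ for $\omega < 0$), we get $\PP_{B,k}(h(\x_k,\tilde\y)) \to 0$. (This last step mirrors the argument in Lemma~\ref{lem1} and uses $\sigma_{B,k}\to 0$.) Combining, $\limsup_{k\to\infty} f_k^*(\x_k) \le f^*(\bar\x) + \varepsilon$, and since $\varepsilon > 0$ was arbitrary, the claim follows.

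The main obstacle I anticipate is the handling of the standard barrier term when the chosen lower-level point lies on the boundary $h = 0$ — this is exactly where Assumption~\ref{assumption 1}(4) is needed, to guarantee the existence of strictly feasible points arbitrarily close by so that the barrier stays finite and in fact vanishes in the limit. Once the right test point $\tilde\y$ is produced, the rest is a routine continuity-and-limit argument. A secondary point to be careful about is uniformity: I must make sure the convergence $\PP_{B,k}(h(\x_k,\tilde\y))\to 0$ uses both that the arguments $h(\x_k,\tilde\y)$ stay uniformly negative for large $k$ and that $\sigma_{B,k}\to 0$, which together with monotonicity of $\rho$ gives the bound $0 \le \PP_{B,k}(h(\x_k,\tilde\y)) \le \rho(-c; \sigma_{B,k}) \to 0$.
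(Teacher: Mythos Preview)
Your proposal is correct and follows essentially the same approach as the paper's proof: pick a (nearly) optimal strictly feasible test point $\tilde{\y}$ using Assumption~\ref{assumption 1}(4) to avoid the boundary $h=0$, plug it into the definition of $f_k^*(\x_k)$, and pass to the limit using continuity, $\mu_k\to 0$, and the monotonicity of the barrier to kill $\PP_{B,k}(h(\x_k,\tilde{\y}))$. The only cosmetic difference is that you start from an exact minimizer $\bar{\y}\in\S(\bar{\x})$ (valid by Assumption~\ref{assumption 1}(1)) whereas the paper starts from an $\epsilon/2$-approximate minimizer, which changes nothing substantive.
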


\begin{lemma}\label{lem3}
\textcolor{black}{	
	Let $\{(\mu_k, \theta_k, \sigma_k)\}$ be a positive sequence such that $\lim_{k \rightarrow \infty}(\mu_k, \theta_k, \sigma_k) = 0$,
	and
	satisfy the same setting as in Lemma~\ref{lem1}.
	Given $\bar{\x}\in \X$, then for any sequence $\{ \x_k\}$ converging to $\bar{\x}$, we have
}
	$
		\liminf\limits_{k\rightarrow\infty} \varphi_{k}(\x_k)\geq \varphi(\bar{\x}).
	$
\end{lemma}

\begin{lemma}\label{lem4}
\textcolor{black}{
	Let $\{(\mu_k, \theta_k, \sigma_k)\}$ be a positive sequence such that $\lim_{k \rightarrow \infty}(\mu_k, \theta_k, \sigma_k) = 0$,
	and
	satisfy the same setting as in Lemma~\ref{lem1}.
	Then for any $\x \in \X$, }
	$
	\limsup\limits_{k \rightarrow \infty} \varphi_k(\x) \le \varphi(\x).
	$
\end{lemma}

\begin{table*}[hbtp]
	\centering
	\caption{Convergence of existing methods and BVFSM.
		We present the convergence results and conditions 
		whether it is available without the LLC condition, whether it can be extended to BLO with constraints and pessimistic BLO, respectively for each method.
		\textcolor{black}{Note that these convergence results are studied via two different types: the asymptotic and non-asymptotic analysis~\cite{liu2023averaged,ji2022lower}, and BVFSM achieves the asymptotic convergence without the LLC condition. 
		BVFSM can also be extended to BLO with constraints and pessimistic BLO which other methods cannot carry out.		
		}
	} 
	\label{table conditions}
	\vspace {-0.5cm}
	\begin{footnotesize}
		\begin{threeparttable}
\color{black}{
\resizebox{\linewidth}{!}{
			\begin{tabular}{|c|c|c|c|c|c|c|}
				\hline
				&&&&&&\\[-6pt]
				Category & Method & {Convergence Results} & Required Conditions
				& {w/o LLC} & Constraints & {Pessimistic} \\			
				&&&&&&\\[-6pt]
				\hline
				&&&&&&\\[-6pt]
				\multirow{8}{*}{EGBMs} & \multirow{2}{*}{FHG/RHG}
				& 
				Asymptotic 
				&    \multirow{1}{*}{$F(\x,\y)$ and $f(\x,\y)$ are $C^1$. }
				& \multirow{2}{*}{\XSolidBrush} & \multirow{2}{*}{\XSolidBrush} & \multirow{2}{*}{\XSolidBrush}\\
				
				& &   $\inf\limits_{\x \in \X}\varphi_k(\x) \rightarrow
				\inf\limits_{\x \in \X}\varphi(\x)$
				& $\S(\x)$ is a singleton. & && \\

				&&&&&&\\[-6pt]
				\cline{2-7}
				&&&&&&\\[-6pt]
				& \multirow{2}{*}{TRHG} 				
				& Non-asymptotic 
				&    $F(\x,\y)$ is $C^1$ and bounded below. 
				& \multirow{2}{*}{\XSolidBrush} & \multirow{2}{*}{\XSolidBrush} & \multirow{2}{*}{\XSolidBrush} \\
				
				& &    \multirow{1}{*}{$\x_k {\longrightarrow} \widehat{\x}^*$}
				& $f(\x,\y)$ is $C^1$, $L_f$-smooth and strongly convex.
				& && \\

				&&&&&&\\[-6pt]
				\cline{2-7}
				&&&&&&\\[-6pt]
				& \multirow{3}{*}{BDA} 				
				& 
				Asymptotic 
				&    
				$F(\x,\y)$ is $L_F$-smooth, convex, bounded below.
				& \multirow{3}{*}{\XSolidBrush} & \multirow{3}{*}{\XSolidBrush} & \multirow{3}{*}{\XSolidBrush}\\
				& & $\inf\limits_{\x \in \X}\varphi_k(\x) \rightarrow
				\inf\limits_{\x \in \X}\varphi(\x)$
				&\multirow{2}{*}{$f(\x,\y)$ is $L_f$-smooth. $\S(\x)$ is a singleton.}
				& && \\
				
				&&&&&&\\[-6pt]
				\hline
				&&&&&&\\[-6pt]
				\multirow{3}{*}{IGBMs} & \multirow{3}{*}{CG/Neumann}
				& Non-asymptotic 
				&    $F(\x,\y)$ and $f(\x,\y)$ are $C^1$.				
				& \multirow{3}{*}{\XSolidBrush} & \multirow{3}{*}{\XSolidBrush} & \multirow{3}{*}{\XSolidBrush}\\
				
				& &  \multirow{1}{*}{$\x_k {\longrightarrow} \widehat{\x}^*$}
				& $\frac{\partial^2 f(\x,\y)}{\partial \y \partial \y} $ is invertible. $\S(\x)$ is a singleton.
				& && \\

				&&&&&&\\[-6pt]
				\hline
				&&&&&&\\[-6pt]
				\multirow{3}{*}{Ours} & \multirow{3}{*}{BVFSM}				
				& Asymptotic 
				&  $F(\x,\y)$ and $f(\x,\y)$ are $C^1$  
				& \multirow{3}{*}{\CheckmarkBold} & \multirow{3}{*}{\CheckmarkBold} & \multirow{3}{*}{\CheckmarkBold}\\
				
				& &  $\inf\limits_{\x \in \X}\varphi_k(\x) \rightarrow
				\inf\limits_{\x \in \X}\varphi(\x)$
				&   \multirow{2}{*}{and level-bounded.}
				& && \\
				\hline
			\end{tabular}
}
}
			\begin{tablenotes}
				\footnotesize
				\item[1] $C^1$ denotes continuously differentiable.
				$L_f$ (or $L_F$)-smooth means the gradient of $f$ (or $F$) is Lipschitz continuous with Lipschitz constant $L_f$ (or $L_F$).
				``Level-bounded" is short for ``level-bounded in $\y$ locally uniformly in $\x\in\X$".
				\item[2] 
				$\widehat{\x}^*$ denotes the stationary point.
				
			\end{tablenotes}
		\end{threeparttable}
	\end{footnotesize}
\end{table*}

\begin{table*}[htbp]
	\caption{
		Complexity of existing gradient-based methods and BVFSM.
		We show the key update ideas for calculating $G(\x)$ in $\frac{\partial\varphi(\x)}{\partial \x}$. 
		Please see ~\cite{franceschi2017forward,pedregosa2016hyperparameter,lorraine2020optimizing} for more details of EGBMs and IGBMs.  
		Note that our method avoids solving an unrolled dynamic system or approximating the inverse of Hessian.
	} 
	\label{table complexity}
	\vspace {-0.7cm}
\color{black}{
	\begin{center}
\resizebox{\linewidth}{!}{
			\begin{tabular}{|c|c|cc|c|c|}
				\hline
				&&&&&\\[-6pt]
				Category&Method&\multicolumn{2}{c|}{Key point for calculating $G(\x)$}&Time&Space\\
				&&&&&\\[-6pt]
				\hline
				&&&&&\\[-6pt]
				 \multirow{8}{*}{EGBMs}&\multirow{2}{*}{FHG}&\multirow{2}{*}{$ G(\x) \approx \Z_{T}^\top \frac{\partial F(\x,\y_T)}{\partial \y}$}&\multirow{2}{*}{$\Z_{t}= \frac{\partial^2 f}{\partial \y^2}\Z_{t-1} +  \frac{\partial^2 f}{\partial \y\partial \x }$}
				&\multirow{2}{*}{$O(m^2nT)$}&\multirow{2}{*}{$O(mn)$}\\
				&&&&&\\
				&\multirow{2}{*}{RHG}&\multirow{2}{*}{$G(\x) \approx \q_{-1}$}&\multirow{4}{*}{$ \q_{t-1}=\q_{t}+ \left( \frac{\partial^2 f}{\partial \x \partial \y}\right)^\top\p_t, \ \p_{t-1}=\left( \frac{\partial^2 f}{\partial \y^2}\right)^\top  \p_{t}$}&\multirow{2}{*}{$O(n(m+n)T)$}&\multirow{2}{*}{$O(m+nT)$}\\
				&&&&&\\
				&\multirow{2}{*}{TRHG}&\multirow{2}{*}{$G(\x) \approx \q_{I-1}$}&&\multirow{2}{*}{$O(n(m+n)I)$}&\multirow{2}{*}{$O(m+nI)$}\\
				&&&&&\\
				&\multirow{2}{*}{BDA}&\multirow{2}{*}{$G(\x) \approx \q_{-1}$}&\multirow{2}{*}{Same as RHG, but replace $f$ with $(1-\alpha)f+\alpha F$
				}&\multirow{2}{*}{$O(n(m+n)T)$}&\multirow{2}{*}{$O(m+nT)$}\\
				&&&&&\\
				&&&&&\\[-6pt]
				\hline
				&&&&&\\[-6pt]
				\multirow{4}{*}{IGBMs}&\multirow{2}{*}{CG}&\multirow{4}{*}{$G(\x) \approx -\left( \frac{\partial^2  f(\x,\y_T)}{ \partial \y \partial \x}\right)^\top\q$}&\multirow{2}{*}{$ \frac{\partial^2 f}{\partial \y^2} \q=\frac{\partial F}{\partial \y} $}&\multirow{2}{*}{$O(m+nT+n^2Q)$}&\multirow{2}{*}{$O(m+n)$}\\
				&&&&&\\
				&\multirow{2}{*}{Neumann}&&\multirow{2}{*}{$\q=\sum_{i=0}^{Q}\left(\mathbf{I}-\frac{\partial^{2} f}{\partial \mathbf{y}^2}\right)^{i}\frac{\partial F}{\partial \y}$}&\multirow{2}{*}{$O(m+nT+n^2Q)$}&\multirow{2}{*}{$O(m+n)$} \\
				&&&&&\\
				&&&&&\\[-6pt]
				\hline
				&&&&&\\[-6pt]
				\multirow{3}{*}{Ours}&\multirow{3}{*}{BVFSM}&\multirow{3}{*}{
				$G(\x) \approx \frac{ \partial \PP_{\sigma_k} \! \left( f \left( \x_l,\y_{k,l}^{T_{\y}} \right) - f_{k,l}^{T_{\z}}   \right) }{ \partial \x }	$
				}& 
				\multirow{3}{*}{$f_{k,l}^{T_{\z}} = 
				f(\x_l, \z_{k,l}^{T_{\z}}) 
				+ \frac{\mu_{k}}{2} \| \z_{k,l}^{{T_{\z}}} \| ^2$}
				&\multirow{3}{*}{$O(m+n(T_{\z}+T_{\y}))$}&\multirow{3}{*}{$O(m+n)$}
				\\
				&&& 
				&&\\     
				&&&&&\\
				\hline
			\end{tabular}
}
	\end{center}
}
\end{table*}

Now, by combining the above results,
we can obtain the desired epiconvergence result,
which also indicates the convergence of our method.
\textcolor{black}{
	Note that this is another type of the convergence of algorithm iterates in asymptotic convergence different from non-asymptotic convergence.
}

\begin{theorem} [Convergence for Optimistic BLO] \label{thm}
	$\quad$
	
	Let $\{(\mu_k, \theta_k, \sigma_k)\}$ be a positive sequence such that $( \mu_k, \theta_k, \sigma_k) \rightarrow 0$, also satisfying the same setting as in Lemma~\ref{lem1}. 
	\begin{enumerate}
		\item The epiconvergence holds:
			$$
				\varphi_k(\x)+\delta_{\X}(\x)\stackrel{e}{\longrightarrow}\varphi(\x)+\delta_{\X}(\x).
			$$
		\item We have the following inequality:
			\begin{equation*}
				\limsup \limits_{k \rightarrow \infty}\left(\inf \limits_{\x \in \X}\varphi_k(\x) \right)\leq \inf \limits_{\x \in \X}\varphi(\x).
			\end{equation*}
			In addition, if $\x_\ell \in \mathrm{argmin}_{\x \in \X}\varphi_\ell(\x)$ for some subsequence $\{\ell\} \subset \mathbb{N}$, and $\x_\ell$ converges to $\tilde{\x}$, 
			then $\tilde{\x} \in \mathrm{argmin}_{\x\in \X} \varphi(\x)$
			and
			\begin{equation*}
				\lim \limits_{\ell \rightarrow \infty}\left (\inf \limits_{\x \in \X}\varphi_\ell(\x) \right)= \inf \limits_{\x \in \X}\varphi(\x).
			\end{equation*}
	\end{enumerate}
	
%
\end{theorem}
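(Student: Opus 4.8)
The plan is to read both parts off Lemmas~\ref{lem3} and \ref{lem4}, using only the compactness (hence closedness) of $\X$ as an extra ingredient, and then to run the standard ``epiconvergence implies convergence of minimizers'' argument. All the analytic work has already been done in Lemmas~\ref{lem2}--\ref{lem4}, so the theorem is essentially a packaging step.

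For Part~(1), I would check the two conditions of Definition~\ref{def_epic} for the functions $\varphi_k+\delta_\X$ and $\varphi+\delta_\X$. For the liminf condition Eq.~\eqref{eq:epi condition 1}: fix $\x$ and an arbitrary sequence $\x_k\to\x$. If $\x\notin\X$, then since $\X$ is compact and hence closed, $\x_k\notin\X$ for all large $k$, so $\delta_\X(\x_k)=+\infty$ eventually and $\liminf_k(\varphi_k(\x_k)+\delta_\X(\x_k))=+\infty=\varphi(\x)+\delta_\X(\x)$. If $\x\in\X$, then $\delta_\X\ge 0$ gives $\varphi_k(\x_k)+\delta_\X(\x_k)\ge\varphi_k(\x_k)$, and Lemma~\ref{lem3} (which applies to any sequence converging to a point of $\X$) yields $\liminf_k\varphi_k(\x_k)\ge\varphi(\x)=\varphi(\x)+\delta_\X(\x)$. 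For the recovery condition Eq.~\eqref{eq:epi condition 2}: when $\x\notin\X$ the right-hand side is $+\infty$, so the constant sequence $\x_k\equiv\x$ works trivially; when $\x\in\X$, take again $\x_k\equiv\x$, so $\delta_\X(\x_k)=0$ and Lemma~\ref{lem4} gives $\limsup_k\varphi_k(\x)\le\varphi(\x)$. This establishes $\varphi_k+\delta_\X\stackrel{e}{\longrightarrow}\varphi+\delta_\X$.

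For Part~(2), I would first derive the limsup inequality on the infima: for any $\x\in\X$, the constant recovery sequence gives $\inf_{\X}\varphi_k\le\varphi_k(\x)$, so $\limsup_k\inf_{\X}\varphi_k\le\limsup_k\varphi_k(\x)\le\varphi(\x)$ by Lemma~\ref{lem4}, and taking the infimum over $\x\in\X$ yields $\limsup_k\inf_{\X}\varphi_k\le\inf_{\X}\varphi$. Next, suppose $\x_\ell\in\mathrm{argmin}_{\x\in\X}\varphi_\ell(\x)$ along a subsequence $\{\ell\}$ with $\x_\ell\to\tilde\x$; closedness of $\X$ forces $\tilde\x\in\X$. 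Applying the liminf condition from Part~(1) to this subsequence, together with $\varphi_\ell(\x_\ell)+\delta_\X(\x_\ell)=\varphi_\ell(\x_\ell)=\inf_{\X}\varphi_\ell$, gives $\liminf_\ell\inf_{\X}\varphi_\ell\ge\varphi(\tilde\x)$. Chaining the inequalities, $\varphi(\tilde\x)\le\liminf_\ell\inf_{\X}\varphi_\ell\le\limsup_\ell\inf_{\X}\varphi_\ell\le\inf_{\X}\varphi\le\varphi(\tilde\x)$, where the last step uses $\tilde\x\in\X$; hence all four quantities coincide, so $\lim_\ell\inf_{\X}\varphi_\ell=\inf_{\X}\varphi=\varphi(\tilde\x)$ and in particular $\tilde\x\in\mathrm{argmin}_{\x\in\X}\varphi(\x)$.

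I do not expect a genuine obstacle, since Lemmas~\ref{lem2}--\ref{lem4} carry the substance; the only point requiring care is the bookkeeping around the indicator $\delta_\X$ — namely, ensuring that when $\x\notin\X$ the liminf condition really produces the value $+\infty$ (this is exactly where closedness of $\X$ is used), and that the subsequence $\{\x_\ell\}$ of minimizers can legitimately be fed into the Part~(1) liminf inequality so as to squeeze $\inf_{\X}\varphi_\ell$ between $\varphi(\tilde\x)$ from below and $\inf_{\X}\varphi$ from above.
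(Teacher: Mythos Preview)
Your proposal is correct and, for Part~(1), essentially identical to the paper's proof: both verify the two conditions of Definition~\ref{def_epic} via Lemmas~\ref{lem3} and~\ref{lem4}, using closedness of $\X$ to handle the case $\x\notin\X$. For Part~(2), the paper simply invokes \cite[Proposition 4.6]{bonnans2013perturbation} once epiconvergence is established, whereas you spell out the standard squeeze argument directly; your version is more self-contained but otherwise equivalent in content.
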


\begin{proof}
	To prove the epiconvergence of $\varphi_k$ to $\varphi$, we just need to verify that the sequence $\{\varphi_k\}$ satisfies the two conditions given in Definition \ref{def_epic}. 
	Considering any sequence~$\{\x_k\}$ converging to $\x$, if $\x \in \X$, from Lemma \ref{lem3} we have
	\[
	\begin{aligned}
		\varphi(\x) + \delta_\X(\x) = \varphi(\x) 
		&\le  \liminf_{k \rightarrow \infty} \varphi_k(\x_k) \\ 
		&\le \liminf_{k \rightarrow \infty} \varphi_k(\x_k) + \delta_\X(\x_k).
	\end{aligned}
	\]
	When $\x \notin \X$, we have $\liminf_{k \rightarrow \infty} \varphi_k(\x_k) + \delta_\X(\x_k) = + \infty$ because $\X$ is closed. 
	Thus the first condition Eq.~\eqref{eq:epi condition 1} in Definition \ref{def_epic} is satisfied. 
	
	Next, for any $\x \in \mathbb{R}^m$, if $\x \in \X$, then it follows from Lemma \ref{lem4} that
	\[
	\begin{aligned}
		\limsup_{k \rightarrow \infty} \varphi_k(\x) + \delta_\X(\x) 
		& = \limsup_{k \rightarrow \infty} \varphi_k(\x) \\
		& \le \varphi(\x) 
		= \varphi(\x) + \delta_\X(\x).
	\end{aligned}
	\]
	When $\x \notin \X$, we have $\varphi(\x) + \delta_\X(\x) = + \infty$. 
	Thus, the second condition Eq.~\eqref{eq:epi condition 2} in Definition \ref{def_epic} is satisfied. Therefore, we get the conclusion (1) immediately from Definition \ref{def_epic},
	%
	and the conclusion (2) follows from \cite[Proposition 4.6]{bonnans2013perturbation}.
\end{proof}

Next, we consider the convergence for pessimistic BLO.
To begin with, for pessimistic BLO without functional constraints, we denote $\varphi^p_k(\x)$ similarly to the optimistic case: 
\[
\begin{aligned}	
	\varphi^p_k(\x) := &\varphi^p_{\mu_k,\theta_k,\sigma_k}(\x)  \\ 
	= & \max\limits_{\y} 
	\Big\{ 
	F(\x,\y) 
	-  \PP_{k} \! \left( f(\x,\y) - f_{k}^*(\x) \right)
	- \frac{\theta_k}{2} \| \y \|^2
	\Big\}, 
\end{aligned}
\]
where $
	f^*_{k}(\x) = \min_{\y} 
	\left\{ 
	f(\x,\y) + \frac{\mu_k}{2} \| \y \|^2
	\right\}.
$
Then we have the following corollary. 
Note that this convergence result can also be extended to pessimistic BLO with constraints easily.

\begin{corollary}[Convergence for Pessimistic BLO] \label{corollary}
	$\quad$
	
	Let $\{(\mu_k, \theta_k, \sigma_k)\}$ be a positive sequence such that $( \mu_k, \theta_k, \sigma_k) \rightarrow 0$, also satisfying the same setting as in Lemma~\ref{lem1}. 
	Then
		we have the following inequality:
		\begin{equation*}
			\limsup \limits_{k \rightarrow \infty}\left(\inf \limits_{\x \in \X}\varphi^p_k(\x) \right)\leq \inf \limits_{\x \in \X}\varphi^p(\x).
		\end{equation*}
		In addition, if $\x_\ell \in \mathrm{argmin}_{\x \in \X}\varphi^p_\ell(\x)$ for some subsequence $\{\ell\} \subset \mathbb{N}$, and $\x_\ell$ converges to $\tilde{\x}$, 
		then we have $\tilde{\x} \in \mathrm{argmin}_{\x\in \X} \varphi^p(\x)$
		and
		\begin{equation*}
			\lim \limits_{\ell \rightarrow \infty}\left (\inf \limits_{\x \in \X}\varphi^p_\ell(\x) \right)= \inf \limits_{\x \in \X}\varphi^p(\x).
		\end{equation*}
	
\end{corollary}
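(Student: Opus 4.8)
The plan is to reduce to the optimistic analysis by a sign flip and then isolate the one step that does not transcribe mechanically. Writing $\psi_k(\x):=-\varphi^p_k(\x)$ and $\psi(\x):=-\varphi^p(\x)$, we have
\[
\psi_k(\x)=\min_{\y}\Big\{-F(\x,\y)+\PP_k\big(f(\x,\y)-f^*_k(\x)\big)+\tfrac{\theta_k}{2}\|\y\|^2\Big\},\qquad \psi(\x)=\min\big\{-F(\x,\y):\y\in\S(\x)\big\},
\]
which are exactly the (unconstrained) optimistic functionals $\varphi_k,\varphi$ associated with the upper-level objective $-F$. Since $-F$ is continuously differentiable and — this is the pessimistic counterpart of Assumption~\ref{assumption 1}.(3), i.e. how the hypothesis ``$-F$ bounded below'' of Proposition~\ref{propPBO} must be strengthened for the convergence statement — level-bounded in $\y$ locally uniformly in $\x\in\X$, Lemmas~\ref{lem2}, \ref{lem3}, \ref{lem4} apply verbatim to $\psi_k,\psi$. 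In particular Lemma~\ref{lem3} gives, for every sequence $\X\ni\x_k\to\bar\x$, $\liminf_k\psi_k(\x_k)\ge\psi(\bar\x)$, i.e. $\limsup_k\varphi^p_k(\x_k)\le\varphi^p(\bar\x)$ — the ``recovery'' half of the convergence, and conveniently even for arbitrary sequences. The first assertion of the corollary then follows at once: $\inf_{\x\in\X}\varphi^p_k(\x)=-\sup_{\x\in\X}\psi_k(\x)$, and Lemma~\ref{lem3} applied to $\psi_k$ along the constant sequence $\x_k\equiv\x$ yields $\liminf_k\psi_k(\x)\ge\psi(\x)$ for each $\x\in\X$, hence $\liminf_k\sup_{\x\in\X}\psi_k(\x)\ge\sup_{\x\in\X}\psi(\x)$, which negates to $\limsup_k\big(\inf_{\x\in\X}\varphi^p_k(\x)\big)\le\inf_{\x\in\X}\varphi^p(\x)$.

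For the second assertion, let $\x_\ell\in\mathrm{argmin}_{\x\in\X}\varphi^p_\ell(\x)$ with $\x_\ell\to\tilde\x$ (so $\tilde\x\in\X$, $\X$ being closed). Then $\inf_{\x\in\X}\varphi^p_\ell(\x)=\varphi^p_\ell(\x_\ell)$, and the first assertion together with $\inf_{\x\in\X}\varphi^p\le\varphi^p(\tilde\x)$ gives $\limsup_\ell\varphi^p_\ell(\x_\ell)\le\inf_{\x\in\X}\varphi^p\le\varphi^p(\tilde\x)$. Thus if one can prove the matching lower bound
\[
\liminf_{\ell\to\infty}\varphi^p_\ell(\x_\ell)\ \ge\ \varphi^p(\tilde\x),
\]
then $\varphi^p(\tilde\x)\le\liminf_\ell\varphi^p_\ell(\x_\ell)\le\limsup_\ell\varphi^p_\ell(\x_\ell)\le\inf_{\x\in\X}\varphi^p\le\varphi^p(\tilde\x)$, so every inequality is an equality, which is precisely $\tilde\x\in\mathrm{argmin}_{\x\in\X}\varphi^p$ and $\lim_\ell\inf_{\x\in\X}\varphi^p_\ell=\inf_{\x\in\X}\varphi^p$. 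Everything therefore reduces to this lower bound, which is the epi-lower-semicontinuity inequality for $\{\varphi^p_k\}$ — the half of the convergence that the sign-flip reduction does \emph{not} deliver, since epiconvergence of $\psi_k=-\varphi^p_k$ (Theorem~\ref{thm} applied to $-F$) governs the \emph{minimization} of $-\varphi^p_k$, not its maximization.

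I expect this lower bound to be the main obstacle. The natural route is to exhibit a recovery point for the max defining $\varphi^p_\ell(\x_\ell)$: given $\epsilon>0$, choose $\hat\y\in\S(\tilde\x)$ with $F(\tilde\x,\hat\y)\ge\varphi^p(\tilde\x)-\epsilon$ (so $f(\tilde\x,\hat\y)=f^*(\tilde\x)$), and test the max at $\hat\y$ to get
\[
\varphi^p_\ell(\x_\ell)\ \ge\ F(\x_\ell,\hat\y)\ -\ \PP_\ell\big(f(\x_\ell,\hat\y)-f^*_\ell(\x_\ell)\big)\ -\ \tfrac{\theta_\ell}{2}\|\hat\y\|^2;
\]
since $F(\x_\ell,\hat\y)\to F(\tilde\x,\hat\y)$ and $\theta_\ell\|\hat\y\|^2\to0$, the claim reduces to $\PP_\ell\big(f(\x_\ell,\hat\y)-f^*_\ell(\x_\ell)\big)\to0$, after which $\epsilon\downarrow0$ finishes. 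For a constant sequence $\x_\ell\equiv\tilde\x$ this is the easy step of Lemma~\ref{lem4}: there $f(\tilde\x,\hat\y)-f^*_\ell(\tilde\x)\le f^*(\tilde\x)-f^*_\ell(\tilde\x)\le0$ because $f^*_\ell\ge f^*$, so the penalty vanishes by Lemma~\ref{lem1}.(2). But $\{\x_\ell\}$ is an arbitrary convergent sequence, and along it only $\liminf_\ell\big(f(\x_\ell,\hat\y)-f^*_\ell(\x_\ell)\big)\ge0$ is automatic — from $f(\x_\ell,\hat\y)\to f^*(\tilde\x)$ and $\limsup_\ell f^*_\ell(\x_\ell)\le f^*(\tilde\x)$ (Lemma~\ref{lem2}) — which is exactly the wrong sign; and when $f^*$ is discontinuous $f^*_\ell(\x_\ell)$ may stay strictly below $f^*(\tilde\x)$, sending $\PP_\ell(\cdot)$ to $+\infty$ and rendering this test point useless.

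Resolving this needs an ingredient beyond the optimistic argument, which I would package as a lemma in the spirit of Lemmas~\ref{lem2}--\ref{lem4}: for instance a mild regularity hypothesis making $f^*$ continuous (or at least lower semicontinuous) on $\X$; or uniform boundedness of the regularized lower-level solutions $\z^*_{\mu_\ell}(\x)$ for $\x$ near $\X$ — which would let one test near $\z^*_{\mu_\ell}(\x_\ell)$, where $f(\x_\ell,\cdot)-f^*_\ell(\x_\ell)\le0$ holds by construction, at the price of controlling $F$ along that choice; or a compatibility condition linking the decay of $(\mu_\ell,\theta_\ell,\sigma_\ell,\eta_\ell)$ to the speed of $\x_\ell\to\tilde\x$, so that the modified-barrier slack $\eta_\ell$ of Lemma~\ref{lem1} absorbs the small positive part of $f(\x_\ell,\hat\y)-f^*_\ell(\x_\ell)$. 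Once that lower bound is in hand, the remaining conclusions are the bookkeeping already carried out above.
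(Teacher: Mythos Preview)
Your reduction via $\psi_k:=-\varphi^p_k$, $\psi:=-\varphi^p$ is exactly the paper's approach: its proof says that ``for Lemma~\ref{lem3} and~\ref{lem4}, their corresponding results can be derived simply by replacing $F$ in their proof with $-F$,'' then asserts epiconvergence of $\varphi^p_k+\delta_\X$ to $\varphi^p+\delta_\X$ and invokes the same Bonnans--Shapiro proposition as in Theorem~\ref{thm}. Your derivation of the first assertion from Lemma~\ref{lem3} (applied to $-F$) along constant sequences is also in the same spirit.

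You have, however, isolated a genuine subtlety that the paper's four-line proof does not confront. Replacing $F$ by $-F$ in Lemmas~\ref{lem3} and~\ref{lem4} yields $\liminf_k\psi_k(\x_k)\ge\psi(\bar\x)$ along all sequences and $\limsup_k\psi_k(\x)\le\psi(\x)$ at fixed $\x$ --- i.e.\ epiconvergence of $\psi_k=-\varphi^p_k$, which is \emph{hypo}convergence of $\varphi^p_k$. Convergence of \emph{minimizers} of $\varphi^p_k$ over $\X$ requires \emph{epi}convergence of $\varphi^p_k$, in particular $\liminf_\ell\varphi^p_\ell(\x_\ell)\ge\varphi^p(\tilde\x)$ for an arbitrary convergent sequence $\x_\ell\to\tilde\x$, and the sign flip alone does not deliver this half. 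Your diagnosis of the obstruction is accurate: the proof of Lemma~\ref{lem4} relies on $f(\bar\x,\hat\y)-f^*_k(\bar\x)\le0$ at a \emph{fixed} $\bar\x$ (via $f^*_k\ge f^*$), whereas along a moving $\x_\ell$ one only controls $\limsup_\ell f^*_\ell(\x_\ell)$ through Lemma~\ref{lem2}, which is the wrong direction for forcing $\PP_\ell\big(f(\x_\ell,\hat\y)-f^*_\ell(\x_\ell)\big)\to0$. The paper does not introduce any additional hypothesis (lower semicontinuity of $f^*$, a compatibility condition on the decay of $(\mu_\ell,\theta_\ell,\sigma_\ell)$ relative to $\x_\ell\to\tilde\x$, etc.) to close this; your analysis is therefore more careful than the paper's own argument on precisely the point you flag as the main obstacle.
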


\begin{proof}
	Based on the proof of Theorem~\ref{thm}, we first need to prove 
	$
		\varphi^p_k(\x)+\delta_{\X}(\x)\stackrel{e}{\longrightarrow}\varphi^p(\x)+\delta_{\X}(\x)
	$
	by Lemma~\ref{lem1},~\ref{lem2},~\ref{lem3}, and~\ref{lem4}.
	Lemma~\ref{lem1} and~\ref{lem2} are unrelated to whether it is the optimistic or pessimistic case, and thus holds naturally.
	The corresponding results for Lemma~\ref{lem3} and~\ref{lem4} can be derived simply by replacing $F$ in their proof with $-F$.
	Then the conclusion can be obtained by the process same to~Theorem~\ref{thm}.
\end{proof}

In Table~\ref{table conditions}, we present the comparison among existing methods and our BVFSM. 
\textcolor{black}{
It can be seen that under mild assumptions, BVFSM is able to achieve {asymptotic} convergence without the LLC restriction, and be applied in BLO with constraints and pessimistic BLO,
which is not available by other methods.
In addition, as shown in Theorem~\ref{thm}, our asymptotic convergence is obtained from the epiconvergence property, which is a stronger result than solely asymptotic~convergence. 
}

\subsection{Complexity Analysis}\label{sec:complexity}
%
In this part, we compare the time and space complexity of Algorithms~\ref{alg:ours} 
with EGBMs (i.e., FHG, RHG, TRHG and BDA) and IGBMs (i.e., CG and Neumann) 
for computing $G(\x)$ or $\frac{\partial\varphi(\x)}{\partial \x}$, 
i.e., the direction for updating variable $\x$.
Table~\ref{table complexity} summarizes the complexity results.
Our complexity analysis follows the assumptions in \cite{liu2018darts}. 
\textcolor{black}{
Note that BVFSM has an order of magnitude lower time complexity with respect to the LL dimension $n$ compared to existing methods.}
For all existing methods, we assume 
solving the optimal solution of the LL problem,
also the transition function $\Phi$ in EGBMs for obtaining $\y_T(\x)$,
is the process of a $T$-step gradient descent.

%

\textbf{EGBMs.} 
As discussed in~\cite{franceschi2017forward,shaban2019truncated}, 
after implementing $T$ steps of gradient descent with time and space complexity of~$O(n)$ to solve the LL problem,
FHG for forward calculation of Hessian-matrix product can be evaluated in time $O(m^2nT)$ and  space $O(mn)$, and RHG for reverse calculation of Hessian- and Jacobian-vector products can be evaluated in time $O(n(m+n)T)$ and space $O (m+nT)$. 
TRHG truncates the length of back-propagation trajectory to~$I$ after a $T$-step gradient descent, 
and thus reduces the time and space complexity to $O(n(m+n)I)$ and  space $O(m+nI)$.
BDA uses the same idea to RHG, except that it combines UL and LL objectives during back propagation,
so the order of complexity of time and space is the same to RHG.
The time complexity for EGBMs to calculate the UL gradient is proportional to $T$, the number of iterations of the LL problem,
and thus EGBMs take a large amount of time to ensure convergence.

\textbf{IGBMs.} After implementing a $T$-step gradient descent for the LL problem, IGBMs approximate the inverse of Hessian matrix
 by conjugate gradient (CG), which solves a linear system of $Q$ steps,
or by Neumann series.
Note that each step of CG and Neumann method includes Hessian-vector products, 
requiring $O(m+n^2Q) $ time and $O(m+n)$ space,
so IGBMs run in time $O(m+nT+n^2Q)$ and space~$O(m+~n)$. 
IGBMs decouple the complexity of calculating the UL gradient from being proportional to $T$, 
but the iteration number $Q$ always relies on the properties of the Hessian matrix, 
and in some cases, $Q$ can be much larger than $T$.

\textbf{BVFSM.} 
In our algorithm, it takes time $O(nT_{\z})$ and space~$O(n)$
to calculate $T_{\z}$ steps of gradient descent on Eq.~\eqref{eq:z*} for the solution of LL problem $\z^{T_{\z}}$. 
Then $T_{\y}$ steps of gradient descent on Eq.~\eqref{eq:y*} are used to calculate $\y^{T_{\y}}$, which requires time $O(nT_{\y})$ and space $O(n)$.
After that, the direction can be obtained according to the formula given in Eq.~\eqref{eq:a_g_vphi} by several computations of the gradient $\frac{\partial f}{\partial \x}$ and $\frac{\partial F}{\partial \x}$ without any intermediate update, which requires time $O(m)$ and space $O(m+n)$.
Therefore, BVFSM runs in time $O(m+n(T_{\z}+T_{\y}))$ and space $O(m+n)$ for each iteration.

It can be observed from Table~\ref{table complexity} that BVFSM needs less space than EGBMs,
and 
it takes much less time than EGBMs and IGBMs, 
especially when $n$ is large, meaning the LL problem is high-dimensional, 
such as in application tasks with a large-scale network. 
Overall, this is because BVFSM does not need any computation of Hessian- or Jacobian-vector products for solving the unrolled dynamic system by recurrent iteration or approximating the inverse of Hessian.
Its complexity only comes from calculating the gradients of $F$ and $f$,
which is much easier than calculating Hessian- and Jacobian-vector products (even by AD).
Besides, although BVFSM has the same order of space complexity to IGBMs,
it is indeed smaller,
{because the memory is saved by eliminating the need to save the computational graph used for calculating Hessian.}
We will further verify these advantages through numerical results in Section~\ref{sec:experiments}.


\begin{figure*}[!tbp]
	\centering  
	{	
		\subfigure[Convergence with the initial point $(\x,\y)=(8,8,8)$]{
			\label{subfig:initial points 888}
			\includegraphics[height=3cm,width=4cm]{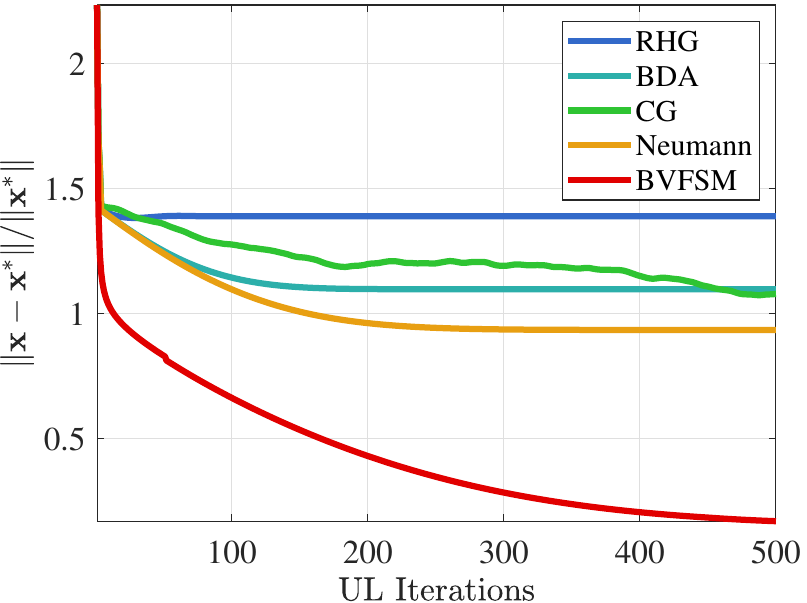}
			\includegraphics[height=3cm,width=4cm]{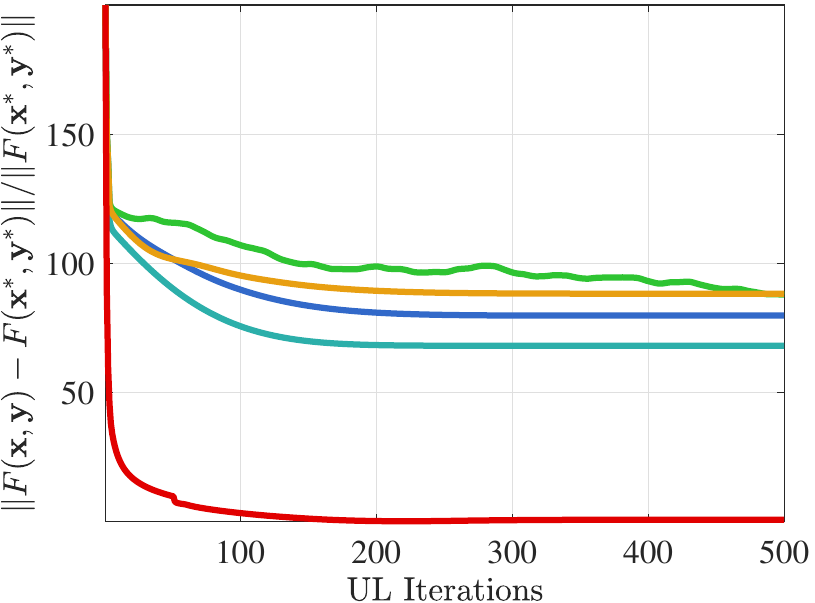}  
		}
		\subfigure[Convergence with the initial point $(\x,\y)=(0,0,0)$]{
			\label{subfig:initial points 000}
			\includegraphics[height=3cm,width=4cm]{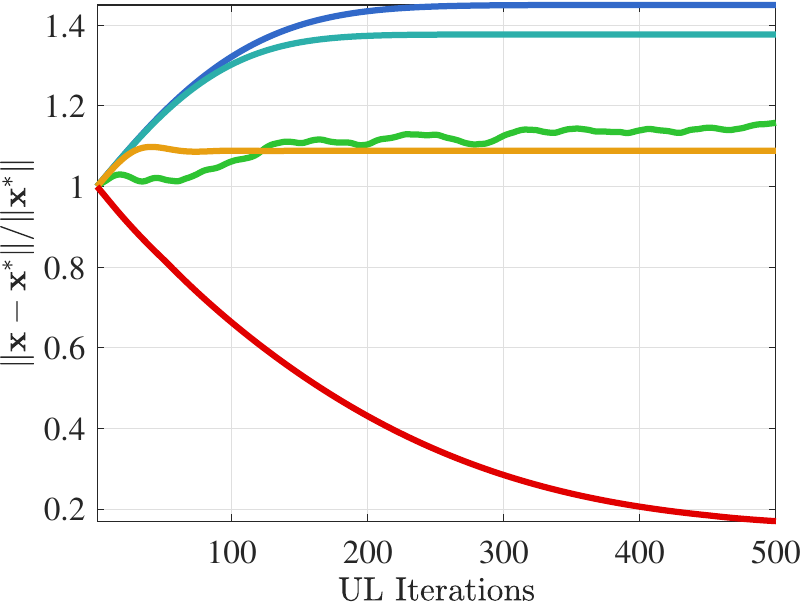}  
			\includegraphics[height=3cm,width=4cm]{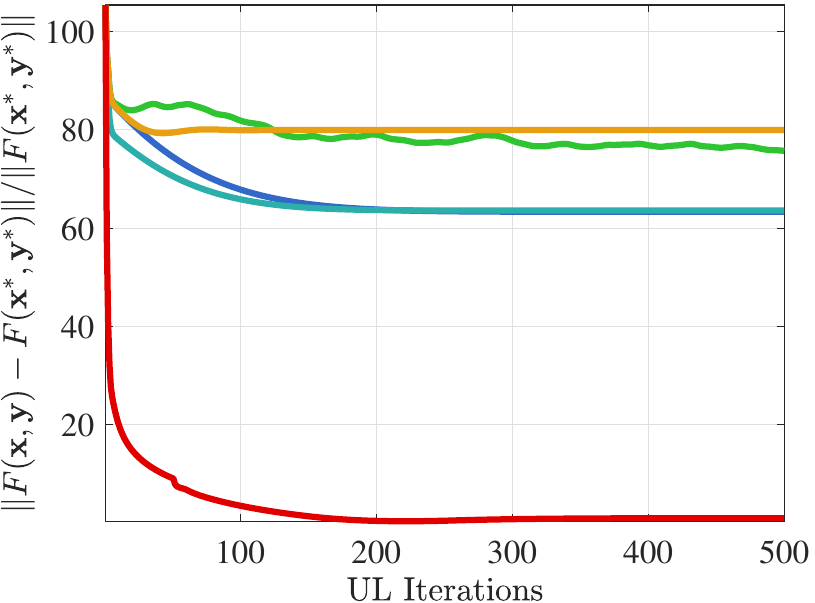} 
		}
	}	
	\caption{Convergence behavior of gradient-based BLO algorithms with different initial points. The (1st,3rd) and (2nd,4th) subfigures respectively show the errors of UL variable (i.e., $\|{\x}-{\x}^*\|/\|{\x}^*\|$) and UL objective (i.e., $\| F({\x},{\y})-F({\x^*},{\y^*})\|/ \| F({\x^*},{\y^*})\|$).
		The legend is only shown in the first~plot.
	}
	\label{fig:convergence}
\end{figure*}

\section{Experimental Results}\label{sec:experiments}
In this section, we quantitatively demonstrate the performance of our BVFSM\footnote{\textcolor{black}{Code is available at \url{https://github.com/vis-opt-group/BVFSM}.}}, especially when dealing with complicated and high-dimensional problems.
We start with investigating the convergence performance, computational efficiency, and effect of hyper-parameters on numerical examples in Section~\ref{sec:8.1}. 
In Section~\ref{sec:ho}, we apply BVFSM in the hyper-parameter optimization for the data hyper-cleaning task 
under different settings including the type of auxiliary functions, contamination rates, and  various network structures.
To further validate the generality of our method, we conduct experiments on other tasks such as few-shot learning in Section~\ref{sec:meta} and GAN in Section~\ref{sec:gan}.
The experiments were conducted on a PC with Intel Core i7-9700K CPU (4.2 GHz), 
32GB RAM and an NVIDIA GeForce RTX 2060S GPU with 8GB VRAM, and the algorithm was implemented using PyTorch 1.6. 
We use the implementation in~\cite{grefenstette2019generalized,grazzi2020iteration} for the existing methods, 
and use MB (MegaByte) and S (Second) as the evaluation units of space and time complexity, respectively. 
\textcolor{black}{
	Regarding the selection of coefficients and hyper-parameters, we evaluate them in numerical experiments and use the same method to select them in later tasks. 
	Furthermore, 
	we set $\y_{k,l}^{0}=\y_{k,l-1}^{T_{\y}}$, $\z_{k,l}^{0}=\z_{k,l-1}^{T_{\z}}$ to initialize each step of the sub-problems. 
	In view of the optimizer, we use SGD for solving LL and UL sub-problems in numerical experiments. 
	In some applications, we change the UL optimizer to Adam to speed up the convergence.
}

\subsection{Numerical Evaluations}\label{sec:8.1}
\subsubsection{\textcolor{black}{ Optimistic BLO }} \label{sec:toy optimistic}
\textcolor{black}{We start with the optimistic BLO, and}
use the numerical example with a non-convex LL which can adjust various dimensions to validate 
\textcolor{black}{the effectiveness of BVFSM} over existing methods.
In particular, 
consider
\begin{equation}
    \label{eq:non-convexExperiment}
    \begin{aligned}
        & \min_{ \x \in \mathbb{R}, \y \in \mathbb{R}^n} \| \x-a\|^2+\| \y-a-\c\|^2 \\
	    & \text{\ s.t.\ }\;  [\y]_i \in \underset{ [\y]_{i} \in \mathbb{R}}{\mathrm{argmin}}\; \sin( \x+ [\y]_i-[\c]_i), \forall \ i,
    \end{aligned}
\end{equation}
where $[\y]_i$ denotes the $i$-th component of $\y$, 
while $a \in \mathbb{R}$ and $\c\in \mathbb{R}^n$ are adjustable parameters. 
Note that here $\x \in \mathbb{R}$ is a one-dimensional real number,
but we still use the bold letter to represent this scalar to maintain the context consistency.
The solution of such problem is
$
\x^*=\frac{(1-n) a+n C}{1+n}, \ \text { and } \ [\y^*]_{i}=C+[\c]_i-\x^* , \forall \ i,
$
where
$
C=\underset{{k}}{\operatorname{argmin}}\left\{\|C_k-2 a\| : C_k=-\frac{\pi}{2}+2 k \pi, k \in \mathbb{Z}\right\} ,
$
and the optimal value is $F^{*}=\frac{n(C-2 a)^{2}}{1+n}$\footnote{	
\textcolor{black}{Derivation of the closed-form solution is provided in Appendix~\ref{sec:appendix closed-form solution}, available at \url{https://arxiv.org/abs/2110.04974}.}}. 
This example satisfies all the assumptions of BVFSM, but does not meet the LLC assumption in ~\cite{pedregosa2016hyperparameter,rajeswaran2019meta,lorraine2020optimizing}, which makes it a good example to validate the advantages of BVFSM. 
\textcolor{black}{In the following experiments we set $a=2$ and $[\c]_i=2, \text{ for any }i = 1,2,\cdots,n$.}

We compare BVFSM with several gradient-based optimization methods, including RHG, BDA, CG and Neumann. 
Note that they all assume the solution of the LL problem is unique except BDA, 
so for these methods we directly regard the obtained local optimal solutions of LL problems
as the unique solutions.
We set $T=100$ for RHG and BDA, $T=100$, $Q=20$ for CG and Neumann, 
the aggregation parameters equal to $0.5$ in BDA,
and $(\mu_{k}, \theta_k, \sigma_k^{(1)})=(1.0, 1.0, 1.0)/1.01^k$, $\sigma_k^{(2)}=f(\x_k,\y_k)+1$, 
step size $\alpha=0.01$,  $T_{\z}=50$, $T_{\y}=25$, and $L=1$ in BVFSM.


\begin{table}[tbp]
	\centering
	
	\caption{Errors of UL variable $\|{\x}-{\x}^*\|/\|{\x}^*\|$ 
		with large-scale LL of $n$ dimension.}
\vspace{-0.2cm}
	\begin{tabular}{|c|c|c|c|c|c|}
		
		\hline
		$n$ & RHG   & BDA   & CG    & Neumann & BVFSM \\
		\hline
		50    & 2.296 & 2.336 & 2.058 & 2.260 &\textbf{0.117} \\
		100   & 2.253 & 2.294 & 2.073 & 2.236 & \textbf{0.159} \\
		150   & 2.213 & 2.253 & 2.032 & 2.202 & \textbf{0.190} \\
		200   & 2.187 & 2.227 & 1.972 & 2.178 & \textbf{0.209} \\
		\hline	
	\end{tabular}%
	\label{tab:highcon}%
\end{table}%

\textcolor{black}{\textbf{Convergence performance.} }
Figure~\ref{fig:convergence} compares the convergence curves of UL variable $\x$ and objective $F(\x,\y)$ in the 2-dimensional case ($n=2$).
Here the optimal solution is $(\x^*,\y^*)=(-2/3+\pi,8/3+\pi/2,8/3+\pi/2)$. 
In order to show the impact of initial points, we also set different initial points.
From Figure~\ref{subfig:initial points 888}, when the initial point is $(\x,\y)=(8,8,8)$, 
existing methods show the trend of convergence at the beginning of iteration, but they soon stop further converging
due to falling into a local optimal solution. 
\textcolor{black}{
Furthermore, when the initial point is $(\x,\y)=(0,0,0)$ in Figure~\ref{subfig:initial points 000}, 
existing methods show a trend that the distance to the optimal solution even increases
during the whole iterative process 
because they incorrectly converge to the local solution away from the global solution.
}
On the contrary, our method can converge to the optimal solution under different initial points.
Table~\ref{tab:highcon} further verifies the convergence performance for larger-scale problems of various LL dimension $n$.
It shows that our method can still maintain good convergence performance with high-dimensional LL, 
while existing methods fail because they cannot solve the non-convex LL with convergence guarantee.

\begin{figure} [tbp]
	\centering  
\color{black}{
	\includegraphics[height=3cm,width=7.5cm,trim=0 50 0 50]{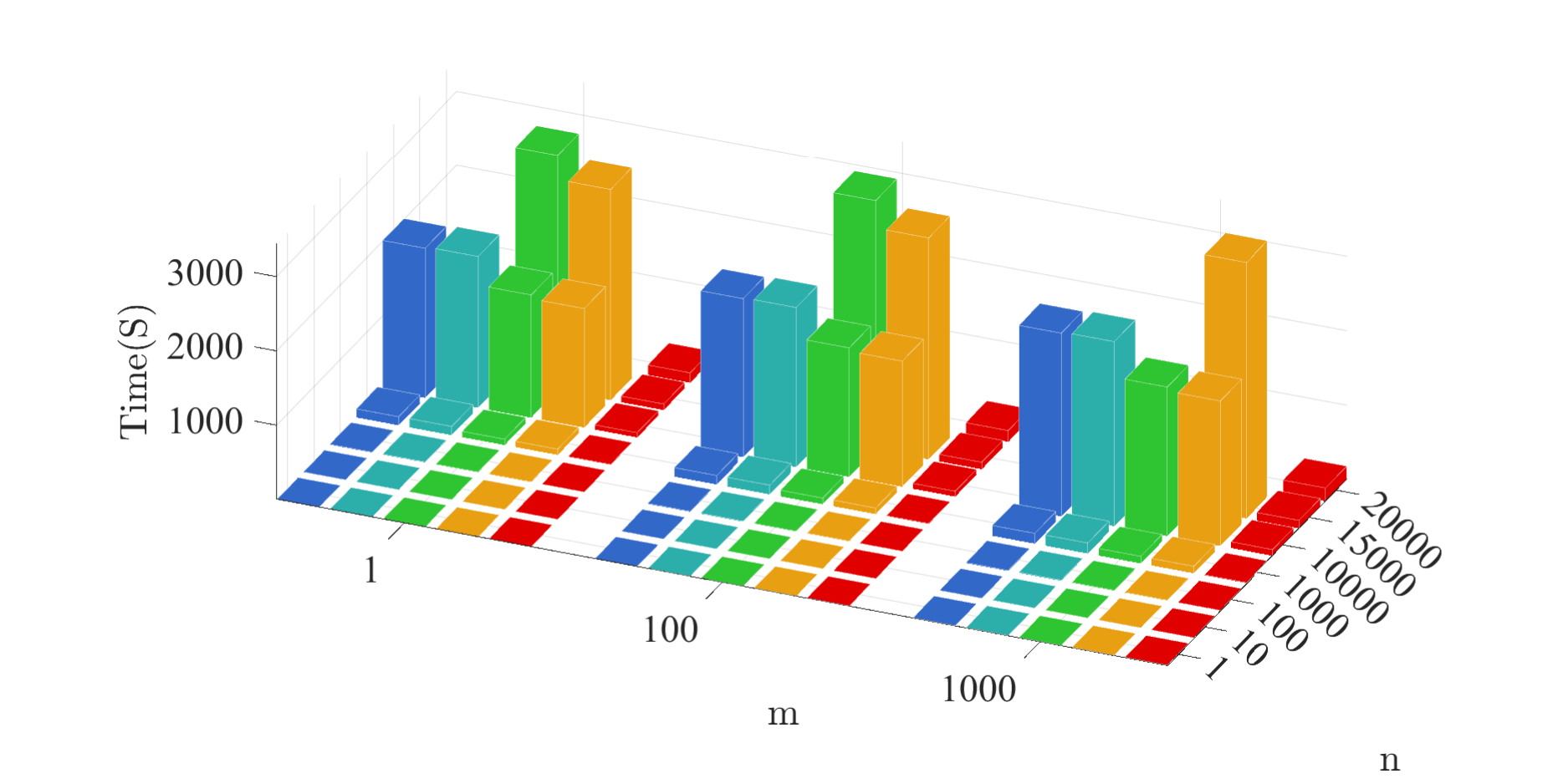}  
	\includegraphics[height=2cm,width=1cm,trim=0 -50 0 -50]{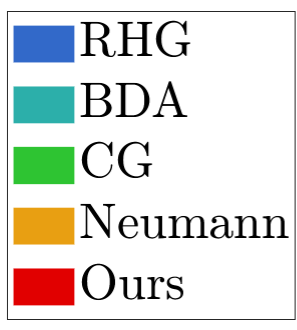}
	\caption{\textcolor{black}{Computation time (seconds, S) in each step for calculating the gradient of $\x$ under various scales $n$ and $m$ (LL and UL dimensions). Blank areas not drawn indicate that the 3600-second time limit is~exceeded}.
	}
	\label{fig:Timesimu}  
}
\end{figure}

\begin{table}[!tbp]
	\centering
\color{black}{
	\caption{
	The largest LL dimension $n$ that can be achieved by different methods 
	for a single-step computation within 3600 seconds.}
\vspace{-0.2cm}
	\begin{tabular}{|c|c|c|c|c|c|}

		\hline
		 & RHG   & BDA   & CG    & Neumann & BVFSM \\
		\hline
		$n$& 13089 & 12871 & 15093 & 18118 &\textbf{283200} \\
		\hline	
	\end{tabular}%
	\label{tab:time}%
}
\end{table}%



\textcolor{black}{\textbf{Computational efficiency for large-scale problems.} }
{Figure~\ref{fig:Timesimu} } compares the computation time for problems under various scales $n$ and $m$. 
Note that the scale-up of UL dimension $m$ can be achieved by converting the one-dimensional~$\x$ to the mean of multi-dimensional $\x$. 
As we can see, our method costs the least computation time for problems of all scales, and the LL dimension $n$ has much more influence than the UL dimension $m$. 
\textcolor{black}{
Table~\ref{tab:time} shows the largest LL dimension within the 3600-second time limit.}
This allows us to apply BVFSM to more complex LL problems, which existing methods cannot deal with. 
We attribute these superior results to our novel way of the re-characterization via value-function.
We further explore our performance on problems with complex network structures in Section~\ref{sec:ho}. 

\begin{figure} [tbp]
	\centering  
	\subfigure[Errors when $\mu_0,\theta_0=0.01$]{
		\label{subfig:sigma0.01}	
		\includegraphics[height=3cm,width=4cm]{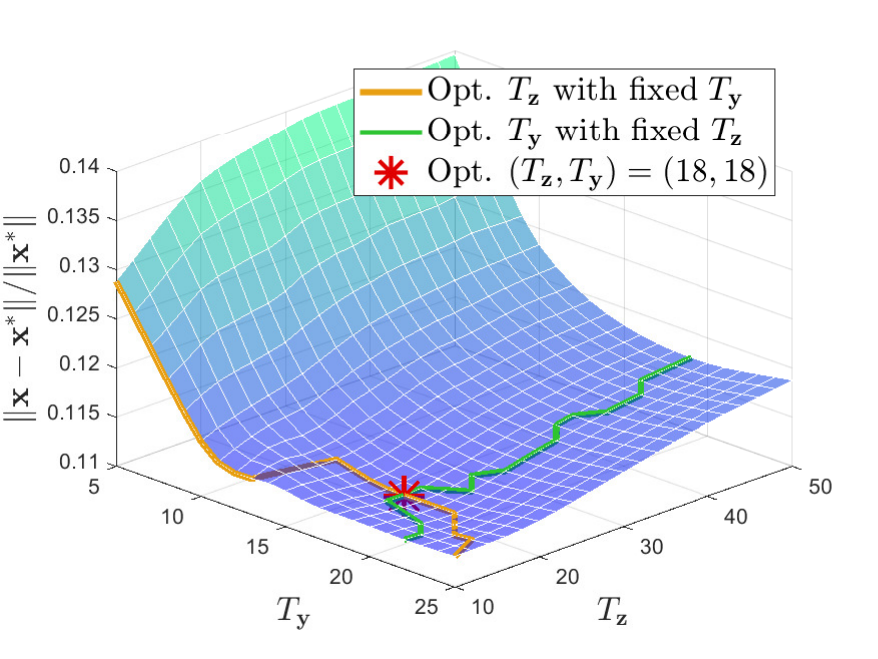} 
		\includegraphics[height=3cm,width=4cm]{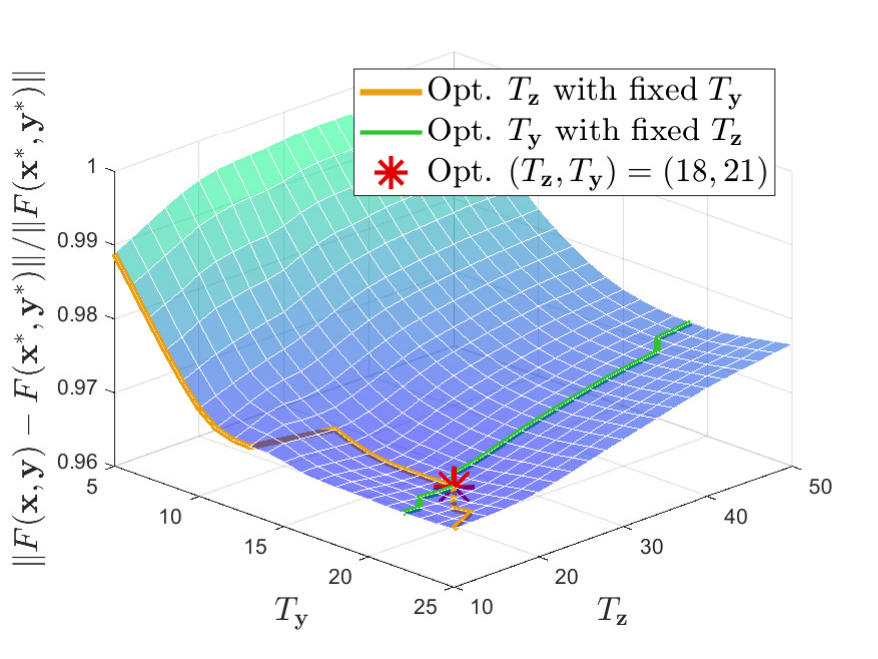} 
	}
	\subfigure[Errors when $\mu_0,\theta_0=0.001$]{
		\label{subfig:sigma0.001}
		\includegraphics[height=3cm,width=4cm]{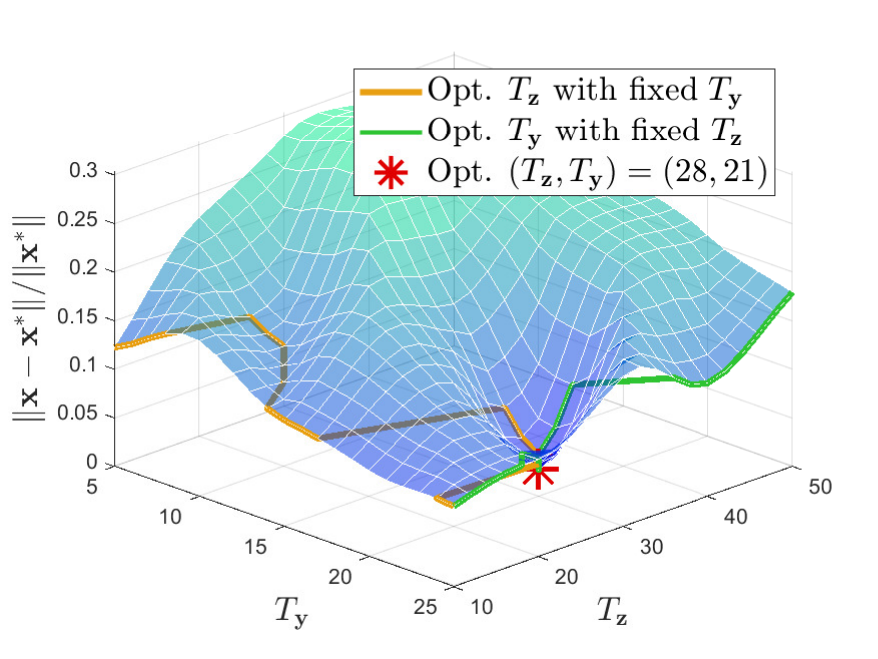} 
		\includegraphics[height=3cm,width=4cm]{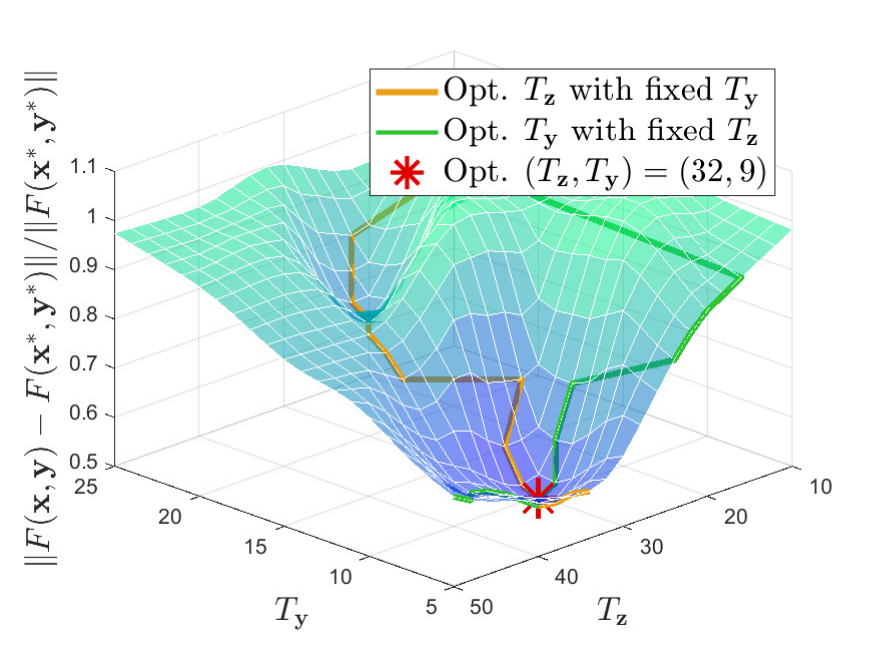} 
	}
	\caption{ 
		Effect of specified $T_{\z}$, $T_{\y}$, $\mu$ and~$\theta$ on the solution errors. 
		Here $T_{\z}$ and $T_{\y}$ are the number of iterations for gradient descent on LL sub-problems in Eq.~\eqref{eq:z*} and simple BLO problems in Eq.~\eqref{eq:y*}, 
		which affect the accuracy of obtained solution.
		The curves on the surfaces denote the optimal $T_{\z}$ (or $T_{\y}$) to minimize the error with fixed $T_{\y}$ (or $T_{\z}$).
		Subfigure~\subref{subfig:sigma0.01} shows that the increasing of $T_{\y}$ and $T_{\z}$ abnormally leads to the deviation of the solution, 
		because	with larger regularization coefficients $\mu$ and $\theta$, the regularized problems are away from the original problems.
		Subfigure~\subref{subfig:sigma0.001} shows that smaller $\mu$ and $\theta$ may not completely overcome the ill condition of the LL problem, resulting in the instability of the surfaces
		\textcolor{black}{(in the second plot we reverse the axis to make it easier to observe the trend)}.
    } 
	\label{fig:hyperparameters}
\end{figure}

\textcolor{black}{\textbf{Effect of hyper-parameters.}}
\textcolor{black}{We next evaluate the effect of various hyper-parameters in the 2-dimensional case ($n=2$).}
In Figure~\ref{fig:hyperparameters}, we compare the errors under different settings of~$T_{\z}$, $T_{\y}$, $\mu$ and~$\theta$. 
In Figure~\ref{subfig:sigma0.01},
with larger regularization coefficients $\mu$ and $\theta$, the regularized problems are away from the original problems.
Figure~\ref{subfig:sigma0.001} shows that smaller~$\mu$ and $\theta$ may not completely overcome the ill condition of the LL problem, 
which means the small coefficients cause the approximate problem to remain a little ill-conditioned,
resulting in the instability of the surfaces.
Hence, it is not an easy task to determine the regularization coefficients.
Since the selection of such parameters is often highly related to the specific problem, in order to maintain the fairness for comparing the computational burden with other methods, we set $T_{\z}+2T_{\y}=T$ in all experiments (because we need to calculate the gradient of two functions $F$ and $f$ for the $T_{\y}$-step gradient descent).
\textcolor{black}{
	Figure~\ref{fig:hyperparameter} shows the effect of $\mu$,~$\theta$, and~$\sigma$ on the convergence results.
	Figure~\ref{subfig:theta and mu} reveals the effect of regularization coefficients $\mu$ and~$\theta$. 
	We find that when $\mu=0$, the collapse may occur (with the collapse rate at around $44\%$), which indicates the necessity of adding the regularization term to avoid collapse and improve the computational stability.
	Figure~\ref{subfig:theta mu sigma} shows that it is a good choice to use smaller $\mu, \theta$ and larger $\sigma$ with a suitable decay factor to avoid the offset of solution and achieve better convergence.
	Figure~\ref{fig:L} further analyzes the effect of $L$, the number of inner-loop iterations, on the convergence speed. 
	It can be seen that the smaller $L$ is, the higher convergence speed can be obtained, so we set~$L=1$ in all experiments.
}

\begin{figure} [tbp]
	\centering  
\color{black}{
	\subfigure[Effect of $\mu_0$ and $\theta_0$]{ 
		\label{subfig:theta and mu}
		\includegraphics[height=3cm,width=4cm]{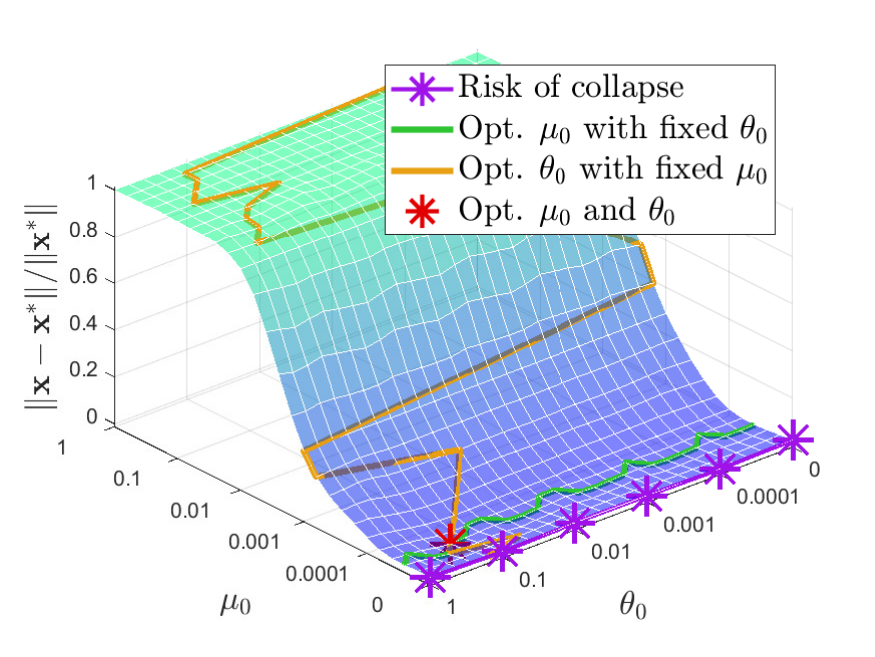} }
	\subfigure[Effect of ($\mu_0,\theta_0$) and $\sigma_0$]{ 		
		\label{subfig:theta mu sigma}
		\includegraphics[height=3cm,width=4cm]{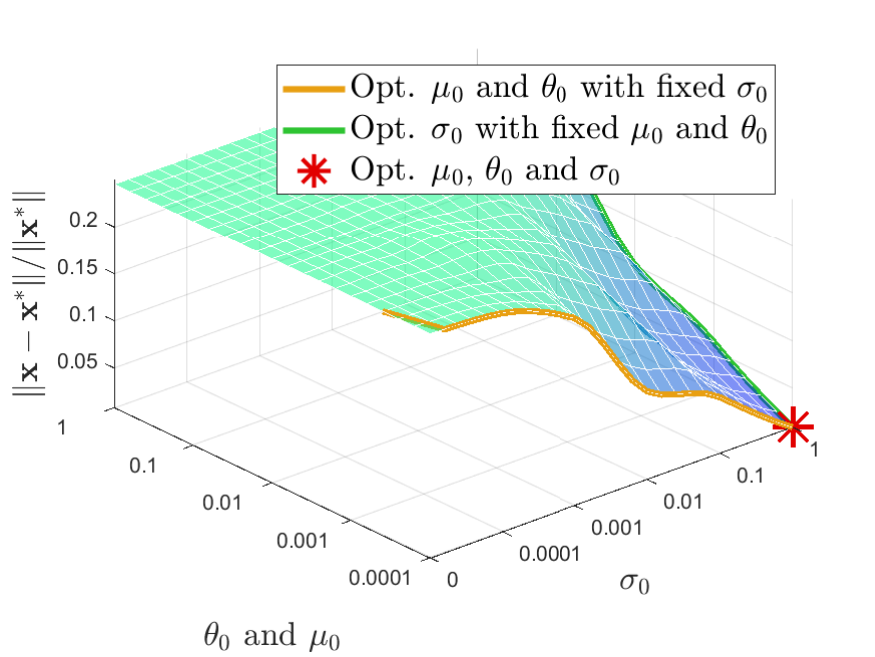} } 
	\caption{
		Effect of $\mu$, $\theta$ (in the regularization term) and $\sigma$ (in the auxiliary function) on the solution errors. 
		We also plot the optimal and easy-to-collapse parameters.
		Subfigure~\subref{subfig:theta and mu} reveals that the effect of $\mu$ is greater than that of $\theta$, and collapses occur when $\mu=0$.		
		Subfigure~\subref{subfig:theta mu sigma} shows that smaller $\mu$ and $\theta$ (but greater than 0) and larger~$\sigma$ would help to converge.
	}
	\label{fig:hyperparameter}
}
\end{figure}

\begin{figure}[tbp]
	\centering  
\color{black}{
	\includegraphics[height=3cm,width=4cm]{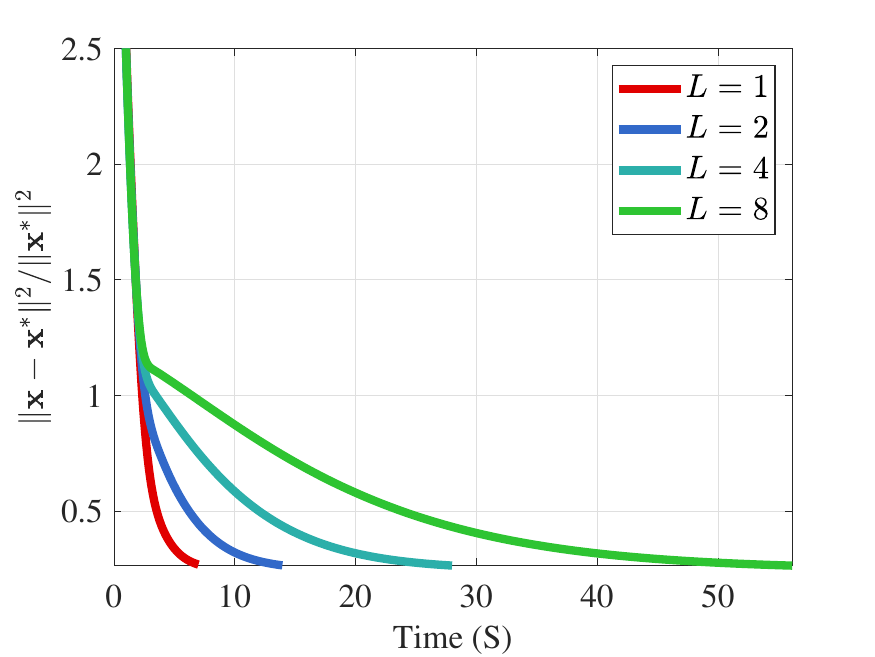} 
	\includegraphics[height=3cm,width=4cm]{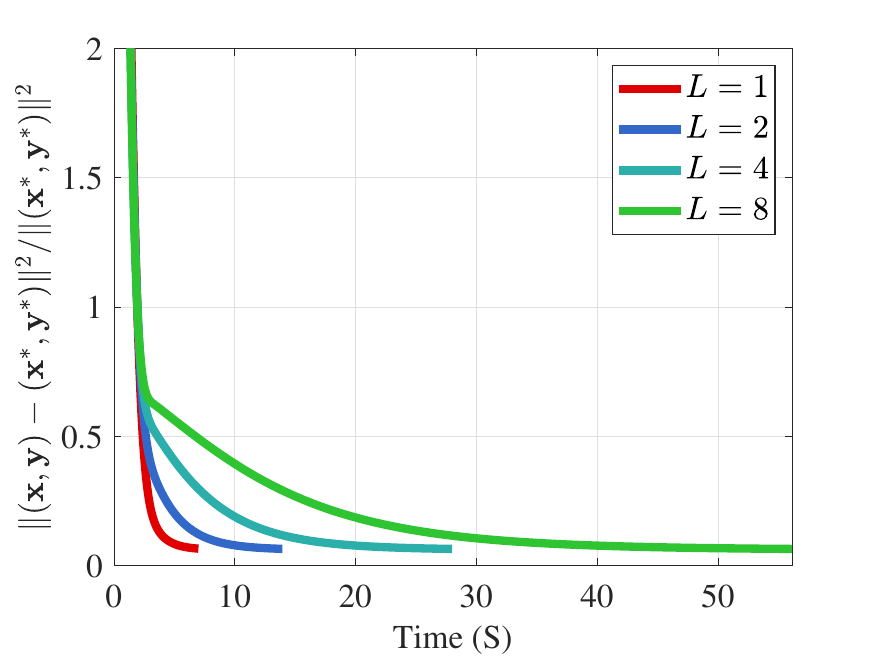} 
	\caption{Effect of $L$ (inner-loop iterations) on the convergence speed.
	}
	\label{fig:L}
}
\end{figure}


\subsubsection{\textcolor{black}{ BLO with Constraints}} \label{sec:toy constraint}

To show the performance of BVFSM for problems with constraints discussed in Section \ref{sec:constraint},
we use the following constrained example with non-convex LL:
$\small
	\begin{aligned}
		& \min _{\x \in \mathbb{R}, \y \in \mathbb{R}^{n}}\|\x-a\|^{2}+\left\|\y-a\right\|^{2} \\ 
		& \text { s.t. } [\y]_{i} \in \mathop{\arg\min}_{[\y]_i \in \mathbb{R}} \Big\{ \sin \left(\x+[\y]_i- [\c]_{i}\right) : \x+[\y]_i \in[0,1] \Big\},  \forall \ i,\\ 
	\end{aligned}
$
where $a \in \mathbb{R}$ and $\c \in \mathbb{R}^{n}$ are any fixed given constant and vector satisfying $ [\c ]_{i} \in[0,1] \text{ for any } i=1, \cdots, n$. The optimal solution is
$
\x^*=\frac{1-n}{1+n} a, 
[\y]_{i}=-\x^*, \forall \ i, 
$
and the optimal value is $F^{*}=\frac{4 n}{1+n} a^{2}$.
\textcolor{black}{Derivation of the closed-form solution is provided in Appendix~\ref{sec:appendix closed-form solution}.}
We conduct experiments under the 2-dimensional case~($n=2$) and set $a=2$ and $[\c]_i=1, \text{ for any }i = 1,2,\cdots,n$. The constraint is carried out via $(\x+[\y]_i-0.5)^2-0.25 \leq 0$, for each component of~$\y$, which is equivalent to  $\x+[\y]_i \in \left[ 0,1\right]$.

\begin{figure}[tbp]
	\centering  
	\includegraphics[height=3cm,width=4cm]{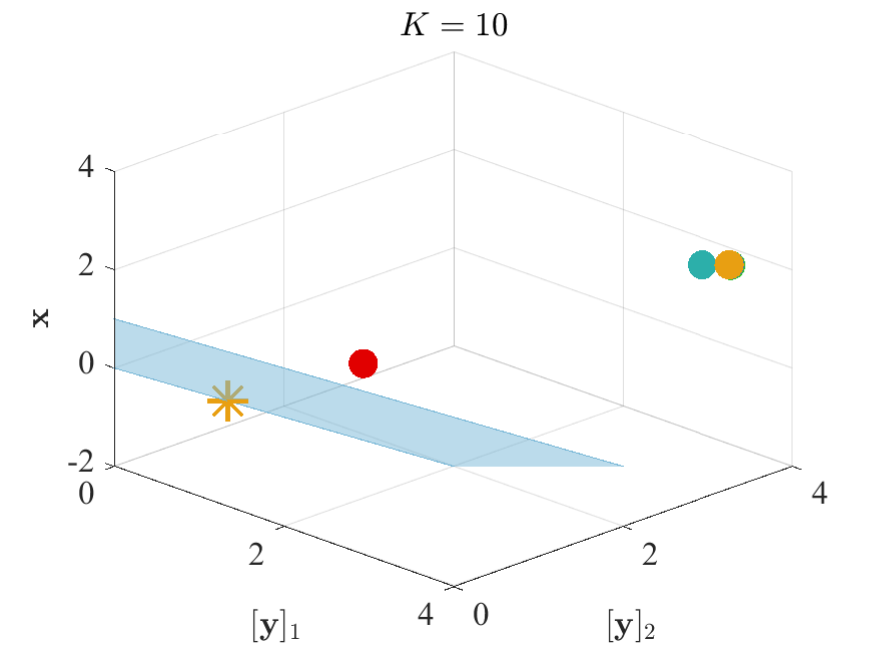} 
	\includegraphics[height=3cm,width=4cm]{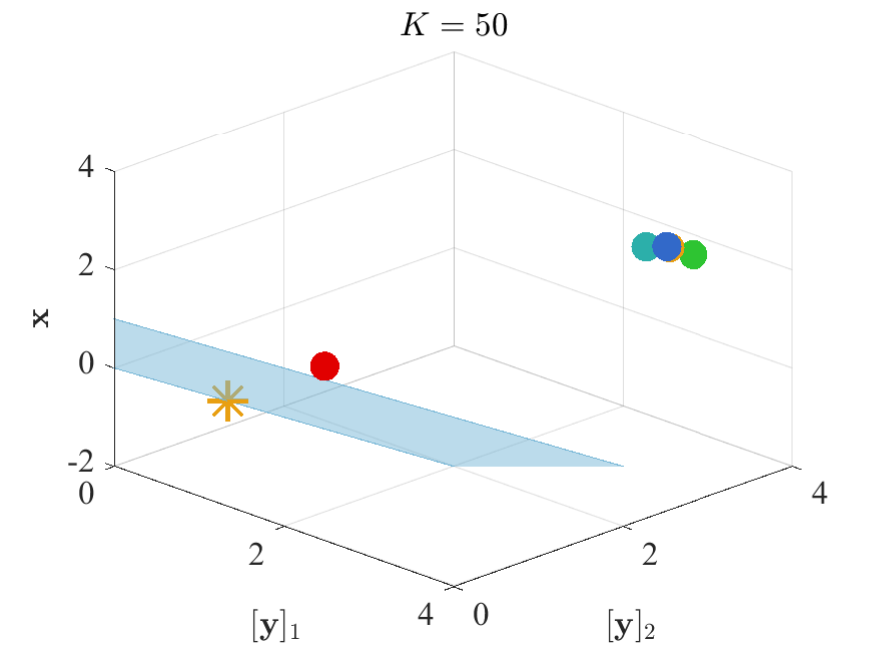}  
	\includegraphics[height=3cm,width=4cm]{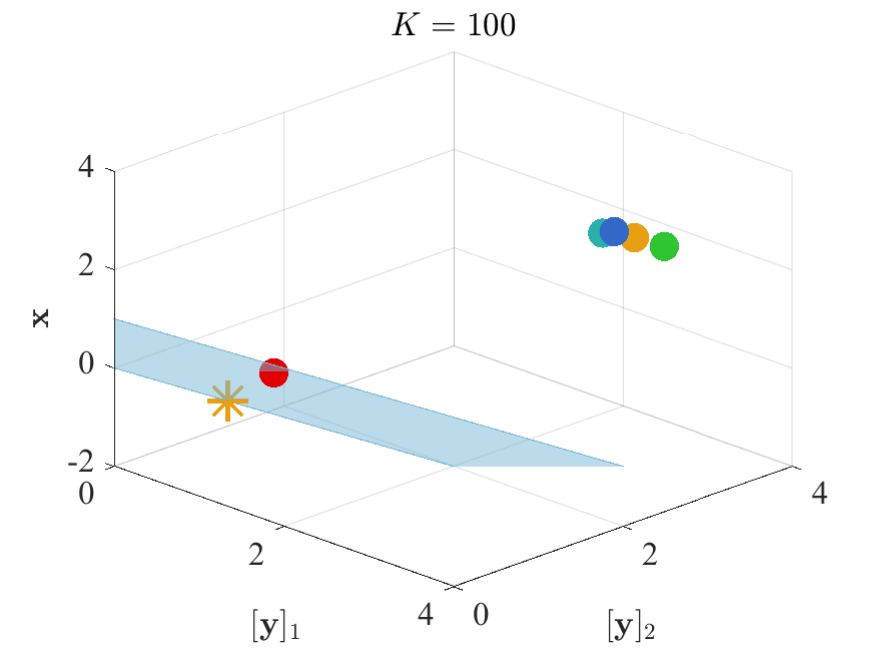}   
	\includegraphics[height=3cm,width=4cm]{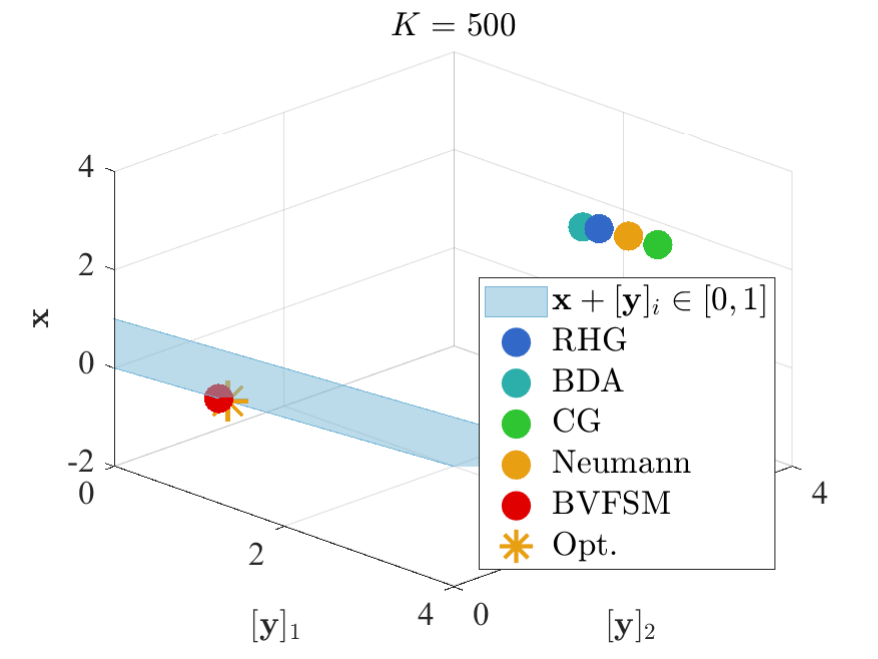}  
	\caption{The relationship between the true optimal solution (“Opt.” for short) and the solutions obtained by different methods during the UL iteration for constrained BLO. 
		It can be seen that BVFSM can gradually converge to the true solution inside the feasible region, 
		while other methods cannot deal with the constraint at all. 
		The legend is only shown in the last plot.
	}
	\label{fig:P}
\end{figure}

\begin{figure} [tbp]
	\centering  
	\subfigure[Barrier]{ 
		\label{subfig:constrain barrier}
		\includegraphics[height=3cm,width=4cm]{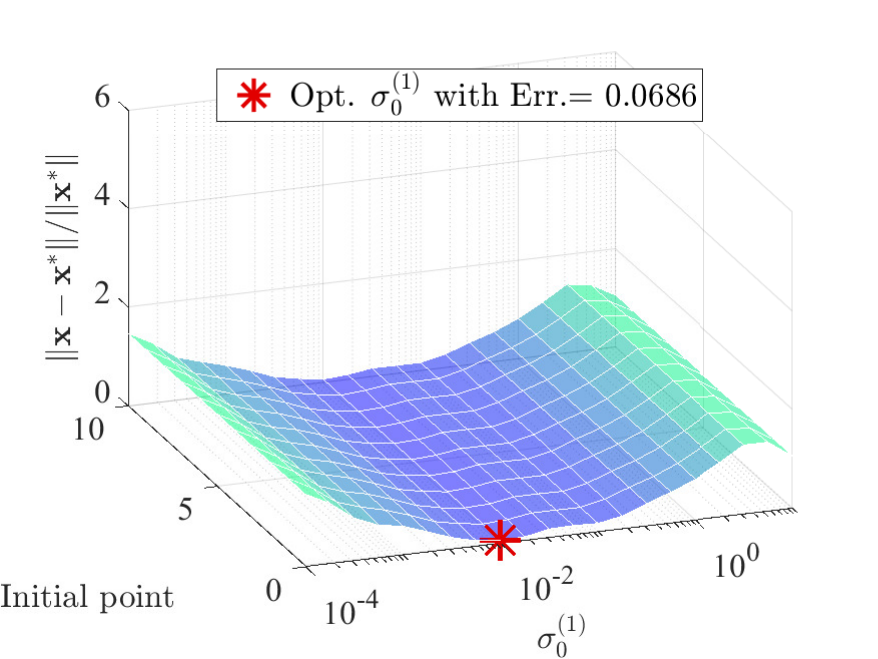} }  
	\subfigure[Penalty]{ 
		\label{subfig:constrain penalty}
		\includegraphics[height=3cm,width=4cm]{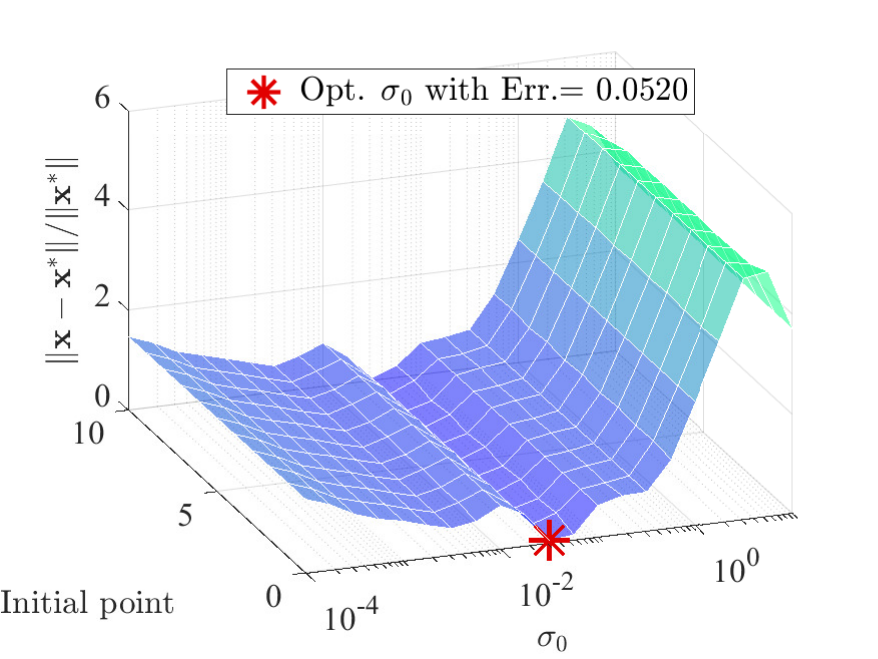} }
	\caption{Convergence results for constrained BLO with different initial points and parameters using~\subref{subfig:constrain barrier} barrier and~\subref{subfig:constrain penalty} penalty functions. 
		Err. in the legend denotes $\| {\x}-{\x^*}\|/ \| \x^*\|$. 
		We select the truncated log barrier and quadratic penalty as the representatives of barrier and penalty functions respectively. 
		Choosing the barrier function leads to higher stability and less sensitivity to parameters, 
		while using the penalty function is more sensitive to parameters
		but can obtain smaller errors. 
		Both methods are insensitive to initial values.
	}
	\label{fig:con}
\end{figure}

Figure~\ref{fig:P} displays the solutions after $K$ iterations. 
It can be seen that when dealing with constrained LL problems, 
only BVFSM can effectively deal with the constraint. 
Hence, our method has broader application space, 
and we will show the experiment in real learning tasks in Section~\ref{sec:ho}, which solves problems with UL constraints.

\begin{figure*} 
	\centering  	
	\begin{minipage}[c]{0.35\textwidth}
		\centering
		\subfigure[Errors]{ \label{subfig:pbo a}
			\includegraphics[height=3.9cm,width=5.2cm]{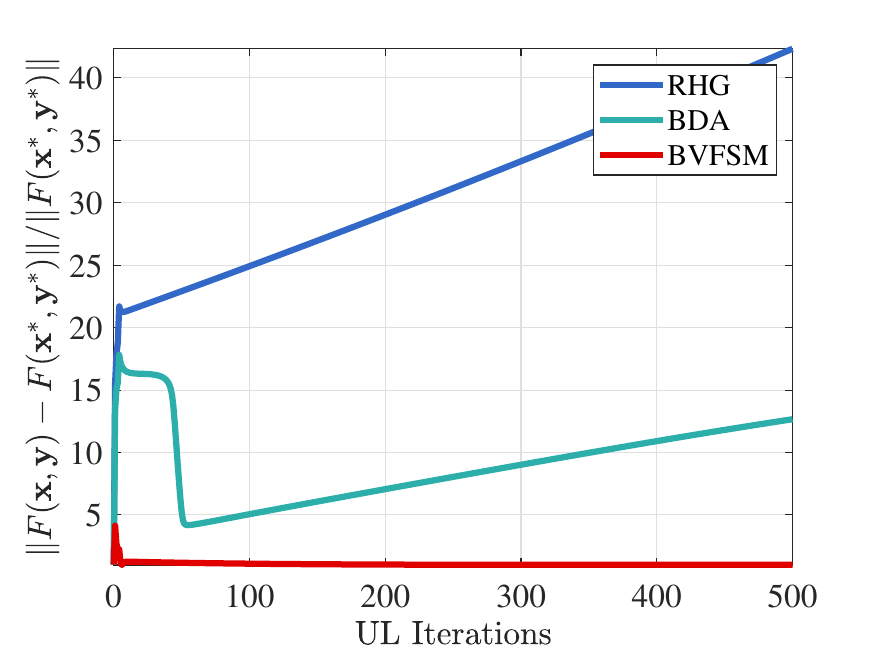} }\\
		\vskip 0.15in
		\qquad\qquad\includegraphics[width=3.5cm]{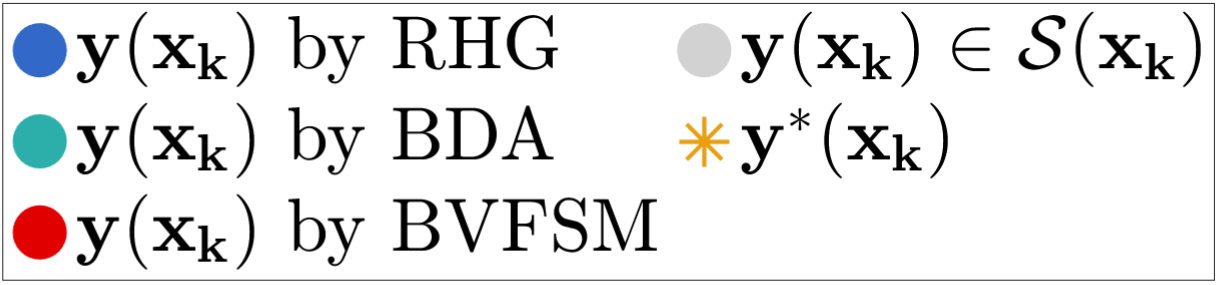}
	\end{minipage}
	\begin{minipage}[c]{0.6\textwidth}
		\subfigure[$k=100$]{\label{subfig:pbo b}
			\includegraphics[height=3cm]{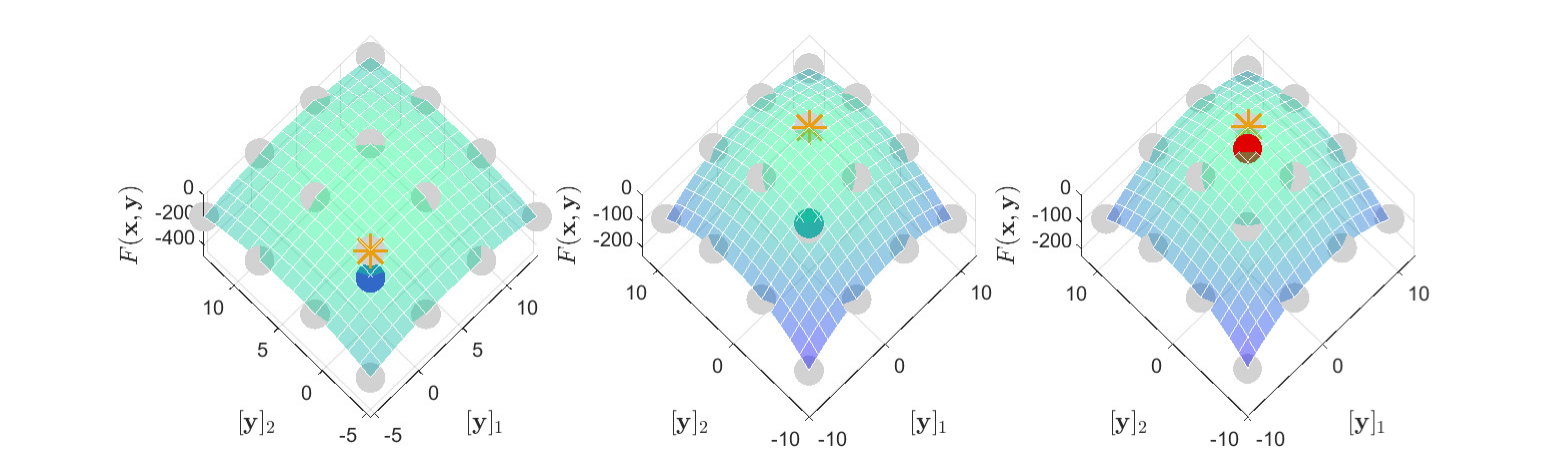} }
		\vskip -0.05in
		\subfigure[$k=500$]{\label{subfig:pbo c}
			\includegraphics[height=3cm]{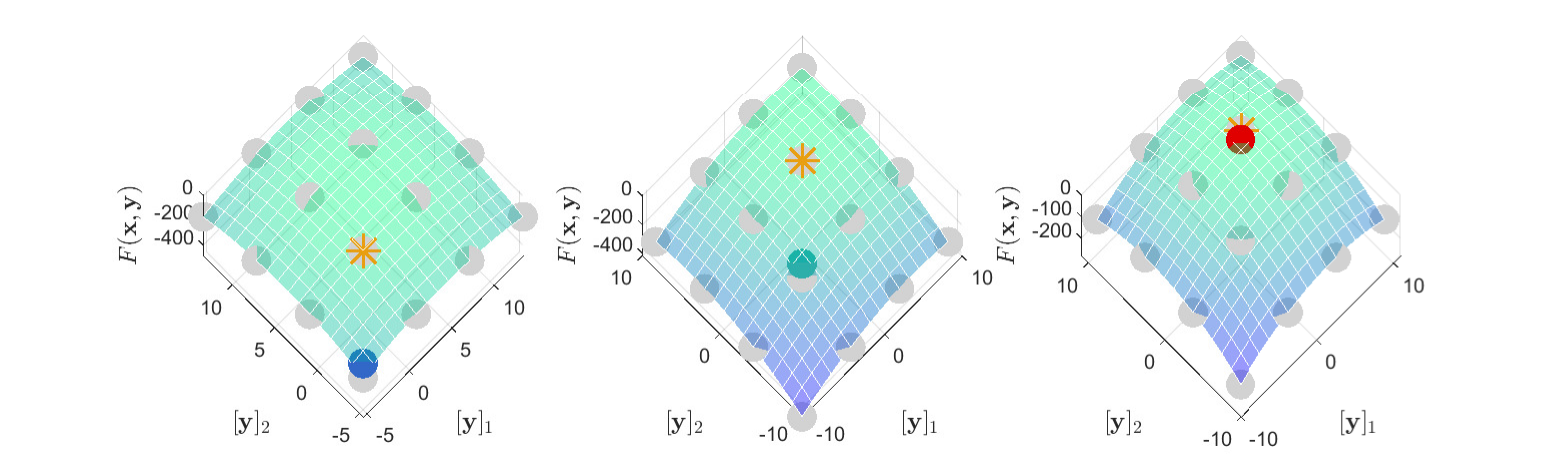} }
	\end{minipage}
	
	\vskip -0.15in
	
	\caption{Convergence behavior for pessimistic BLO. 
		On the right side, we show the LL solution $\y(\x)$ obtained by different methods. 
		Subfigures~\subref{subfig:pbo b} and~\subref{subfig:pbo c} illustrate the results	when $k = 100$ and $k = 500$, respectively. 
		Here the surfaces denote $F(\x_k,\y)$,
		and $\y^*(\x_k)$ means the optimal LL solution among multiple LL solutions $\y(\x_k) \in \S(\x_k)$. 
		Both RHG and BDA choose the incorrect LL solutions, 
		while BVFSM chooses the correct solution.
	}
	\label{fig:pbo}
\end{figure*}

\begin{table*}[htbp]
	\caption{Comparison among existing methods, BVFSM, and BVFSM with constraints (BVFSM-C) for data hyper-cleaning tasks on three datasets: MNIST, FashionMNIST and CIFAR10. 
		F1 score is the harmonic mean of precision and recall.}
	\label{table:hoT1}
	\vspace {-0.5cm}
	\color{black}{
		\begin{center}  \footnotesize
			\begin{tabular}{|c|c|c|c|c|c|c|c|c|c|}
				\hline		
				\multirow{2}{*}{Method}&\multicolumn{3}{c|}{MNIST}&\multicolumn{3}{c|}{FashionMNIST}&\multicolumn{3}{c|}{CIFAR10}\\
				\cline{2-10}
				&Accuracy&F1 score&Time (S)\ &\ Accuracy&F1 score&Time (S)\ &\ Accuracy&F1 score&Time (S)\\
				\hline
				RHG\ &\ 87.90$\pm$0.27&89.36$\pm$0.11&0.4131\ &\ 81.91$\pm$0.18&87.12$\pm$0.19&0.4589\ &\ 34.95$\pm$0.47&68.27$\pm$0.72&1.3374\\
				TRHG\ &\ 88.57$\pm$0.18&89.77$\pm$0.29&0.2623\ &\ 81.85$\pm$0.17&86.76$\pm$0.14&0.2840\ &\ 35.42$\pm$0.49&68.06$\pm$0.55&0.8409\\
				BDA\ &\ 87.15$\pm$0.82&90.38$\pm$0.76&0.6694\ &\ 79.97$\pm$0.71&88.24$\pm$0.58&0.8571\ &\ 36.41$\pm$0.23&67.33$\pm$0.31&1.4869\\
				\hline
				CG\ &\ 89.19$\pm$0.35&85.96$\pm$0.48&0.1799\ &\ 83.15$\pm$0.24&85.13$\pm$0.27&0.2041\ &\ 34.16$\pm$0.75&69.10$\pm$0.93&0.4796\\
				Neumann\ &\ 87.54$\pm$0.13&89.58$\pm$0.34&0.1723\ &\ 81.37$\pm$0.18&87.28$\pm$0.19&0.1958\ &\ 33.45$\pm$0.16&68.87$\pm$0.11&0.4694\\
				\hline
				BVFSM\ &\ {90.41}$\pm$0.32&{91.19}$\pm$0.25&\textbf{{0.1480}}\ &\ \textbf{{84.31}$\pm$0.27}&{88.35$\pm$0.13}&{0.1612}\ &\ \textbf{{38.19}$\pm$0.62}&69.55$\pm$0.42&\textbf{{0.4092}}\\
				BVFSM-C\ &\ \textbf{90.94$\pm$0.32}&\textbf{91.83$\pm$0.30}&0.1566\ &\ 83.23$\pm$0.34&\textbf{89.74$\pm$0.24}&\textbf{0.1514}\ &\ 37.33$\pm$0.33&\textbf{69.73$\pm$0.51}&0.4374\\
				\hline
			\end{tabular}
		\end{center}
		
	}
\end{table*}

To compare the performance of different auxiliary functions,
we try barrier and penalty functions for $\PP_{H,\sigma_H}, \PP_{h,\sigma_h}$ and $\PP_{f,\sigma_f}$,
which can be selected arbitrarily and separately indeed,
but here are chosen the same to be compared more directly. 
Since all of these auxiliary functions can guarantee the convergence theoretically, we mainly focus on the robustness of them under different settings. 
From Figure~\ref{fig:con}, it can be seen that using a penalty function can converge only under certain settings within a small region, while using a barrier function has greater robustness, 
so we use barrier functions in other experiments. 
In Section~\ref{sec:ho}, we further show investigations on penalty and barrier functions on complex networks.


\subsubsection{\textcolor{black}{Pessimistic BLO}} \label{sec:toy pessimistic}

\textcolor{black}{
To study the performance of pessimistic BLO, we use the example similar to optimistic BLO by changing Eq.~\eqref{eq:non-convexExperiment} from $\min_{ \x \in \mathbb{R}, \y \in \mathbb{R}^n}$  to $\min_{ \x \in \mathbb{R}}\max_{\y \in \mathbb{R}^n}$ and from $\| \x-a\|^2+\| \y-a-\c\|^2$ to $\| \x-a\|^2-\| \y-a-\c\|^2$. 
Here we consider the 2-dimensional case (LL dimension $n=2$), and set $a=2$ and $[\c]_i=2 \text{ for }i = 1,2$.
In this case, the optimal solution is $(\x^*,\y^*)=(-2+ \pi/2, 4 \pm \pi, 4 \pm \pi),$ and the optimal value is $F^* = -7/4 \pi^2 - 4 \pi +16.$
}
\textcolor{black}{
Derivation of the exact solution is provided in Appendix~\ref{sec:appendix closed-form solution}.
}
We select RHG and BDA respectively as the representatives of gradient-based methods with or without unique LL solution. 
We make no adaptive modifications to these methods which do not consider the pessimistic BLO situation. 

Figure~\ref{fig:pbo} shows the convergence curves of UL objective and how various methods choose $\y \in \S(\x)$ when $\S(\x)$ is not a singleton. 
From Figure~\ref{subfig:pbo a}, our method has significantly better convergence in the pessimistic case,
while RHG and BDA cannot converge at all. 
\textcolor{black}{
	Their distances to the optimal solution even increase because they fail to select the optimal LL solution $\y$ from multiple LL solutions $\y \in \S(\x)$,
	which is intuitively demonstrated in Figure~\ref{subfig:pbo b} and~\ref{subfig:pbo c}.}


\begin{figure} [tbp]
	\centering  
	\subfigure[UL objective $F({\x},{\y})$]{ \label{subfig6a}
		\includegraphics[height=3cm,width=4cm]{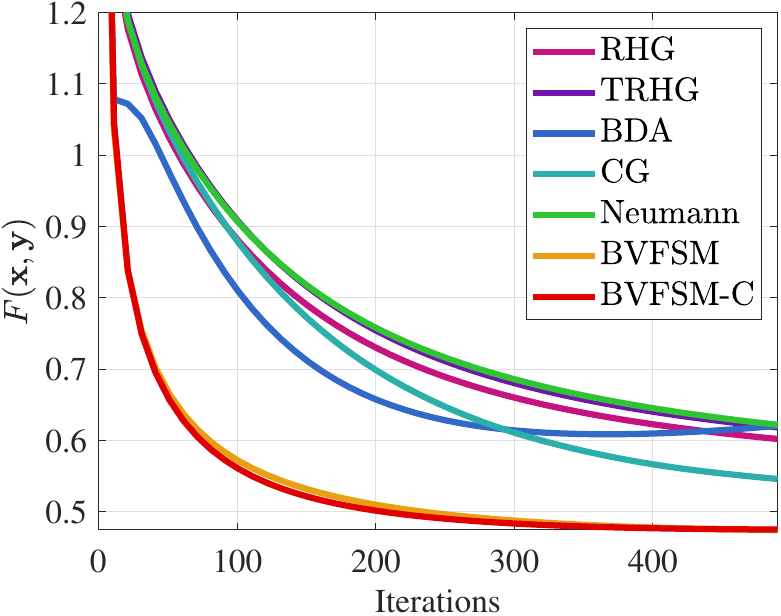}  }
	\subfigure[F1 score]{
		\includegraphics[height=3cm,width=4cm]{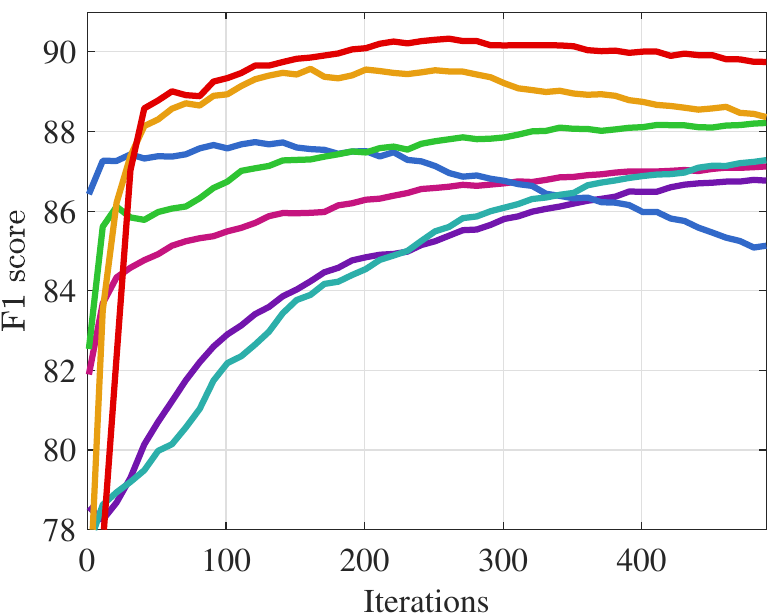} } 
	\caption{ 
		Performance for data hyper-cleaning 
		based on the FashionMNIST experiment in Table~\ref{table:hoT1}. 
		The legend is only shown in the first plot.}  
	\label{fig:ho1} 
\end{figure}

\begin{figure} [tbp]
	\centering  
	\subfigure[Accuracy]{
		\includegraphics[height=3cm,width=4cm]{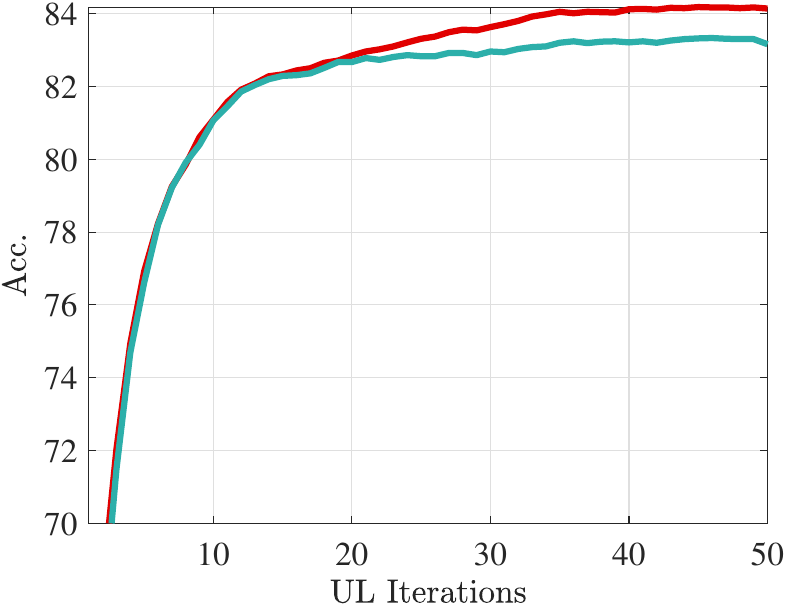}  }
	\subfigure[F1 score]{ \label{subfig7b}
		\includegraphics[height=3cm,width=4cm]{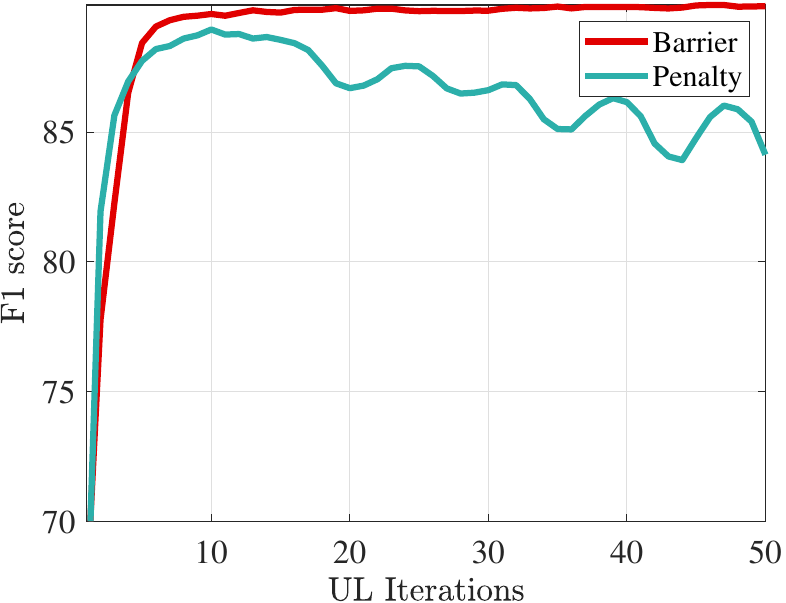}  }
	\caption{
		Performance for data hyper-cleaning with barrier and penalty functions on FashionMNIST.
		We choose the truncated log barrier and quadratic penalty as the representatives of barrier and penalty functions respectively.	
		Using a barrier function leads to
		higher accuracy and greater stability of F1 score. 
		The legend is only shown in the second plot.
	}
	\label{fig:ho2} 
\end{figure}

\begin{table*}[htbp]
	\centering
	\color{black}{
		\caption{The effect of contamination rates for data hyper-cleaning. 
			Accuracy and F1 scores of existing methods drop sharply with the increasing of contamination rate, while BVFSM maintains a slightly decreasing trend, verifying the robustness of BVFSM in the face of harsh data.
		}
\vspace {-0.3cm}
		\begin{tabular}{|c|c|c|c|c|c|c|c|c|}
			\hline
			Contamination rate & \multicolumn{2}{c|}{0.6}     & \multicolumn{2}{c|}{0.7}&\multicolumn{2}{c|}{0.8}     & \multicolumn{2}{c|}{0.9}\\
			\cline{1-9}
			
			Method& Accuracy  & F1 score & Accuracy  & F1 score & Accuracy  & F1 score & Accuracy  & F1 score \\
			\hline
			RHG   & 77.39±0.61 & 68.18±0.94 & 75.62±0.94 & 56.72±0.72 & 68.91±0.71 & 46.81±0.78 & 59.83±0.91 & 29.39±0.38 \\
			TRHG  & 77.37±0.52 & 76.76±0.13 & 75.60±0.84 & 65.30±0.10 & 68.89±0.30 & 55.39±0.97 & 59.81±0.38 & 37.97±0.48 \\
			BDA   & 75.44±0.44 & 78.24±0.34 & 73.67±0.59 & 66.78±0.82 & 66.96±0.69 & 56.87±0.79 & 57.88±0.87 & 39.45±0.08 \\
			\hline
			CG    & 78.64±0.52 & 75.17±0.79 & 76.87±0.06 & 63.71±0.41 & 70.16±0.80 & 53.80±0.09 & 61.08±0.63 & 36.38±0.03 \\
			Neumann & 76.85±0.95 & 77.29±0.29 & 75.08±0.15 & 65.83±0.22 & 68.37±0.40 & 55.92±0.46 & 59.29±0.26 & 38.50±0.63 \\
			\hline
			BVFSM & \textbf{81.49±0.22} & \textbf{85.51±0.70} & \textbf{81.34±0.42} & \textbf{82.55±0.33} & \textbf{80.06±0.97} & \textbf{73.51±0.83} & \textbf{79.73±0.20} & \textbf{55.97±0.73} \\
			\hline
		\end{tabular}%
		\label{tab:pollute rate}%
	}
\vspace{-0.1cm}
\end{table*}%

\begin{figure*} [tbp]
	\centering  
	\subfigure[Time]{\label{subfig:TimeLayer}
		\includegraphics[height=3cm,width=7.5cm,trim=0 50 0 50]{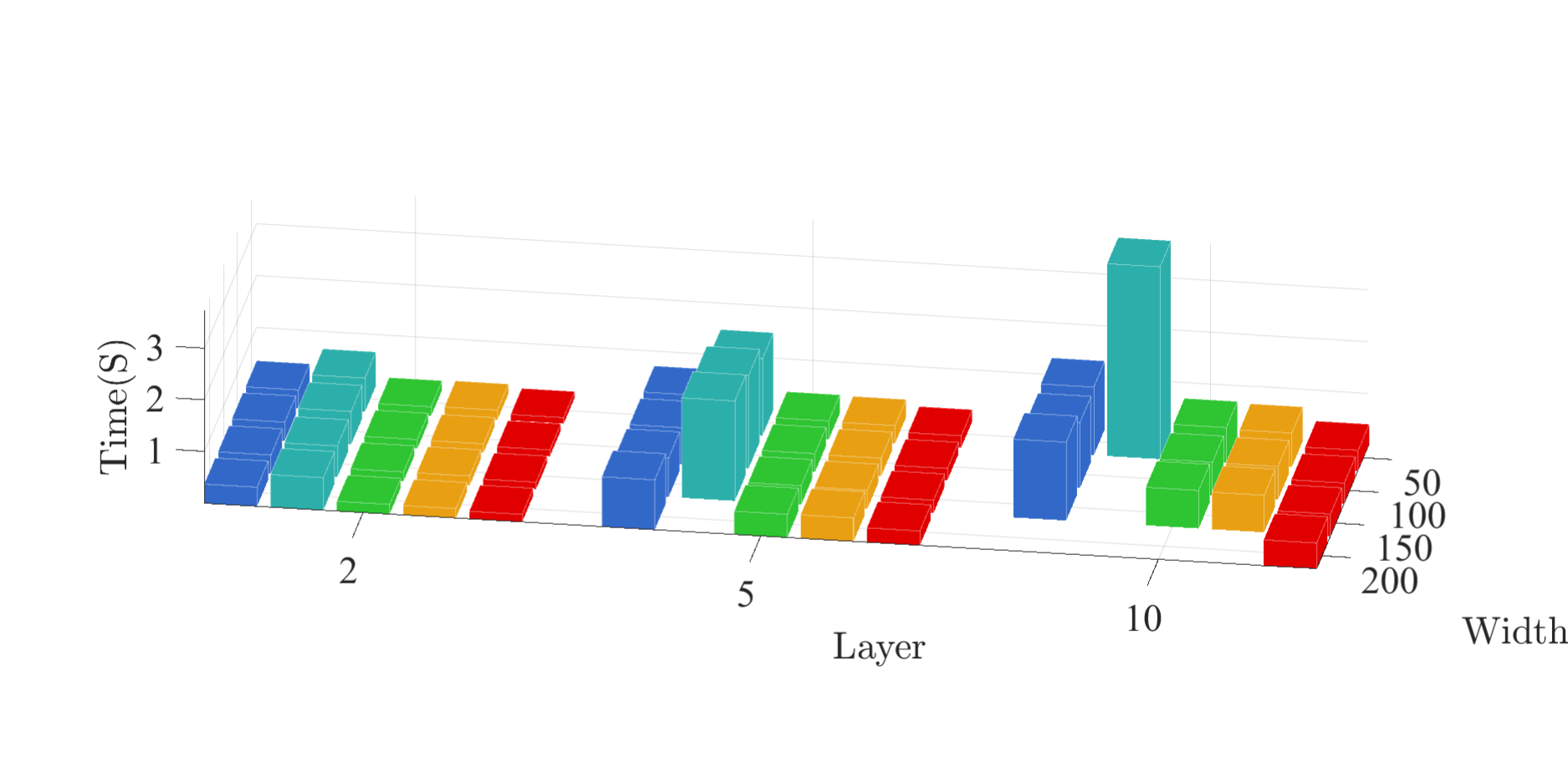}  }
	\subfigure[VRAM]{ \label{subfig:VRAMLayer}
		\includegraphics[height=3cm,width=7.5cm,trim=0 50 0 50]{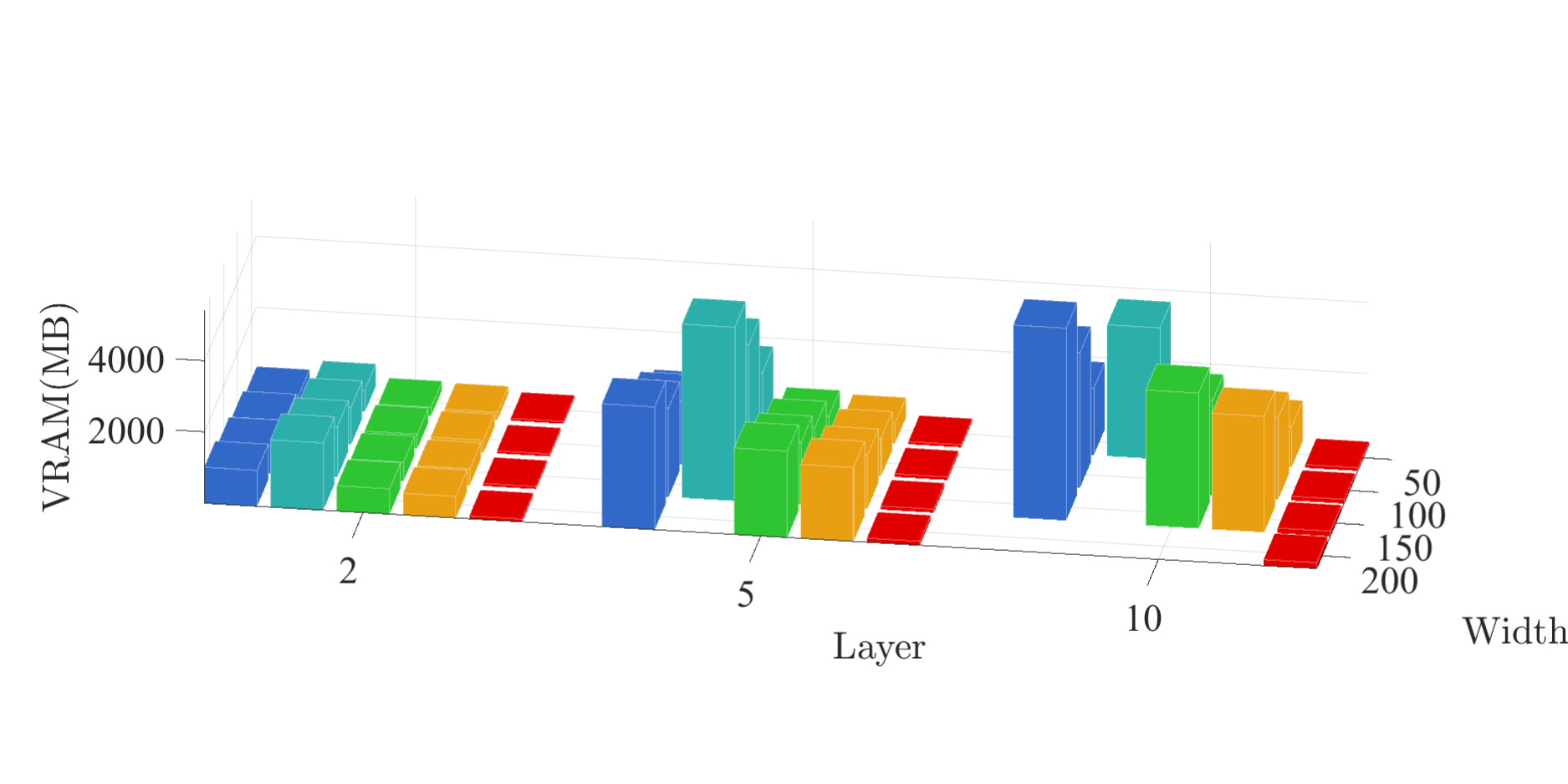}  }
	\includegraphics[width=1cm,trim=0 -50 0 -50]{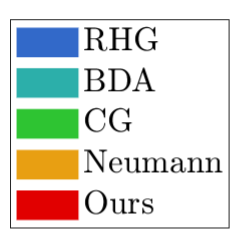}
\vspace{-0.4cm}
	\caption{\textcolor{black}{Computation time (S) and memory (VRAM, MB) with various network structures for data hyper-cleaning. 
			Blank areas not drawn indicate that the 8G VRAM limit is exceeded.
			We use fully connected networks of 2, 5, 10 layers and with widths of 50, 100, 150 and 200.
		}
	}
	\label{fig:TimeLayers}  
\end{figure*}

\subsection{Hyper-parameter Optimization}\label{sec:ho}
In this subsection, we use a specific task of hyper-parameter optimization, called data hyper-cleaning, to evaluate the performance of BVFSM when the LL problem is non-convex.
Assuming that some of the labels in our dataset are contaminated, 
the goal of data hyper-cleaning is to reduce the impact of incorrect samples by adding hyper-parameters to them. 
In this experiment, we set $\y \in \mathbb{R}^{10\times 301}\times\mathbb{R}^{300\times d}$ as the parameter of a non-convex 2-layer linear network classifier where $d$ is the dimension of data, 
and $\x\in \mathbb{R}^{|\D_{tr}|}$ as the weight of each sample in the training set. 
Therefore, the LL problem is to learn a classifier $\y$ by cross-entropy
loss $g$ weighted with given $\x$:
\begin{equation*}
	f(\x,\y)=\sum_{(\u_{i},\v_{i})\in \D_{\mathtt{tr}}}[\mathtt{sigmoid}(\x)]_i  \  g(\y,\u_{i},\v_{i}),
\end{equation*}
where $(\u_{i},\v_{i})$ are the training samples, and $\mathtt{sigmoid}(\x)$ is the sigmoid function to constrain the weights $\x$ into the range of $[0, 1]$.
The UL problem is to find a weight $\x$ to reduce the cross-entropy loss $g$ of $\y$ on a cleanly labeled validation set: 
\begin{equation*}
	F(\x,\y)=\sum_{(\u_{i},\v_{i})\in \D_{\mathtt{val}}}g(\y,\u_{i},\v_{i}).
\end{equation*}
In addition, we also consider adding explicit constraints directly on $\x$ (as discussed in Section \ref{sec:constraint})
instead of using the sigmoid function as indirect constraints.
The constraint is carried out via $([\x]_i-0.5)^2-0.25 \leq 0$, for each component of $\x$, such that $[\x]_i \in \left[ 0,1\right]$.



\textcolor{black}{\textbf{Overall performance.}}
Table~\ref{table:hoT1} shows the accuracy, F1 score and computation time 
on three different datasets.
For each dataset, we randomly select 5000 samples as the training set $\D_{\mathtt{tr}}$, 
5000 samples as the validation set  $\D_{\mathtt{val}}$, 
and 10000 samples as the test set $\D_{\mathtt{test}}$. 
After that, we contaminate half of the labels in $\D_{\mathtt{tr}}$. 
From the result, BVFSM achieves the most competitive performance on all datasets.
Furthermore, BVFSM is faster than EGBMs and IGBMs, 
and this advantage is more evident on CIFAR10 with larger LL dimension, consistent with the complexity analysis in Section~\ref{sec:complexity}. 
The UL objective value and F1 score during iterations 
on FashionMNIST are also plotted in Figure~\ref{fig:ho1}.

As for the performance of BLO with constraints, we can find from Table~\ref{table:hoT1} that BVFSM with constraints (denoted as BVFSM-C in the table) 
has slightly lower accuracy but higher F1 score than BVFSM using sigmoid function without explicit constraints. 
\textcolor{black}{
This is because for BVFSM without constraints, the compound of sigmoid function in the LL objective decreases the gradient of $\x$, 
and thus the UL variable $\x$ with small change rate contributes to its slower convergence.
Accuracy more reflects the convergence of LL variable $\y$, while F1 score more reflects the convergence of UL variable $\x$.
Therefore, BVFSM with constraints performs slightly worse in accuracy but better in F1 score than BVFSM without constraint but with the sigmoid function.
}

\textcolor{black}{\textbf{Evaluations on the auxiliary functions, robustness, and network structures.}}
Figure~\ref{fig:ho2} compares the performance of different auxiliary functions.
Consistent with the numerical experiment in Figure~\ref{fig:con}, the barrier function works better with higher stability
without the need for too much fine tuning of parameters.
\textcolor{black}{
	Table~\ref{tab:pollute rate} compares the robustness under various data contamination rates. 
}
Figure~\ref{fig:TimeLayers} further shows 
the impact of network structures in depth and width. 
For the LL variable $\y$ 
we use fully connected networks of various layers and widths.
It is worth noting that the computational burden is overall not quite sensitive to the network width, but very sensitive to the network depth. 
With the deepening of networks, other methods experience varying degrees of collapse due to occupying too much memory, while BVFSM can always keep the computation stable.
Since there is no need to retain the LL iteration trajectory, 
our storage burden is much less than that of EGBMs (RHG and BDA). 
Thanks to the fact that BVFSM does not need to calculate the Jacobian- and Hessian-vector products (realized by saving an additional calculation graph in AD), 
our burden is also significantly lower than that of IGBMs (CG and Neumann).


\begin{table}[tbp]
	\begin{center}  
		\footnotesize
		\caption{Computation time (S) in each epoch for data hyper-cleaning in VGG series networks with different convolution layers (Conv.), batch sizes (B) and iteration number (K). 
			N\,/\,A means exceeding the memory limit.
			Note that a smaller batch size may take more time because the batch switching time increases.
			BVFSM maintains the least burden and highest speed, especially with 
			large-scale LL in real-world networks.} 
		\label{table:VGG}
		\vspace {-0.3cm}
		\begin{tabular}{|c|c|c|c|c|c|}
			\hline  
			Conv.&\multirow{1}{*}{(B,K)}&\multirow{1}{*}{RHG}&\multirow{1}{*}{CG}&\multirow{1}{*}{Neumann}&\multirow{1}{*}{BVFSM}\\
			\hline
			2&(1,7)  & 7515 &  4730 & 3225 &  \textbf{2252} \\
			2&(128,20) &N/A & N/A  & 415.4 & \textbf{60.81}  \\
			13&(128,100)   &  N/A   & N/A  & 472.9 & \textbf{171.9} \\
			13&(512,100)   &  N/A   & N/A   & N/A   &\textbf{121.8}  \\
			\hline
		\end{tabular}
	\end{center}
\end{table}

\begin{figure*} [tbp]
	\centering  
	\subfigure{ 
		\includegraphics[height=3cm,width=7.5cm]{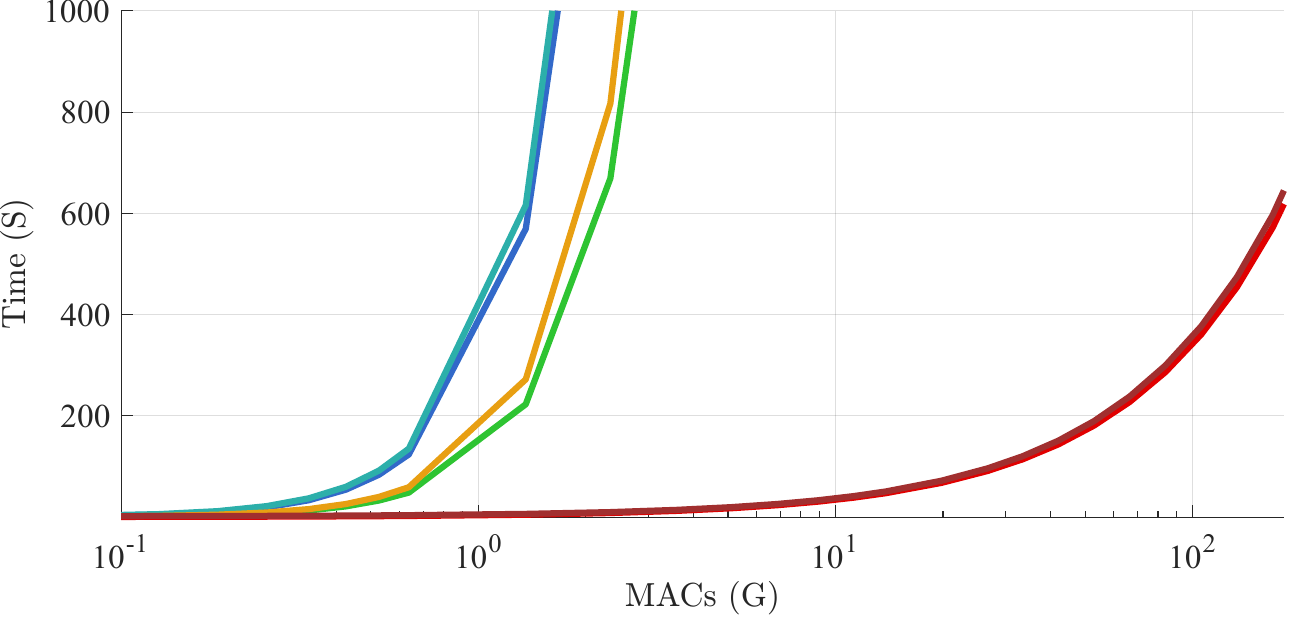}  }
	\subfigure{ 
		\includegraphics[height=3cm,width=7.5cm]{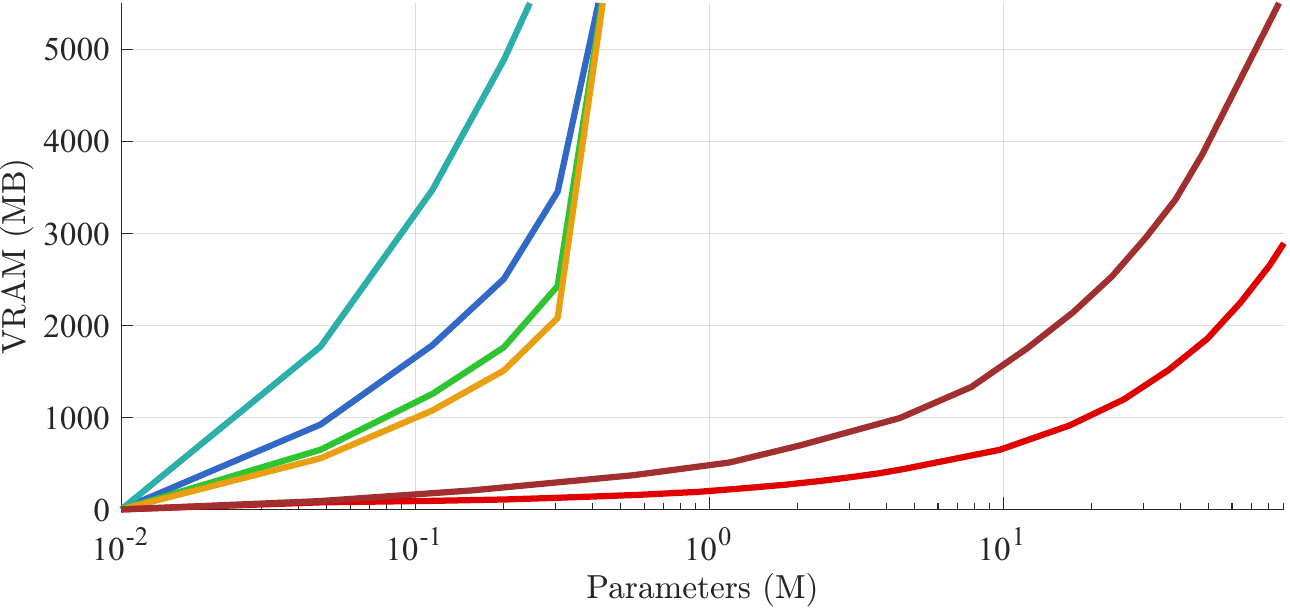}  }	
	\includegraphics[width=1cm,trim=0 -50 0 -50]{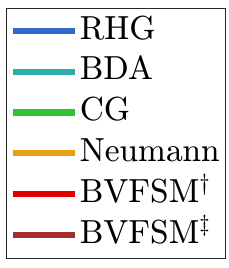}
\vspace {-0.3cm}
	\caption{\textcolor{black}{	
			Effects of the quantity of Multiply–Accumulate Operations (MACs) and Parameters on the computational efficiency for data hyper-cleaning in large-scale networks.					
			BVFSM$^\dag$ and BVFSM$^\ddag$ denote BVFSM in the 5-layer network same to other methods and the more challenging 50-layer network, respectively. 
			For a clearer comparison, we use logarithmic coordinates.
		} 
	}
	\label{fig:MACs}  
\end{figure*}

\textcolor{black}{\textbf{Computational efficiency for large-scale networks.} }
Next, we verify our computational burden on large-scale networks closer to real applications such as VGG16 on CIFAR10 dataset. 
Because VGG16 has too much computational burden on existing methods, in order to make the comparison available, we change the experimental settings as follows.
For each dataset, we randomly select 4096 samples as the training set $\D_{\mathtt{tr}}$, 
4096 samples as the validation set  $\D_{\mathtt{val}}$, 
and 512 samples as the test set $\D_{\mathtt{test}}$. 
Because the original network is too computationally intensive for EGBMs, we perform an additional experiment on some sufficiently small batch size and iteration number $K$.
We also simplify the convolution layers from 13 layers in VGG16 to only the first two layers, and retain the last 3 linear layers.
As shown in Table~\ref{table:VGG}, BVFSM always has the highest speed under various settings, 
and still works well with a large $K$ and batch size.

\textcolor{black}{
	Additionally, we visualize how BVFSM can be applied to large-scale networks by expanding the width of 5-layer network,
	and compare the computational efficiency when the Multiply–Accumulate Operations (MACs) and parameters are increased. 
	For the same-size network, the fully-connected layer typically has more parameters, while the convolutional layer has more MACs,
	so we use the fully-connected and convolutional layer respectively to simulate the scale-up of parameters and MACs.
	From Figure~\ref{fig:MACs},
	other methods are computationally inefficient and can only handle small-scale networks,
	while BVFSM with much higher efficiency is applicable to larger-scale networks in frontier tasks.
	Moreover, considering the effect of number of layers on efficiency as shown in Figure~\ref{fig:TimeLayers},
	we also use a more challenging 50-layer network for BVFSM to further demonstrate its high efficiency.
	Specifically, existing methods usually cannot work under MobileNet with around 1 GMACs, while BVFSM is available under StyleGAN with around 100 GMACs.
%
%
}


\begin{table*}[htbp]
	\begin{center} 
		\footnotesize
		\caption{Averaged accuracy using various methods (including model-based methods and gradient-based BLO methods) for the few-shot classification task
		(1- and 5-shot, i.e., $M = 1, 5,$ and $N = 5, 20, 30, 40$) on Omniglot.} 
		\label{table:meta}
		\vspace {-0.3cm}
	\color{black}{
		\begin{tabular}{|c|c|c|c|c|c|c|c|c|}
			\hline
			\multirow{2}{*} { Method } & \multicolumn{2}{c|} { 5-way } & \multicolumn{2}{c|} {20-way } & \multicolumn{2}{c|} {30-way } & \multicolumn{2}{c|} {40-way } \\
			\cline{2-9}
			\ &\  1-shot & 5-shot \ &\  1-shot & 5-shot \ &\  1-shot & 5-shot \ &\  1-shot & 5-shot \\
			\hline
			MAML\ &\ 98.70$\pm$0.40&\textbf{99.91$\pm$0.10}\ &\ 95.80$\pm$0.30&98.90$\pm$0.20\ &\ 86.86$\pm$0.49&96.86$\pm$0.19\ &\ 85.98$\pm$0.45&94.46$\pm$0.43\\
			Meta-SGD\ &\ 97.97$\pm$0.70&98.96$\pm$0.20\ &\ 93.98$\pm$0.43&98.42$\pm$0.11\ &\ 89.91$\pm$0.04&96.21$\pm$0.15\ &\ 87.39$\pm$0.43&95.10$\pm$0.15\\
			Reptile\ &\ 97.68$\pm$0.04&99.48$\pm$0.06\ &\ 89.43$\pm$0.14&97.12$\pm$0.32\ &\ 85.40$\pm$0.30&95.28$\pm$0.30\ &\ 82.50$\pm$0.30&92.79$\pm$0.33\\
			iMAML\ &\ \textbf{99.16$\pm$0.35}&99.67$\pm$0.12\ &\ 94.46$\pm$0.42&98.69$\pm$0.10\ &\ 89.52$\pm$0.20&96.51$\pm$0.08\ &\ 87.28$\pm$0.21&95.27$\pm$0.08\\
			\hline
			RHG\ &\ 98.64$\pm$0.21&99.58$\pm$0.12\ &\ 96.13$\pm$0.20&99.09$\pm$0.08\ &\ 93.92$\pm$0.18&98.43$\pm$0.08\ &\ 90.78$\pm$0.20&96.79$\pm$0.10\\
			TRHG\ &\ 98.74$\pm$0.21&99.71$\pm$0.07\ &\ 95.82$\pm$0.20&98.95$\pm$0.07\ &\ 94.02$\pm$0.18&98.39$\pm$0.07\ &\ 90.73$\pm$0.20&96.79$\pm$0.10\\
			BDA\ &\ 99.04$\pm$0.18&99.74$\pm$0.05\ &\ 96.50$\pm$0.16&\textbf{99.19$\pm$0.07}\ &\ 94.37$\pm$0.18&98.53$\pm$0.07\ &\ 92.49$\pm$0.18&97.12$\pm$0.09\\
			BVFSM\ &\ 98.85$\pm$0.12&99.21$\pm$0.18\ &\ \textbf{96.73$\pm$0.30}&98.95$\pm$0.20\ &\ \textbf{94.65$\pm$0.20}&\textbf{98.56$\pm$0.17}\ &\ \textbf{92.73$\pm$0.12}&\textbf{97.61$\pm$0.47}\\
			\hline
		\end{tabular}
	}
	\end{center}
\end{table*}

\begin{figure*}[tbp]
	\centering  
	 \makeatletter
	 \renewcommand{\@thesubfigure}{\hskip\subfiglabelskip}
	 \makeatother
	\subfigure[Initialization]{
		\includegraphics[height=1.9cm,width=2.2cm]{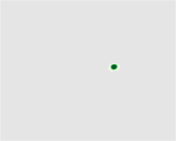}}
		 \makeatletter
	\renewcommand{\@thesubfigure}{ \thesubfigure\space}
	\makeatother
	\addtocounter{subfigure}{-1}
	\subfigure[GAN]{\label{subfig:gan}
		\includegraphics[height=2.2cm,width=2.2cm]{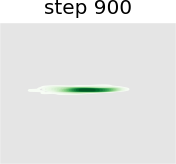} 
		\includegraphics[height=2.2cm,width=2.2cm]{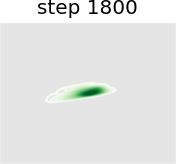} 
		\includegraphics[height=2.2cm,width=2.2cm]{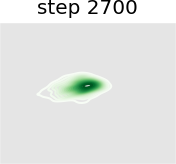}}
	\subfigure[WGAN]{\label{subfig:wgan}
		\includegraphics[height=2.2cm,width=2.2cm]{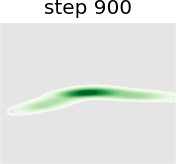} 
		\includegraphics[height=2.2cm,width=2.2cm]{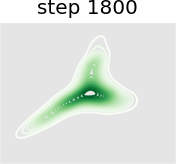} 
		\includegraphics[height=2.2cm,width=2.2cm]{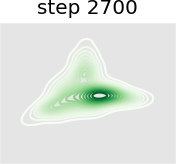}
		}
		 \makeatletter
	\renewcommand{\@thesubfigure}{\hskip\subfiglabelskip}
	\makeatother
	\subfigure[Target]{
	    \includegraphics[height=1.9cm,width=2.2cm]{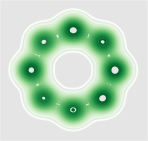}}
    		 \makeatletter
    	\renewcommand{\@thesubfigure}{\thesubfigure\space}
    \makeatother
    	\addtocounter{subfigure}{-1}
	\subfigure[Unrolled GAN]{\label{subfig:unrolled gan}
		\includegraphics[height=2.2cm,width=2.2cm]{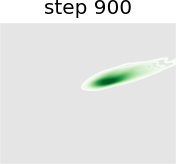} 
		\includegraphics[height=2.2cm,width=2.2cm]{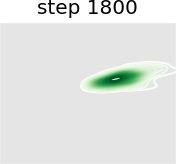} 
		\includegraphics[height=2.2cm,width=2.2cm]{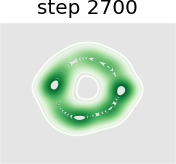}}
	\subfigure[BVFSM]{\label{subfig:BVFSM}
		\includegraphics[height=2.2cm,width=2.2cm]{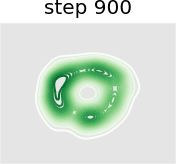} 
		\includegraphics[height=2.2cm,width=2.2cm]{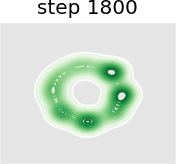} 
		\includegraphics[height=2.2cm,width=2.2cm]{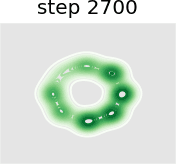}}
	\caption{Comparison of GAN training on a toy 2D mixture of Gaussians dataset. 
	We show the heat map of the generated distribution as the number of training steps increases. 
	Subfigure~\subref{subfig:gan} indicates vanilla GAN can capture only one distribution, 
	while in subfigure~\subref{subfig:wgan}, WGAN attempts to capture all distributions at the same time with one Gaussian distribution, but fails to achieve satisfactory performance.
	Unrolled GAN in subfigure~\subref{subfig:unrolled gan} can approximate all distributions simultaneously with the help of a leader-follower structure, but lacks further details.
	In contrast, our BVFSM fits all Gaussian distributions well with plenty of details, as shown in subfigure~\subref{subfig:BVFSM}.
    }
	\label{fig:7}
\end{figure*}

\subsection{Few-shot Learning}\label{sec:meta}

We then conduct experiments 
on the few-shot learning task. 
Few-shot learning is one of the most popular applications in meta-learning,
whose goal is to learn an algorithm that can also handle new tasks well. 
Specifically, each task is an $N$-way classification and it aims to learn the hyper-parameter $\x$ 
so that each task can be solved by only $M$ training samples (i.e., $N$-way $M$-shot).
Similar to works in~\cite{franceschi2018bilevel,liu2020generic,liu2022general}, 
we model the network with two parts: 
a four-layer convolution network $\x$ as a common feature extraction layer among tasks, 
and a logical regression layer $\y={\y^i}$ as the separated classifier for each task. 
We also set dataset as $\D=\{\D^i\}$, 
where $\D^i=\D^i_{\mathtt{tr}}\cup \D^i_{\mathtt{val}}$ for the $i$-th task.
By setting the loss function of the $i$-th task to be cross-entropy $g(\x,\y^i;\D^i_{\mathtt{tr}})$ for the LL problem, 
the LL objective can be defined as
\begin{equation*}
	f(\x,\y)=\sum_i g(\x,\y^i;\D^i_{\mathtt{tr}}).
\end{equation*}
As for the UL objective, we also utilize the cross-entropy function but define it based on $\{\D^i_{\mathtt{val}}\}$ as
\begin{equation*}
	F(\x,\y)=\sum_i g(\x,\y^i;\D^i_{\mathtt{val}}).
\end{equation*}
Our experiment is performed on the widely used benchmark dataset: Omniglot~\cite{lake2015human}, 
which contains examples of 1623 handwritten characters from 50 alphabets.

We compare our BVFSM with several approaches, such as MAML, Meta-SGD, Reptile, iMAML, RHG, TRHG and BDA~\cite{liu2020generic,liu2022general}.
From Table~\ref{table:meta},
BVFSM achieves slightly poorer performance than existing methods in the 5-way task,
because when dealing with small-scale LLC problems, 
the strength of regularization term by BVFSM to accelerate the convergence
cannot fully counteract its impact on the offset of solution.
However, for larger-scale LL problems (such as 20-way, 30-way and 40-way), thanks to the regularization term,
BVFSM reveals significant advantages over other methods.

\subsection{Generative Adversarial Networks}\label{sec:gan}

Next we perform intuitive experiments on GAN 
to illustrate the application of BVFSM for pessimistic BLO. 
GAN is a network used for unsupervised machine learning to build a min-max game between two players, i.e., the generator $\mathtt{Gen}(\x;\cdot)$ with the network parameter $\x$, 
and the discriminator $\mathtt{Dis}(\y;\cdot)$ with the network parameter $\y$.
We denote the standard Gaussian distribution as $\mathcal{N}(0,1)$ and the real data distribution as $p_{\mathtt{data}}$. 
The generator $\mathtt{Gen}$ tries to fool the discriminator $\mathtt{Dis}$ by producing data from random latent vector $\v \sim \mathcal{N}(0,1)$, 
while the discriminator $\mathtt{Dis}$ distinguishes between real data $\u \sim p_{\mathtt{data}}$ and generated data $\mathtt{Gen}(\x;\v)$ by outputting the probability that the samples are real.
The goal of GAN is to
$ 
	\textcolor{black}{\min_{\x}\max_{\y} }   \log(\mathtt{Dis}(\y;\u))+\log(1-\mathtt{Dis}(\y;\mathtt{Gen}(\x;\v))) 
$~\cite{goodfellow2014generative}.

However, this traditional modeling method regards $\mathtt{Dis}$ and $\mathtt{Gen}$ as equal status, and does not characterize the leader-follower relationship that $\mathtt{Gen}$ first generates data and after that $\mathtt{Dis}$ judges the data, which can be modeled by Stackelberg game and captured through BLO problems.
Specifically, from this perspective,
generative adversarial learning corresponds to a pessimistic BLO problem:
the UL objective $F$ of $\mathtt{Gen}$ tries to
generate adversarial samples, 
and the LL objective $f$ of $\mathtt{Dis}$ aims to learn a robust classifier which can maximize the UL objective.
Therefore, we reformulate GAN into the form in Eq.~\eqref{eq:PBO} discussed in Section~\ref{sec:pessimistic} to model this relationship, and call it bi-level GAN. 
Concretely, for the follower $\mathtt{Dis}(\y;\cdot)$, 
the LL objective is consistent with the original GAN:
\begin{equation*}
    f(\x,\y)=\log(\mathtt{Dis}(\y;\u))+\log(1-\mathtt{Dis}(\y;\mathtt{Gen}(\x;\v))).
\end{equation*}
As for UL,
considering the antagonistic goals of $\mathtt{Gen}$ and $\mathtt{Dis}$,
we model the UL problem as 
%
\begin{eqnarray}\nonumber
	F(\x,\y)=\log(\mathtt{Dis}(\y;\mathtt{Gen}(\x;\v))).
\end{eqnarray}

Note that the popular WGAN~\cite{arjovsky2017wasserstein} is a variation of the most classic vanilla GAN~\cite{goodfellow2014generative} (or simply GAN),
while unrolled GAN~\cite{metz2016unrolled} and the GAN generated by our BVFSM belong to bi-level GAN,
modeling from a BLO perspective. 
Our method has the following two advantages over other types of GAN.
On the one hand, compared with vanilla GAN and WGAN,
bi-level GAN can effectively model the leader-follower relationship between the generator and discriminator, rather than regard them as the same status.
On the other hand, in bi-level GAN, our method considers the situation that the objective has multiple solutions, from the viewpoint of pessimistic BLO, with theoretical convergence guarantee, which unrolled GAN cannot achieve.


In this experiment we train a simple GAN architecture on a 2D mixture of 8 Gaussians arranged on a circle. 
The dataset is sampled from a mixture of 8 Gaussians with standard deviation 0.02. 
The 8 points are the means of data
and are equally spaced around a circle with radius 2. 
The generator consists of a fully-connected network with 2 hidden layers of size 128 with ReLU activation followed by a linear projection to 2 dimensions. 
The discriminator first scales its input down by a factor of 4 (to roughly scale it to $(-1,1)$),
and is followed by a 1-layer fully-connected network from ReLU activation to a linear layer of size 1 to act as the logit.
As shown in Figure~\ref{fig:7}, we present a visual comparison of sample generation among GAN, WGAN, unrolled GAN, and our method.
It can be seen that vanilla GAN can capture only one distribution 
rather than all Gaussian distributions at a time, 
because it ignores the leader-follower structure. 
WGAN benefits from the improvement of distance function and uses one distribution to approximate all Gaussian distributions at the same time,
but it fails to display satisfying performance. 
Unrolled GAN shows the ability to capture all distributions simultaneously thanks to the leader-follower modeling by BLO, but it lacks further details of the distribution. 
However, the desirable treatment of non-convex problems by BVFSM brings about its ability to fit all distributions well with details.
In addition, we show the KL divergence between the generated and target image in Table~\ref{table:KL}. 
It can be seen that the traditional alternately optimized GAN and WGAN yield larger KL divergence, 
while unrolled GAN and our method, which consider GAN as a BLO model, 
produce smaller KL divergence, 
and our method further achieves the best result.

\textcolor{black}{
Figure~\ref{fig:ada} further validates the performance of BVFSM to be adaptive in large-scale GAN on real datasets. 
Specifically, we add BVFSM as a training strategy based on StyleGAN2~\cite{karras2019style} on the AFHQ dataset. 
It can be seen that our approach is effective in improving the generation quality and performance metrics Inception Score (IS) and Frechet Inception Distance (FID).
}

\begin{table}[tbp]
	\begin{center}  \footnotesize
	\caption{KL divergence by various GAN. 
	The divergence between the generated and target distribution by BVFSM is the smallest.} 
	\label{table:KL}
		\vspace {-0.3cm}
	\begin{tabular}{|c|c|c|c|c|}
			\hline
			&GAN&WGAN&Unrolled GAN&BVFSM\\
			\hline
			KL  Divergence&2.56&2.48&0.26&\textbf{0.15}\\
			\hline
		\end{tabular}
	\end{center}
\end{table}

\begin{figure} [tbp]
	\centering  
\color{black}{
	\subfigure[
		\begin{tabular}{c}
			StyleGAN2 \\ 
			(IS, FID)=(11.69, 5.93)
		\end{tabular}
		]{ 
			\includegraphics[height=4cm,width=4cm]{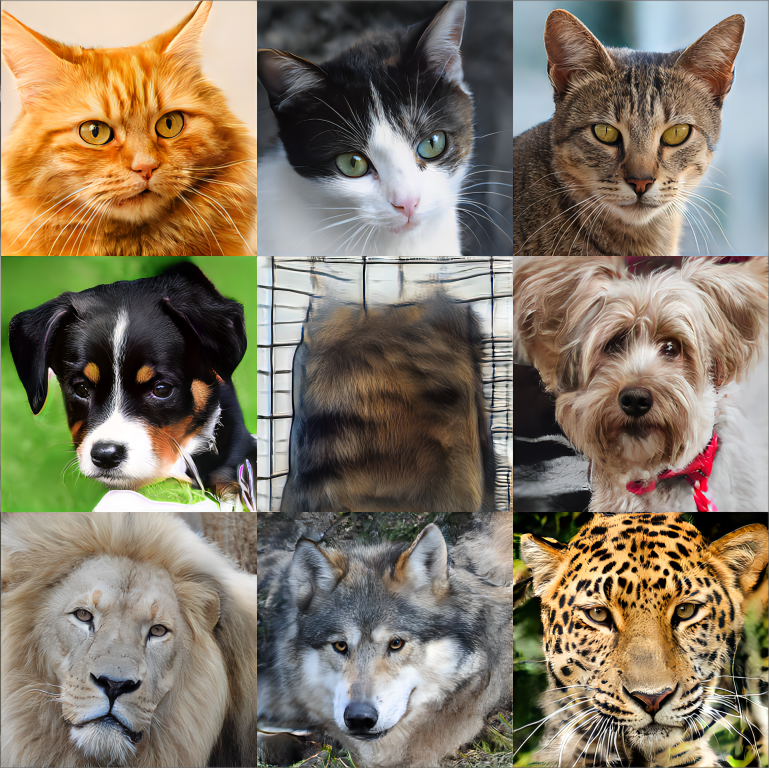}   }
	\subfigure[
		\begin{tabular}{c}
			StyleGAN2+BVFSM \\ 
			(IS, FID)=(11.94, 6.08)
		\end{tabular}
		]{ 
			\includegraphics[height=4cm,width=4cm]{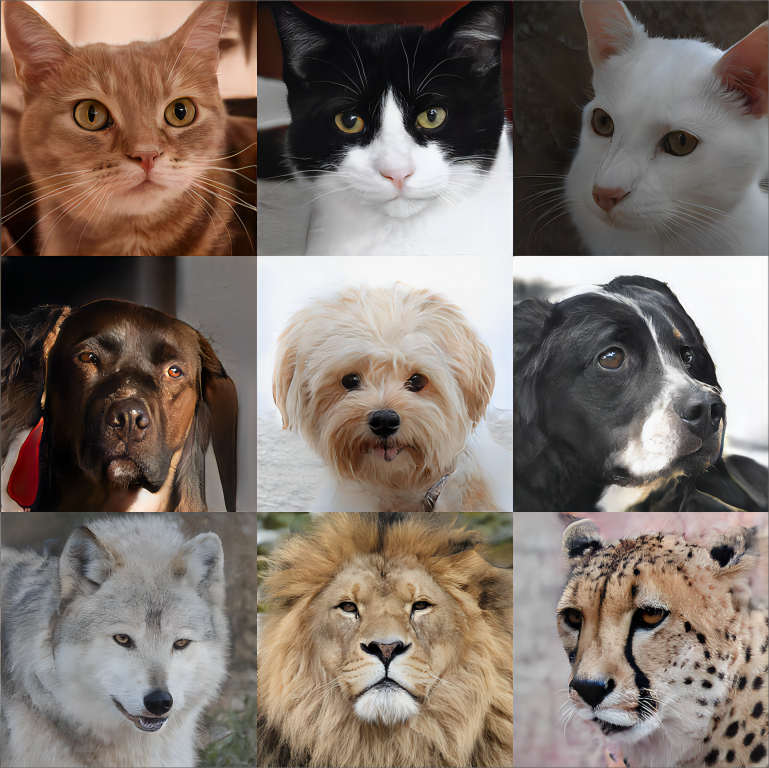}  }
\vspace {-0.3cm}	
	\caption{Visualization results and performance metrics by StyleGAN2 and by adding BVFSM to StyleGAN2 on the AFHQ dataset.}  
	\label{fig:ada} 
}
\end{figure}

\section{Conclusions}
In this paper, we propose a novel bi-level algorithm BVFSM 
\textcolor{black}{to provide an accessible path for large-scale problems with high dimensions from complex real-world tasks.
With the help of value-function which breaks the traditional mindset in gradient-based methods, BVFSM can} 
remove the LLC condition required by earlier works, and improve the efficiency of gradient-based methods,
to overcome the bottleneck caused by high-dimensional non-convex LL problems.
By transforming the regularized LL problem into UL objective by the value-function-based sequential minimization method,
we obtain a sequence of single-level unconstrained differentiable problems to approximate the original problem.
We prove the \textcolor{black}{asymptotic} convergence without LLC, 
and present our numerical superiority through complexity analysis and numerical evaluations for a variety of applications.
We also extend our method to BLO problems with constraints, and pessimistic BLO problems.

\ifCLASSOPTIONcompsoc
  \section*{Acknowledgments}
\else
  \section*{Acknowledgment}
\fi


This work is partially supported by the National Natural Science Foundation of China (Nos. U22B2052, 61922019, 12222106), 
the National Key R\&D Program of China (2020YFB1313503, 2022YFA1004101), 
Shenzhen Science and Technology Program (No. RCYX20200714114700072), the Guangdong Basic and Applied Basic Research Foundation (No. 2022B1515020082),
and Pacific Institute for the Mathematical Sciences (PIMS).

\ifCLASSOPTIONcaptionsoff
  \newpage
\fi



%
\bibliographystyle{IEEEtran}
\bibliography{output}

%

\begin{IEEEbiography}[{\includegraphics[width=1in,height=1.25in,clip,keepaspectratio]{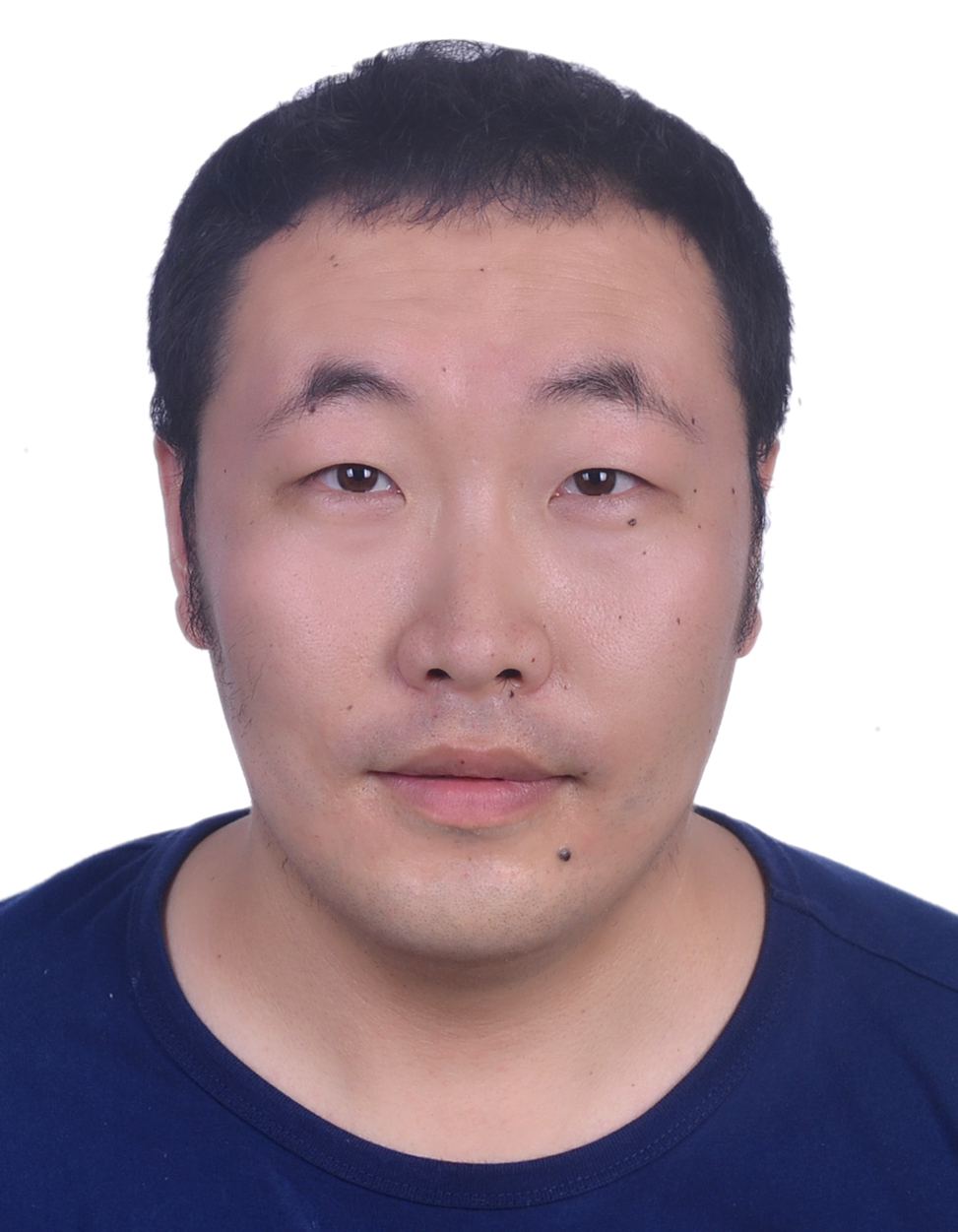}}]{Risheng Liu}
	 received the B.Sc.
	 and Ph.D. degrees in mathematics from Dalian
	 University of Technology in 2007 and 2012,
	 respectively. He was a Visiting Scholar with the
	 Robotics Institute, Carnegie Mellon University,
	 from 2010 to 2012. He served as a Hong Kong
	 Scholar Research Fellow at the Hong Kong Polytechnic University from 2016 to 2017. He is currently a Professor with the DUT-RU International
	 School of Information Science \& Engineering,
	 Dalian University of Technology. His research interests include machine learning, optimization, computer vision, and multimedia.
	 He is a member of the ACM, and was a co-recipient of the IEEE ICME Best
	 Student Paper Award in 2014 and 2015. Two papers were also selected as a
	 Finalist of the Best Paper Award in ICME~2017.
\end{IEEEbiography}

\begin{IEEEbiography}[{\includegraphics[width=1in,clip,keepaspectratio]{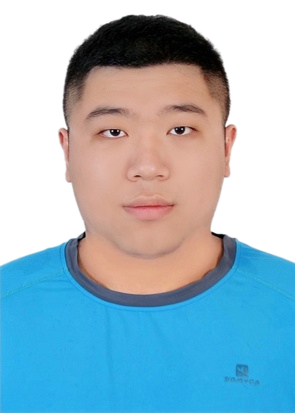}}]{Xuan Liu}
	received the B.Sc. degree in  mathematics from Dalian University of Technology in 2020.  He is currently an M.Phil. student in the Department of Software Engineering at Dalian University of Technology. His research interests include computer vision, machine learning, and control and optimization.
\end{IEEEbiography}

\begin{IEEEbiography}[{\includegraphics[width=1in,height=1.25in,clip,keepaspectratio]{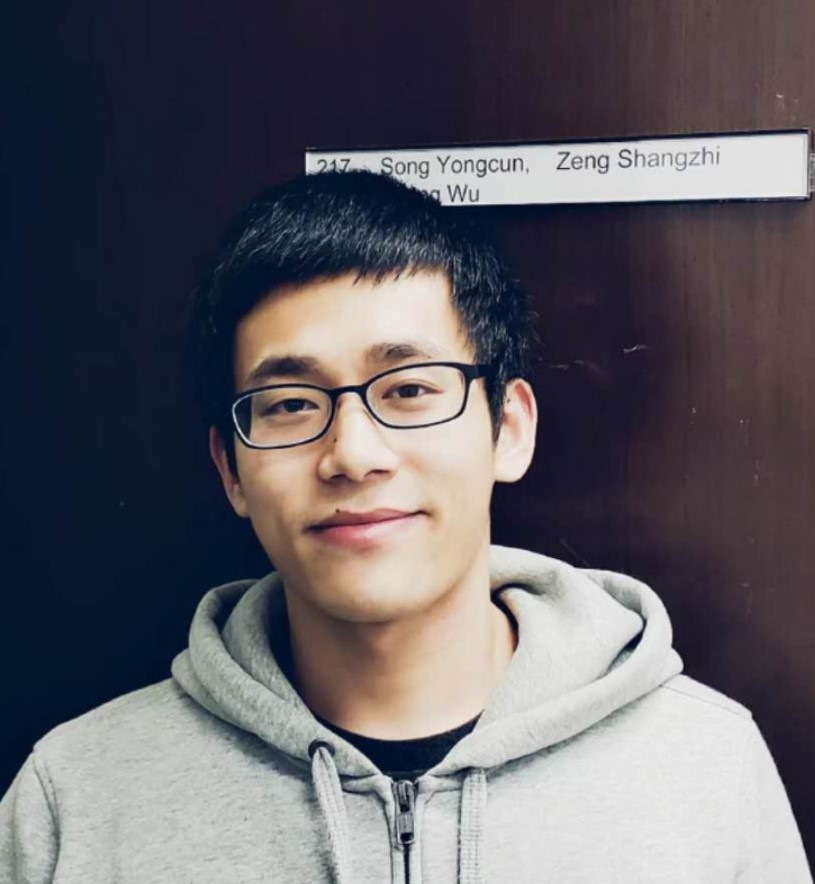}}]{Shangzhi Zeng}
	received the B.Sc. degree in
	Mathematics and Applied Mathematics from
	Wuhan University in 2015, the M.Phil. degree
	from Hong Kong Baptist University in 2017, and
	the Ph.D. degree from the University of Hong
	Kong in 2021. He is currently a PIMS postdoctoral
	fellow in the Department of Mathematics and
	Statistics at University of Victoria. His current
	research interests include variational analysis
	and bilevel optimization.
\end{IEEEbiography}

\begin{IEEEbiography}[{\includegraphics[width=1in,height=1.32in,clip,keepaspectratio]{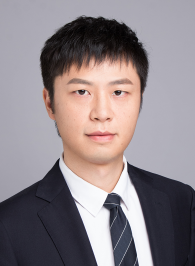}}]{Jin Zhang}
	received the B.A. degree in journalism
	and the M.Phil. degree in mathematics and operational
	research and cybernetics from Dalian University of Technology in 2007 and 2010, respectively,
	and the Ph.D. degree in applied mathematics from
	University of Victoria, Canada, in 2015. After
	working with Hong Kong Baptist University for
	three years, he joined Southern University of
	Science and Technology as a tenure-track Assistant
	Professor with the Department of Mathematics
	and promoted to an Associate Professor in 2022. His
	broad research area is comprised of optimization,
	variational analysis and their applications in economics, engineering, and
	data science. 
\end{IEEEbiography}

\begin{IEEEbiography}[{\includegraphics[width=1in,clip,keepaspectratio]{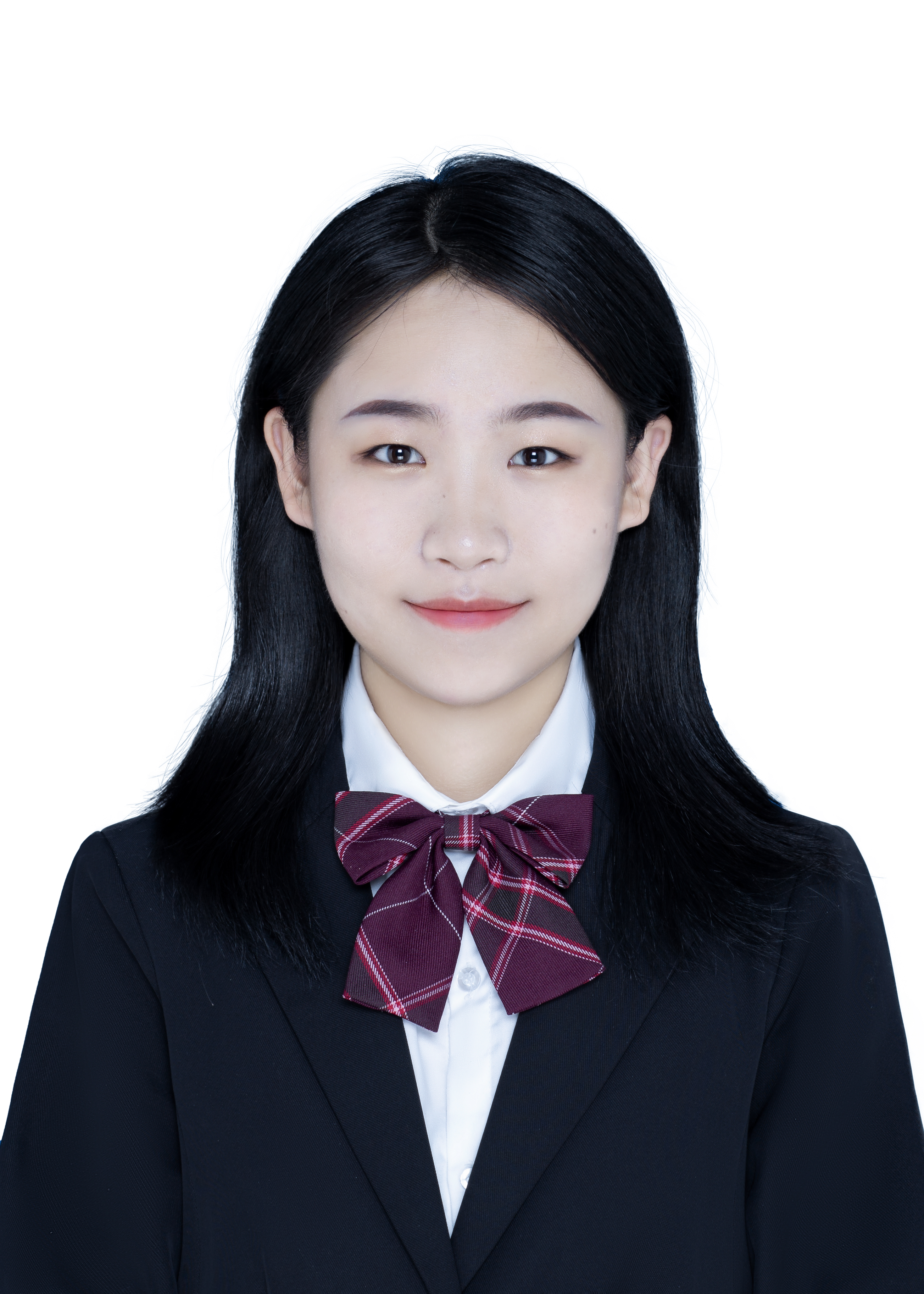}}]{Yixuan Zhang}
	received the B.Sc. degree in Mathematics and Applied Mathematics from Beijing Normal University in 2020,
	and the M.Phil. degree from Southern University of Science and Technology in 2022.
	She is currently a Ph.D. student in the Department of Applied Mathematics at the Hong Kong
	Polytechnic University.
	Her current research interests include optimization and machine learning.
	
\end{IEEEbiography}







\newpage

	
	\appendices

	\section{Proofs of Lemmas in Section~\ref{sec:theoretical investigations}} \label{sec:appendix lemmas}
	
	\subsection{Lemma 1}
		\textit{\textcolor{black}{Let $\{\sigma_k\}$ in $\PP_{k}(\omega) \! = \! \PP_{\sigma_k}(\omega)$ be a positive sequence such that $\lim_{k \rightarrow \infty}\sigma_k = 0$.}
		Additionally assume that $\lim_{k \rightarrow \infty}\rho( - \eta_k \; ; \sigma_k^{(1)}) = 0$ when $\rho$ is a modified barrier function.
		Then we have 
		\begin{enumerate}
			\item $\PP_{k}(\omega)$ is continuous, differentiable and non-decreasing, and satisfies $\PP_{k}(\omega) \geq 0$. 
			\item For any $\omega \leq 0 $, $\lim_{k \rightarrow \infty} \PP_k(\omega) = 0$.
			\item For any sequence $\{\omega_k\}$, $\lim_{k \rightarrow \infty} \PP_k(\omega_k) < + \infty$ implies that $\limsup_{k \rightarrow \infty} \omega_k \leq 0 $.
		\end{enumerate}}
		
		%
		%
	\begin{proof}
		From the definitions of 
		penalty and barrier functions (see, e.g., Definition~\ref{defi:barrier}), the statement (1) follows immediately.	
		
		When $\rho$ is a penalty function, $\lim_{\sigma \rightarrow 0} \rho(\omega; \sigma)$ is equal to $ + \infty$ for $\omega > 0$ and $0$ for $\omega \leq 0$. 
		Hence, as $k \rightarrow \infty$, $\sigma_k \rightarrow 0$, and we have $\PP_k(\omega) \rightarrow 0$, for $\omega \leq 0 $. 
		For any sequence $\{\omega_k\}$, if $\limsup_{k \rightarrow \infty} \omega_k > 0 $, there exists a subsequence $\{\omega_t\}$ of $\{\omega_k\}$ and $\varepsilon > 0$ such that $\omega_t \ge \varepsilon$ for all $t$. Then, it follows from the monotonicity of $\rho$ that $\lim_{t \rightarrow \infty} \PP_t(\omega_t)  = \lim_{t \rightarrow \infty} \rho(\omega_t; \sigma_t) \ge  \lim_{t \rightarrow \infty} \rho(\varepsilon; \sigma_t) = + \infty$. Thus,
		$\lim_{k \rightarrow \infty} \PP_k(\omega_k) < + \infty$ implies that $\limsup_{k \rightarrow \infty} \omega_k \leq 0 $.
		
		If $\rho$ is a modified barrier function, since $\rho$ is non-creasing, we have  $0 \leq \rho(\omega - \eta_k ; \sigma_k^{(1)}) \leq \rho(- \eta_k ; \sigma_k^{(1)})$
		when $\omega \leq 0$. 
		The assumption $\lim_{k \rightarrow \infty}\rho( - \eta_k ; \sigma_k^{(1)}) = 0$ implies $\rho(\omega - \eta_k ; \sigma_k^{(1)}) \rightarrow 0$
		when $\omega \leq 0$.
		Hence, as $k \rightarrow \infty$, we have $\sigma_k^{(1)} \rightarrow 0$, $\eta_k \rightarrow 0$,
		and $\PP_k(\omega) \rightarrow 0$, for $\omega \leq 0 $.  For any sequence $\{\omega_k\}$, if $\lim_{k \rightarrow \infty} \PP_k(\omega_k) < + \infty$, then it follows from the definition of modified barrier function that $\omega_k \le \eta_k$ and (3) follows immediately from $\eta_k \rightarrow 0$.
		
	\end{proof}

	\subsection{Lemma 2}	
		\textit{Let $\{ (\mu_k, \sigma_{k}) \}$ be a positive sequence such that $ (\mu_k, \sigma_{k}) \rightarrow 0$, also satisfying the same setting as in Lemma~\ref{lem1}.
		Then for any sequence $\{\x_k\}$ converging to $\bar{\x}$,
		\[
		\limsup_{k \rightarrow \infty} f_{k}^*(\x_k) \le f^*(\bar{\x}).
		\]}
	\begin{proof}
		Given any $\epsilon > 0$, there exists \textcolor{black}{$\bar{\y} \in \mathbb{R}^n$} such that $f(\bar{\x},\bar{\y}) < f^*(\bar{\x}) + 1/2\ \epsilon$, and $h(\bar{\x},\bar{\y}) \leq 0$. 
		If $h(\bar{\x},\bar{\y}) =~0$, by Assumption~\ref{assumption 1}.(4), 
		the minimum of $h$ w.r.t. $\y$ in any neighbourhood of $\bar{\y}$ is smaller than 0,
		so we can find a $\hat{\y}$ close enough to $\bar{\y}$ such that $f(\bar{\x},\hat{\y}) < f^*(\bar{\x}) + \epsilon$, and $h(\bar{\x},\hat{\y}) \leq -\delta$ for some $\delta > 0$.
		If $h(\bar{\x},\bar{\y}) < 0$, such $\hat{\y}$ exists~obviously.
		
		As $\{\x_k\}$ converges to $\bar{\x}$, 
		$h(\bar{\x}, \hat{\y}) \leq -\delta$ combining with the continuity of $h(\x,\y)$ implies the existence of $K_1 > 0$ such that $h(\x_k, \hat{\y}) \leq -\delta/2$ for all $k \ge K_1$. Since the barrier function is non-decreasing, it follows that $\PP_{B,k}(h(\x_k,\hat{\y})) \le \rho( -\delta/2; \sigma_k^{(1)})$. Then $\lim_{k \rightarrow \infty} \rho( -\delta/2; \sigma_k^{(1)}) = 0$ yields 
		\[
		\lim_{k \rightarrow \infty} \PP_{B,k} \! \left( h(\x_k,\hat{\y}) \right) = 0.
		\]
		Next, as $\{\x_k\}$ converges to $\bar{\x}$, 
		it follows from the continuity of $f(\x,\y)$ and $\mu_k \rightarrow 0$
		that there exists $K_2 \ge K_1$, such that for any $k \ge K_2$,
		\[
		\begin{aligned}
			f_{k}^*(\x_k) 
			\le &f(\x_k,\hat{\y}) + \PP_{B,k} \! \left( h(\x_k,\hat{\y}) \right) + \frac{\mu_{k}}{2}\|\hat{\y}\|^2  \\
			\le &f(\bar{\x},\hat{\y}) + \epsilon \\
			\le &f^*(\bar{\x}) + 2\epsilon.
		\end{aligned}
		\]
		By letting $k \rightarrow \infty$, we obtain
		\[
		\limsup_{k \rightarrow \infty} f_{k}^*(\x_k) \le f^*(\bar{\x}) + 2 \epsilon,
		\]
		and taking $\epsilon \rightarrow 0$ to the above yields the conclusion.
	\end{proof}

	\subsection{Lemma 3}
		\textit{\textcolor{black}{	
			Let $\{(\mu_k, \theta_k, \sigma_k)\}$ be a positive sequence such that $\lim_{k \rightarrow \infty}(\mu_k, \theta_k, \sigma_k) = 0$,
			and
			satisfy the same setting as in Lemma~\ref{lem1}.
			Given $\bar{\x}\in \X$, then for any sequence $\{ \x_k\}$ converging to $\bar{\x}$, we have
		}
		\begin{equation*}
			\liminf\limits_{k\rightarrow\infty} \varphi_{k}(\x_k)\geq \varphi(\bar{\x}).
		\end{equation*} }
	\begin{proof}
		We assume by contradiction that there exists $\bar{\x} \in \X$ and a sequence $\{ \x_k \}$,
		satisfying $\x_k \to\bar{\x}$ as $k \to \infty$ with the following inequality
		\begin{equation*}
			\lim_{k \rightarrow \infty} \varphi_{k}(\x_k) < \varphi(\bar{\x}).
		\end{equation*}
		Then, there exist $\epsilon > 0$ and a sequence $\{\y_k\}$ satisfying
		\begin{equation}
			\begin{aligned}
				&F(\x_k, \y_k) + \PP_{H,k} \! \left( H(\x_k, \y_k) \right)+ \PP_{h,k} \! \left( h(\x_k, \y_k) \right)\\ 
				& + \PP_{f,k} \! \left( f(\x_k, \y_k) - f_{k}^*(\x_k)   \right)  
				+ \frac{\theta_k}{2} \| \y_k \|^2
				< \varphi(\bar{\x}) - \epsilon.	 
				\label{lem3_eq1.5}
			\end{aligned}
		\end{equation} 
		Since $F(\x,\y)$ is level-bounded in $\y$ locally uniformly in $\bar{\x}$, we have that $\{ \y_k \}$ is bounded. 
		Take a subsequence $\{\y_t\}$ of $\{\y_k\}$ which satisfies there exists $\hat{\y}$, 
		such that $\y_t \rightarrow \hat{\y}$. 
		
		The inequality Eq.~\eqref{lem3_eq1.5} yields that 
		\[
		\begin{aligned}
			\PP_{f,t} \! \left( f(\x_t, \y_t) - f_{t}^*(\x_t)   \right) < \varphi(\bar{\x}) - \epsilon -  F(\x_t, \y_t).
		\end{aligned}		
		\]
		Taking $t \rightarrow \infty$  
		then $\lim_{t\rightarrow\infty}  \PP_{f,t} \! \left( f(\x_t, \y_t) - f_{t}^*(\x_t) \right) < + \infty$.
		From Lemma \ref{lem1}, we have $\limsup_{t \rightarrow \infty} \left\{ f(\x_t, \y_t) - f_{t}^*(\x_t) \right\} \le 0$, and hence by the continuity of $f$,
		\[
		\lim_{t \rightarrow \infty} f(\x_t, \y_t) \le \liminf_{t \rightarrow \infty}  f_{t}^*(\x_t).
		\]
		Then, by the continuity of $f$ and Lemma \ref{lem2}, we have
		\[
		f(\bar{\x},\hat{\y}) =	\lim_{t \rightarrow \infty} f(\x_t, \y_t) \le \limsup_{t \rightarrow \infty} f_{t}^*(\x_t) \le f^*(\bar{\x}).
		\]
		By using similar arguments and the continuity of $h$ and $H$, one can show 
		$
		h(\bar{\x},\hat{\y}) \le 0$ and $H (\bar{\x},\hat{\y}) \leq 0.
		$
		Thus, we have
		$
		\hat{\y} \in S(\bar{\x}),
		$
		and	$\hat{\y}$ is a feasible point to problem Eq.~\eqref{eq:constrain varphi} with $\x = \bar{\x}$. 
		Then Eq.~\eqref{lem3_eq1.5} yields
		\begin{equation*}
			\varphi(\bar{\x}) \le F(\bar{\x},\hat{\y}) \le \limsup_{k \rightarrow \infty} F(\x_k, \y_k) \le \varphi(\bar{\x}) - \epsilon,
		\end{equation*}
		which implies a contradiction.	
		Thus we get the conclusion. 
	\end{proof}

	\subsection{Lemma 4}
		\textit{\textcolor{black}{
			Let $\{(\mu_k, \theta_k, \sigma_k)\}$ be a positive sequence such that $\lim_{k \rightarrow \infty}(\mu_k, \theta_k, \sigma_k) = 0$,
			and
			satisfy the same setting as in Lemma~\ref{lem1}.
			Then for any $\x \in \X$, }
		\[
		\limsup_{k \rightarrow \infty} \varphi_k(\x) \le \varphi(\x).
		\]}
	\begin{proof}
		Given any $\bar{\x} \in \X$, for any $\epsilon > 0$, 
		there exists \textcolor{black}{$\bar{\y} \in \mathbb{R}^n$} satisfying $f(\bar{\x},\bar{\y}) \le f^*(\bar{\x})$, $h(\bar{\x}, \bar{\y}) \leq 0$, $H(\bar{\x},\bar{\y}) \leq 0$, 
		and $F(\bar{\x},\bar{\y}) \le \varphi(\bar{\x}) + \epsilon$. 
		
		By the definition of $\varphi_k$, we have
		\begin{equation}\label{lem3_eq1}
			\begin{aligned}
				\varphi_k(\bar{\x} ) 
				\le & F(\bar{\x} ,\bar{\y}) + \PP_{H,k} \! \left( H(\bar{\x} ,\bar{\y}) \right)
				+ \PP_{h,k} \! \left( h(\bar{\x} ,\bar{\y}) \right) \\ 
				& +  \PP_{f,k} \! \left( f(\bar{\x} ,\bar{\y}) - f_{k}^*(\bar{\x} )   \right)
				+ \frac{\theta_k}{2} \| \bar{\y} \|^2 .
			\end{aligned}
		\end{equation}
		From Lemma~\ref{lem1}, as $k \rightarrow \infty$, 
		we have
		$\PP_{H,k}\left( H(\bar{\x}, \bar{\y}) \right) \rightarrow 0$,
		and $\PP_{h,k} \! \left( h(\bar{\x} ,\bar{\y}) \right) \rightarrow 0$.
		As we choose the standard barrier function for $\PP_{B,k}$ in the definition of $f^*_{k}(\x)$,
		thus $f(\x,\y) +  \PP_{B,k} \! \left( h(\x,\y) \right) + \frac{\mu_k}{2} \| \y \|^2$ is always larger than $f(\x,\y)$ for any $\x$ and $\y$ feasible to the LL problem,
		and hence $f_{k}^*(\bar{\x}) \ge f^*(\bar{\x})$.
		Then we have $f(\bar{\x},\bar{\y}) - f_{k}^*(\bar{\x}) \le f(\bar{\x},\bar{\y}) - f^*(\bar{\x}) \le 0$. 
		Then it follows from the monotonicity of $\PP_{f,k} $ and Lemma~\ref{lem1} that $ \PP_{f,k} \! \left( f(\bar{\x} ,\bar{\y}) - f_{k}^*(\bar{\x} )   \right)  \rightarrow 0$.

		Therefore, as $\theta_k \rightarrow 0$, 
		by taking $k \rightarrow \infty$ in inequality Eq.~\eqref{lem3_eq1}, we have
		\[
		\limsup_{k \rightarrow \infty}\varphi_k(\bar{\x}) \le \varphi(\bar{\x}) + \epsilon.
		\]
		Then, we get the conclusion by letting $\epsilon \rightarrow 0$.
	\end{proof}

	\section{Closed-form Solution in Section~\ref{sec:8.1}}\label{sec:appendix closed-form solution}
	
{\color{black}
	
	Here we provide the detailed derivation of the closed-form solutions for the numerical examples in Section~\ref{sec:8.1}.
	
	\subsection{Optimistic BLO in Section~\ref{sec:toy optimistic}}
	
	We consider the following optimistic BLO:
	\begin{equation*}
		\begin{aligned}
			& \min_{ \x \in \mathbb{R}, \y \in \mathbb{R}^n} \| \x-a\|^2+\| \y-a-\c\|^2 \\
			& \text{\ s.t.\ }\;  [\y]_i \in \underset{ [\y]_{i} \in \mathbb{R}}{\mathrm{argmin}}\; \sin( \x+ [\y]_i-[\c]_i), \forall \ i,
		\end{aligned}
	\end{equation*}
	where $[\y]_i$ denotes the $i$-th component of $\y$, 
	while $a \in \mathbb{R}$ and $\c\in \mathbb{R}^n$ are adjustable parameters. 
	Note that here in the numerical example, $\x \in \mathbb{R}$ is a one-dimensional real number,
	and we still use the bold letter to represent the scalar in order to maintain the consistency of the context.
	The solution of such problem is
	$$
	\x^*=\frac{(1-n) a+n C}{1+n}, \ \text { and } \ [\y^*]_{i}=C+[\c]_i-\x, \forall \ i,
	$$
	where
	$$
	C=\underset{{k}}{\operatorname{argmin}}\left\{\|C_k-2 a\| : C_k=-\frac{\pi}{2}+2 k \pi, k \in \mathbb{Z}\right\} ,
	$$
	and the optimal value is $F^{*}=\frac{n(C-2 a)^{2}}{1+n}$.	
	Derivation of the optimal solution and optimal value is as follows.

	From the LL problem $$[\y]_i \in \underset{ [\y]_{i} \in \mathbb{R}}{\mathrm{argmin}}\; \sin( \x+ [\y]_i-[\c]_i), \forall \ i,$$
	we have $[\y]_i  \in \left\{ -\x + [\c]_i   -\frac{\pi}{2} + 2 k \pi: k \in \mathbb{Z} \right\}, \forall \ i$.
	Then the problem is to find the $\x$ and $k$ to minimize
	\[
	\begin{aligned}
		F &= \| \x-a\|^2+\| \y-a-\c\|^2 \\
		&= (\x - a)^2 + \sum_{i=1}^n ( [\y]_i - a - [\c]_i)^2 \\
		&= (\x - a)^2 + n (-\x - \frac{\pi}{2} + 2 k \pi - a)^2 \\
		&= (n+1) \x^2 + 2\left[ n \left(a+ \frac{\pi}{2} - 2k\pi \right) -a \right] \x \\
		& \quad + a^2 +n\left(a + \frac{\pi}{2} - 2k\pi\right)^2.
	\end{aligned}
	\]
	
	For a given $k$, denote $C_k = - \frac{\pi}{2} + 2 k \pi$,
	then 
	\[
	F_k = (n+1) \x^2 + 2\left[ n(a - C_k) -a \right] \x + a^2 +n\left(a -C_k \right)^2,
	\]
	which is strongly convex and quadratic w.r.t. $\x$.
	Thus, it is easy to obtain
	$\x_k^* = \frac{(1-n)a + n C_k}{1+n}$, and 
	\[
	\begin{aligned}
		F_k^* &= a^2 + n(a-C_k)^2 - \frac{[n(a-C_k) - a]^2}{n+1} \\
		&= \frac{n}{n+1} (C_k - 2a)^2.
	\end{aligned}	
	\]
	
	Hence, by denoting 
	$$
	C=\underset{{k}}{\operatorname{argmin}}\left\{\|C_k-2 a\| : C_k=-\frac{\pi}{2}+2 k \pi, k \in \mathbb{Z}\right\} ,
	$$
	we have the optimal value $F^* = \frac{n}{n+1} (C - 2a)^2$,
	and the corresponding solution
	\[
	\x^* = \frac{(1-n)a + n C}{1+n} , \ \text { and } \ [\y^*]_{i}=C+[\c]_i-\x^*, \forall \ i.
	\]

	\subsection{BLO with constraints in Section~\ref{sec:toy constraint}}
	
	We consider the following constrained BLO problem with non-convex LL:
	\begin{equation*}\small
		\begin{aligned}
			& \min _{\x \in \mathbb{R}, \y \in \mathbb{R}^{n}} \|\x-a\|^{2}+\left\|\y-a\right\|^{2} \\ 
			& \text { s.t. } [\y]_{i} \in \mathop{\arg\min}_{[\y]_i \in \mathbb{R}} \Big\{ \sin \left(\x+[\y]_i- [\c]_{i}\right) : \x+[\y]_i \in[0,1] \Big\},  \forall \ i,\\ 
		\end{aligned}
	\end{equation*}
	where $a \in \mathbb{R}$ and $\c \in \mathbb{R}^{n}$ are any fixed given constant and vector satisfying $ [\c ]_{i} \in[0,1] \text{ for any } i=1, \cdots, n$. The optimal solution is
	$$
	\x^*=\frac{1-n}{1+n} a, \text { and } [\y^*]_{i}=-\x^*, \forall \ i, 
	$$
	and the optimal value is $F^{*}=\frac{4 n}{1+n} a^{2}$.
	Derivation of the optimal solution and optimal value is as follows.

	From $\x+[\y]_i \in[0,1],$ along with $[\c ]_{i} \in[0,1],  \forall \ i,$
	it is easy to obtain
	\[
		\x+[\y]_i- [\c]_{i} \in \big[ \ [\c]_i, 1-[\c]_i \ \big] \subset [-1,1] \subset \left[ - \frac{\pi}{2}, \frac{\pi}{2} \right].
	\]
	Hence, $\sin \left(\x+[\y]_i- [\c]_{i}\right)$ is increasing w.r.t. $[\y]_i$ under the constraints for all $i$.
	Thus, from the LL problem we have
	$\x+[\y]_i - [\c]_{i} = - [\c]_{i}$,
	i.e., $[\y]_i = -\x, \forall \ i.$ 
	Then the problem is to find the $\x$ to minimize
	\[
	\begin{aligned}
		F &= \|\x-a\|^{2}+\left\|\y-a\right\|^{2} \\
		&= (\x-a)^2 + n(-\x-a)^2 \\
		&= (n+1)\x^2 + 2a(n-1)\x + (n+1)a^2.
	\end{aligned}
	\]
	Therefore, the optimal solution $\x^*=\frac{1-n}{1+n} a$,
	$[\y^*]_{i}=-\x^*, \forall \ i$,
	and by substituting~$\x^*$ into $F$,
	we have the optimal value~$F^{*}=\frac{4 n}{1+n} a^{2}$.

	\subsection{Pessimistic BLO in Section~\ref{sec:toy pessimistic}}
	
	For the pessimistic BLO we use the following example:
	\begin{equation}
		\label{}
		\begin{aligned}
			& \min_{ \x \in \mathbb{R}} \max_{\y \in \mathbb{R}^n} \| \x-a\|^2 - \| \y-a-\c\|^2 \\
			& \text{\ s.t.\ }\;  [\y]_i \in \underset{ [\y]_{i} \in \mathbb{R}}{\mathrm{argmin}}\; \sin( \x+ [\y]_i-[\c]_i), \forall \ i,
		\end{aligned}
	\end{equation}
	where $[\y]_i$ denotes the $i$-th component of $\y$, 
	while $a \in \mathbb{R}$ and~$\c\in \mathbb{R}^n$ are adjustable parameters. 	
	In our experiment, we set $a=2$ and $[\c]_i=2 \text{ for any }i = 1,2,\cdots,n$, and consider the 2-dimensional case (LL dimension $n=2$).
	The optimal solution to this problem is  
	$$(\x^*,\y^*)= \left(-2+\frac{\pi}{2}, 4 \pm \pi, 4 \pm \pi \right),$$ 
	and the optimal value is 
	$$F^* = \left(-4+\frac{\pi}{2} \right)^2 - 2\pi^2 = -\frac{7}{4} \pi^2 - 4 \pi +16.$$
	Derivation of the optimal solution and optimal value is as follows.

	From the LL problem $$[\y]_i \in \underset{ [\y]_{i} \in \mathbb{R}}{\mathrm{argmin}}\; \sin( \x+ [\y]_i-[\c]_i), \forall \ i,$$
	we have $[\y]_i  \in \left\{ -\x + [\c]_i   -\frac{\pi}{2} + 2 k \pi: k \in \mathbb{Z} \right\}, \forall \ i$.
	Then the problem is transferred to
	\[
	\min_{\x \in \mathbb{R}} \max_{k \in \mathbb{Z}} \ (\x-a)^2 - n \left[\x+a-\left( -\frac{\pi}{2} + 2k\pi \right) \right]^2.
	\]
	Denote $C_k = - \frac{\pi}{2} + 2 k \pi$.
	For a given $\x$, suppose $$\x+ a \in [C_{\widehat{k}} - \pi, C_{\widehat{k}} + \pi] \ (\widehat{k} \in \mathbb{Z}).$$
	Then to maximize $(\x-a)^2 - n \left[\x+a-\left( -\frac{\pi}{2} + 2k\pi \right) \right]^2$ w.r.t.~$k$ is to minimize $\left[\x+a-\left( -\frac{\pi}{2} + 2k\pi \right) \right]^2 = (\x + a - C_k)^2$ w.r.t.~$k$,
	and 
	\[
	\mathop{\arg\min}_k \ (\x + a - C_k)^2 = \widehat{k}.
	\]
	Thus, the problem is transformed into
	\[
		\min_{\x} \ (\x-a)^2 - n \left(\x + a - C_{\widehat{k}} \right)^2,
		\text{ if } \x+a \in [C_{\widehat{k}} - \pi, C_{\widehat{k}} + \pi],
	\]
	i.e.,
	\[
		\min_{\x \in \mathbb{R}} \varphi(\x)
	\]
	where
	\[
	\begin{aligned}
		\varphi(\x) := (\x-a)^2 &- n \left(\x + a - C_{\widehat{k}} \right)^2, \\
		&\text{ if } \x\in [-a+C_{\widehat{k}} - \pi, -a+ C_{\widehat{k}} + \pi] \ (\widehat{k} \in \mathbb{Z}).
	\end{aligned}
	\]
	It is easy to obtain that $\varphi(\x)$ is continuous on $\mathbb{R}$ (on interval endpoints $\varphi(\x) = (\x - a)^2 -n \pi^2$),
	and 
	\[
	\frac{1}{2} \varphi'(\x) = \x -a -n(\x+a- C_{\widehat{k}}),
	\]
	when $\x\in [-a+C_{\widehat{k}} - \pi, -a+ C_{\widehat{k}} + \pi] \ (\widehat{k} \in \mathbb{Z})$.

	Because we set $n=2$ and $a=2$,
	then if $\x\in [-2+C_{\widehat{k}} - \pi, -2+ C_{\widehat{k}} + \pi]$,
	\[
		\varphi(\x) := (\x-2)^2 - 2\left(\x + 2 - C_{\widehat{k}} \right)^2,
	\]
	\[
		\frac{1}{2} \varphi'(\x) = -\x-6 +2 C_{\widehat{k}}.
	\]
	Hence, 
	\[
		\frac{1}{2} \varphi'(\x) \in \left[ -4+C_{\widehat{k}} -\pi, -4+C_{\widehat{k}} +\pi \right].
	\]
	Therefore,
	\begin{enumerate}
		\item if $\widehat{k} \le 0$, then
		$$\frac{1}{2} \varphi'(\x) \le -4 + C_{\widehat{k}} +\pi \le -4+ \frac{\pi}{2} < 0, $$
		and $\varphi(\x)$ is decreasing.
		In this case, $C_{\widehat{k}} \le - \frac{\pi}{2}$, and $\x \le -2+ \frac{\pi}{2}$.
		
		\item if $\widehat{k} \ge 2$, then
		$$\frac{1}{2} \varphi'(\x) \ge -4 + C_{\widehat{k}} -\pi \ge -4+ \frac{7\pi}{2} > 0, $$
		and $\varphi(\x)$ is increasing.
		In this case, $C_{\widehat{k}} \ge \frac{7\pi}{2}$, and $\x~\ge~-2+ \frac{5\pi}{2}$.
		
		\item when $\x \in \left[ -2 + \frac{\pi}{2}, -2 + \frac{5\pi}{2}  \right]$,
		the corresponding $\widehat{k} = 1$, and
		\[
		\varphi(\x) := (\x-2)^2 - 2\left(\x + 2 - \frac{3\pi}{2} \right)^2,
		\]
		\[
		\frac{1}{2} \varphi'(\x) = -\x-6 + 3\pi.
		\]
		Thus, when $\x \in \left( -2 + \frac{\pi}{2}, 3\pi -6  \right)$, $ \varphi'(\x) > 0$, and $\varphi(\x)$ is increasing;
		when $\x \in \left( 3\pi-6, -2 + \frac{5\pi}{2}  \right)$, $ \varphi'(\x) < 0$, and $\varphi(\x)$ is decreasing.
	\end{enumerate}

	To sum up, $\varphi(\x)$ is increasing on $\left[-2 + \frac{\pi}{2}, 3\pi -6 \right]$ and $\left[-2 + \frac{5\pi}{2}, +\infty \right)$,
	and decreasing on $ \left(-\infty, -2 + \frac{\pi}{2} \right]$ and $\left[3\pi-6, -2 + \frac{5\pi}{2} \right] $.
	Therefore, the optimal $\x$ occurs at either $-2 + \frac{\pi}{2}$ or $-2 + \frac{5\pi}{2}$.
	Since at these two local minimizers,
	\[
	\begin{aligned}
		&\varphi\left(-2+\frac{\pi}{2}\right) = \left(-4+\frac{\pi}{2} \right)^2 - 2\pi^2 	\\
		< \ &\varphi \left(-2+\frac{5\pi}{2} \right) = \left(-4+\frac{5\pi}{2} \right)^2 - 2\pi^2 ,
	\end{aligned}	
	\]
	we have $\x^* = -2+\frac{\pi}{2}$,
	and $$F^* = \left(-4+\frac{\pi}{2} \right)^2 - 2\pi^2 = -\frac{7}{4} \pi^2 - 4 \pi +16.$$
	For $\x^* = -2+\frac{\pi}{2}$, 
	it means $\x^* \in \left[-2-\frac{\pi}{2}-\pi, -2-\frac{\pi}{2}+\pi \right]$
	and $\x^* \in \left[-2+\frac{3\pi}{2}-\pi, -2+\frac{\pi}{2}+\pi \right]$,
	so $C_{\widehat{k}} = - \frac{\pi}{2}$ or $\frac{3\pi}{2}$, i.e., $\widehat{k} = 0$ or $1$.
	Thus, 
	\[
		[\y^*]_i = -\x^* + [\c]_i - \frac{\pi}{2} = 4-\pi
	\]
	or
	\[
		[\y^*]_i = -\x^* + [\c]_i + \frac{3\pi}{2} = 4+\pi
	\]
	for $i = 1,2$.
	Hence, 
	$$(\x^*,\y^*)= \left(-2+\frac{\pi}{2}, 4 \pm \pi, 4 \pm \pi \right).$$

} 

%
%
%
%
%

\end{document}